\def\ps@headings{%
\def\@oddhead{\mbox{}\scriptsize\rightmark \hfil \thepage}%
\def\@evenhead{\scriptsize\thepage \hfil \leftmark\mbox{}}%
\def\@oddfoot{}%
\def\@evenfoot{}}
\newtheorem{theorem}{Theorem}
 \newtheorem{definition}{Definition}
 \newtheorem{lemma}{Lemma}
 \newtheorem{remark}{Remark}
 \newtheorem{collary}{Corollary}
\DeclareMathOperator*{\argmin}{arg\,min}
\newcommand{\mc}[1]{\mathcal{#1}}
\newcommand{\mb}[1]{\mathbb{#1}}
\newcommand{\V}[1]{{\bm{\mathbf{\MakeLowercase{#1}}}}} 
\begin{document}

\title{Low-tubal-rank Tensor Completion using Alternating Minimization}

\author{Xiao-Yang Liu, Shuchin Aeron, Vaneet Aggarwal, and Xiaodong Wang
\IEEEcompsocitemizethanks{\IEEEcompsocthanksitem  X.-Y.~Liu is with the Department of Computer Science and Engineering, Shanghai Jiao Tong University, email: yanglet@sjtu.edu.cn. He is also affiliated with the Department of Electrical Engineering, Columbia University.  
  S.~Aeron is with the Department of Electrical and Computer Engineering, Tufts University, email: shuchin@ece.tufts.edu. V.~Aggarwal is with the School of Industrial Engineering, Purdue University, email: vaneet@purdue.edu. X.~Wang are with the Department of Electrical Engineering, Columbia University, email: xw2008@columbia.edu.
  
  This paper was presented in part at the SPIE Conference on Defense and Security, Apr 2016.
}
}

\date{}
\maketitle

\IEEEdisplaynotcompsoctitleabstractindextext
\IEEEpeerreviewmaketitle


\begin{abstract}

  The low-tubal-rank tensor model has been recently proposed for real-world multidimensional data. In this paper, we study the low-tubal-rank tensor completion problem, i.e., to recover a third-order tensor by observing a subset of its elements selected uniformly at random. We propose a fast iterative algorithm, called {\em Tubal-Alt-Min}, that is inspired by a similar approach for low-rank matrix completion. The unknown low-tubal-rank tensor is represented as the product of two much smaller tensors with the low-tubal-rank property being automatically incorporated, and Tubal-Alt-Min alternates between estimating those two tensors using tensor least squares minimization. First, we note that tensor least squares minimization is different from its matrix counterpart and nontrivial as the circular convolution operator of the low-tubal-rank tensor model is intertwined with the sub-sampling operator.
  Second, the theoretical performance guarantee is challenging since Tubal-Alt-Min is iterative and nonconvex in nature. We prove that 1) Tubal-Alt-Min guarantees exponential convergence to the global optima, and 2) for an $n \times n \times k$ tensor with tubal-rank $r \ll n$, the required sampling complexity is $O(nr^2k \log^3 n)$ and the computational complexity is $O(n^2rk^2 \log^2 n)$. Third, on both synthetic data and real-world video data, evaluation results show that compared with tensor-nuclear norm minimization (TNN-ADMM), Tubal-Alt-Min improves the recovery error dramatically (by orders of magnitude). It is estimated that Tubal-Alt-Min converges at an exponential rate $10^{-0.4423 \text{Iter}}$ where $\text{Iter}$ denotes the number of iterations, which is much faster than TNN-ADMM's $10^{-0.0332 \text{Iter}}$, and the running time can be accelerated by more than $5$ times for a $200 \times 200 \times 20$ tensor.

\end{abstract}

\begin{IEEEkeywords}
Low-tubal-rank tensor completion, alternating minimization, tensor least squares minimization, sampling complexity, circulant algebra
\end{IEEEkeywords}

\newpage
\section{Introduction}

   The big data era calls for efficient algorithms to analyze the enormous amount of data generated by high-resolution sensors, mobile devices, online merchants, and social networks \cite{Baraniuk}. Such real-world data/signals\footnote{In the following, we use the words ``signal" and ``data" interchangeably.} are naturally represented as multidimensional arrays \cite{Cichochi2015SPM}, namely, vectors, matrices, high-order tensors or tensor networks. Signal recovery from partial measurements \cite{Terence} by exploiting the redundancy property modeled as sparse or low-rank has received wide attention in various research and engineering communities. We are interested in fast algorithms for multilinear data completion where the measurement procedure is modelled as a simple down-sampling operation. Exemplar applications include MRI imaging \cite{Terence}, signal processing \cite{Cichochi2015SPM}, big data analysis with missing entries \cite{Rubin2014}, data privacy \cite{Liu2015ICDCS}, network engineering \cite{Liu2015TMC,Zhang2009SIGCOMM,Qiu2010MobiCom,Qiu2014MobiCom}, Internet of Things \cite{Liu2013INFOCOM,Liu2014TPDS}, machine learning \cite{ML2014}, computer vision \cite{Chen2004PAMI,Ye2013PAMI}, recommender system \cite{Koren2009}, and system identification \cite{Liu2009}.

   Such diverse applications motivate and justify the developments of compressive sensing (vector case) \cite{Terence,Tao2006ToIT2}, matrix completion and matrix sensing \cite{Candy2009,Tao2010ToIT}, and higher-order tensor completion \cite{Kolda2009,Kilmer2011,Kilmer2013}. Compressive sensing \cite{Terence,Tao2006ToIT2} advocated relaxing the original NP-hard problem to its convex surrogate, i.e., replacing the $\ell_0$-norm with $\ell_1$-norm. Similarly, researchers introduced nuclear norm \cite{Fazel2002} and tensor-nuclear norm \cite{Shuchin2015} to approximate the combinatorial rank function for the low-rank matrix and tensor completion problem\footnote{A vector is a first-order tensor while a matrix is a second-order tensor.}, respectively. Those two relaxation approaches achieve optimal results with high computational cost, mainly because of the time-consuming SVD (singular value decomposition) or tensor-SVD operations \cite{Shuchin2015,Shuchin2014CVPR}.

   Alternating minimization approaches have been proposed for the matrix completion problem \cite{Jain2013STOC,Hardt2014COLT,Hard2014FOCS}. First, it is both computation- and storage-efficient in implementation. The unknown low-rank matrix $M \in \mathbb{R}^{m \times n}$ is factorized into two much smaller matrices $X$ and $Y$ of size $m \times r$ and $n \times r$, respectively, i.e., $M = XY^{\dagger}$, and rank $r \ll \min(m,n)$ implying $(m + n)r \ll mn$, thus requiring much less computation and memory to optimize. Secondly, this factorization approach enables easier modeling. Besides the low-rank property, this factorization approach allows one to impose extra constraints on the target matrix $M$ or factors $(X,Y)$. For example, Sparse PCA \cite{sparsePCA2006} seeks a low-rank $M$ that is the product of {\em sparse} $X$ and $Y$. Thirdly, it converges to the global optima at a geometric rate, and such theoretic results become available only very recently \cite{Jain2013STOC,Hardt2014COLT,Hard2014FOCS}.

   However, extending existing alternating minimization algorithms (originally designed for the matrix case and enjoyed empirical successes) \cite{Zhang2009SIGCOMM,Liu2013INFOCOM,Jain2013STOC,Hardt2014COLT,Hard2014FOCS} to higher-order tensors is impeded by three major challenges: 1) there exist different definitions for tensor operators that lead to different low-rank tensor models, i.e., the CP-rank tensor \cite{Kolda2009}, the Tuker-rank tensor \cite{Kolda2009} and the low-tubal-rank tensor \cite{Kilmer2011,Kilmer2013}; 2) existing approaches would be rather inefficient for higher-order tensors due to the curse of dimensionality; and 3) those algorithms do not guarantee good theoretical performance.

   In this paper, we address these challenges for the third-order tensor completion problem. More specifically, we are interested in the low-tubal-rank tensor model that shares a similar algebraic structure with the low-rank matrix model. Our goal is to design a fast algorithm under the alternating minimization framework, and theoretically assess its performance. We believe this approach would be a breakthrough point for higher-order tensors due to the following three perspectives:
   \begin{itemize}
     \item The low-tubal-rank tensor model \cite{Kilmer2011,Kilmer2013} is recently proposed for modeling multilinear real-world data, such as WiFi fingerprints \cite{Liu2015TMC}, images \cite{Shuchin2014}, videos \cite{Shuchin2014CVPR}, seismic data \cite{Shuchin2015Seimic}, and machine learning \cite{Shuchin2014}. There is a ``spatial-shifting" property in those data, and we believe it is ubiquitous in real-world data arrays. The low-tubal-rank tensor model is ideal for capturing such characteristics.
     \item Although being iterative and nonconvex in nature, the alternating minimization approach can be much faster than convex relaxations of the tensor completion problem. The potential computational efficiency comes from the fact that it automatically incorporates the low-rank property, resulting in massive dimension reduction. Note that computational efficiency is critical for processing big data.
     \item It has been proved that alternating minimization achieves the global optima at an exponential convergence rate for matrix completion \cite{Jain2013STOC,Hardt2014COLT,Hard2014FOCS}. According to similar algebra laws, such performance guarantees are expected to hold for higher-order tensors.
   \end{itemize}

    First, we propose a fast alternating minimization algorithm, {\em Tubal-Alt-Min}, for the low-tubal-rank tensor completion problem. A key novelty is solving a least squares minimization for tensors by defining a new set of operators, which can be of independent interest. Tensor least squares minimization is different from the standard least squares minimization because the circular convolution operator of the low-tubal-rank tensor model is intertwined with the sub-sampling operator. Therefore, the tensor completion problem is essentially different from matrix completion, implying that existing alternating minimization algorithms \cite{Jain2013STOC,Hardt2014COLT,Hard2014FOCS} cannot be extended straightforwardly to our problem.

    Secondly, the proposed alternating minimization-based approach can be much faster than the tensor-nuclear norm minimization with alternating direction method of multipliers (TNN-ADMM) \cite{Shuchin2015,Shuchin2014CVPR}. We prove that 1) the proposed algorithm guarantees convergence to the global optima at an exponential rate, which is much faster than TNN-ADMM; and 2) for a tensor of size $n \times n \times k$ and tubal-rank $r \ll n$, the required sampling complexity is $O(nr^2k \log^3 n)$ and the computational complexity is $O(n^2rk^2 \log^2 n)$. Please note that there is no constraint on the size of the third-dimension. The proof is based on exploiting an injective mapping between the {\em circulant algebra} and the {\em circular matrix space}.

    Thirdly, we evaluate Tubal-Alt-Min on both synthetic data and real-world video data. The performances are measured in terms of recovery error, convergence rate, and running time. Compared with the convex relaxation-based algorithm TNN-ADMM \cite{Shuchin2015,Shuchin2014CVPR}, Tubal-Alt-Min improves the recovery error by one order of magnitude at sampling rate $50\%$ for synthetic data, and three orders of magnitude for the video data. Tubal-Alt-Min converges to the global optima within $O(\log n / \epsilon)$ iterations and the convergence rate is estimated to be $10^{-0.4423 \text{Iter}}$ where $\text{Iter}$ denotes the number of iterations, which is much faster than TNN-ADMM's rate of $10^{-0.0332 \text{Iter}}$. The running time can be accelerated by more than $5$ times for a $200 \times 200 \times 20$ tensor.

    The remainder of the paper is organized as follows. In Section II, we present the low-tubal-rank tensor model and some preliminaries of the circulant algebra. Section III describes the low-tubal-rank tensor completion problem and the proposed {\em Tubal-Alt-Min} algorithm, including a novel routine to solve the key subproblem: tensor least squares minimization. Section IV provides theoretical performance guarantees of the Tubal-Alt-Min algorithm, while detailed proofs are given in the Appendix. In Section V, we evaluate the proposed scheme on both synthetic data and real-world video data. The conclusions and future works are given in Section VI.

\section{Notations and Preliminaries}

   We begin by first outlining the notations, the algebraic models and some useful results for third-order tensors \cite{Braman2010,Kilmer2011,Kilmer2013,Gleich2013}. Two lemmas (Lemma 2 and 4) in this section are new results. Section \ref{model_A} presents the low-tubal-rank tensor model, used for problem formulation and algorithm design in Section \ref{sect:problem_statement}. Section \ref{sect:circular_algebra} presents preliminaries of the circulant algebra, used for performance analysis in Section \ref{analysis} and the Appendix.

   For computational and sampling complexity, the following standard asymptotic notations are used throughout this paper. Given two non-negative functions $f(n)$ and $g(n)$:
   \begin{itemize}
   \item $f(n) = o(g(n))$ means $\lim_{n \rightarrow \infty} \frac{f(n)}{g(n)} =0$, \item $f(n) = O(g(n))$ means $\lim_{n \rightarrow \infty} \frac{f(n)}{g(n)} < \infty$,
   \item $f(n) = \omega(g(n))$ means $\lim_{n \rightarrow \infty} \frac{f(n)}{g(n)} = \infty$,
   \item $f(n) = \Omega(g(n))$\footnote{Note that $\Theta$ and $\Omega$ are re-used later, whose  meanings will be clear from the context.} means $\lim_{n \rightarrow \infty} \frac{f(n)}{g(n)} > 0$,
   \item $f(n) = \Theta(g(n))$ means $f(n) = O(g(n))$ and $g(n) = O(f(n))$.
   \end{itemize}

\subsection{Low-tubal-rank Tensor Model}
\label{model_A}

   Throughout the paper, we will focus on real valued third-order tensors in the space $\mathbb{R}^{m \times n \times k}$. We use $m,~n,~k,~r$ for tensor dimensions, $x,~y \in \mathbb{R}$ for scalar variables, $\textbf{x},~\textbf{y} \in \mathbb{R}^{n}$ for vectors, and $X,~Y \in \mathbb{R}^{m \times n}$ for matrices. Tensors are denoted by calligraphic letters and their corresponding circular matrices (defined in Section \ref{sect:circular_algebra}) are tagged with the superscript $c$, i.e., $\mathcal{T}\in \mathbb{R}^{m \times n \times k},~\mathcal{X} \in \mathbb{R}^{m \times r \times k},~\mathcal{Y} \in \mathbb{R}^{n \times r \times k}$ and $T^{c}\in \mathbb{R}^{mk \times nk},~X^{c} \in \mathbb{R}^{mk \times rk },~Y^{c} \in \mathbb{R}^{nk \times rk}$.

   Let $X^\dag$ denote the transpose of matrix $X$. We use $i,j,\kappa$ to index the first, second and third dimension of a tensor, and $s,t$ for temporary indexing. $[n]$ denotes the set $\{1,2,...,n\}$. Usually, $i \in [m],j \in [n],\kappa \in [k]$ unless otherwise specified. For tensor $\mathcal{T} \in \mathbb{R}^{m \times n \times k}$, the $(i,j,\kappa)$-th entry is $\mathcal{T}(i,j,\kappa)$, or concisely represented as $\mathcal{T}_{ij\kappa}$.
   The $\ell_2$-norm of a vector is defined as $||\textbf{x}||_2 = \sqrt{\sum_{i=1} \textbf{x}_i^2}$, while the Frobenius norm of a matrix $X$ is $||X||_F = \sqrt{\sum_{i=1}^{m} \sum_{j=1}^{n} X_{ij}^2}$ and of a tensor is $||\mathcal{T}||_F = \sqrt{\sum_{i=1}^{m} \sum_{j=1}^{n} \sum_{\kappa=1}^{k} \mathcal{T}_{ij\kappa}^2}$.

   \textbf{Tubes/fibers, and slices of a tensor}: A {\em tube} (also called a fiber) is a 1-D section defined by fixing all indices but one, while a {\em slice} is a 2-D section defined by fixing all but two indices. We use $\mathcal{T}(:,j,\kappa), ~\mathcal{T}(i,:,\kappa), ~\mathcal{T}(i,j,:)$ to denote the mode-$1$, mode-$2$, mode-$3$ tubes, which are vectors, and $\mathcal{T}(:,:,\kappa),~\mathcal{T}(:,j,:),~\mathcal{T}(i,:,:)$ to denote the frontal, lateral, horizontal slices, which are matrices. For easy representation sometimes, we denote $\mathcal{T}^{(\kappa)} = \mathcal{T}(:,:,\kappa)$.

   \textbf{Tensor transpose and frequency domain representation}:
   $\mathcal{T}^{\dag} \in \mathbb{R}^{n \times m \times k}$ is obtained by transposing each of the frontal slices and then reversing the order of transposed frontal slices $2$ through $k$, i.e., for $2 \leq \kappa \leq k$, $\mathcal{T}^{\dag}(:,:,\kappa) =(\mathcal{T}(:,:,k+2-\kappa))^{\dag}$ (the transpose of matrix $\mathcal{T}(:,:,k+2-\kappa)$). For reasons to become clear soon, we define a tensor $\widetilde{\mathcal{T}}$, which is the representation in the frequency domain and is obtained by taking the Fourier transform along the third mode of $\mathcal{T}$, i.e., $\widetilde{\mathcal{T}}(i,j,:) = \text{fft}(\mathcal{T}(i,j,:))$. In MATLAB notation, $\widetilde{\mathcal{T}} = \text{fft}(\mathcal{T},[~],3)$, and one can also compute $\mathcal{T}$ from $\widetilde{\mathcal{T}}$ via $\mathcal{T} = \text{ifft}(\widetilde{\mathcal{T}},[~],3)$.

   We now define the linear algebraic development \cite{Kilmer2013} for the low-tubal-rank tensor model. It rests on defining a tensor-tensor product between two 3-D tensors, referred to as the t-product as defined below. For two tubes (vectors) of the same size, i.e., $\textbf{a},\textbf{~b} \in \mathbb{R}^{k}$, let $\textbf{a}*\textbf{b}$ denote the \emph{circular convolution} between these two tubes, which preserves the size. Next, we give the definition of tensor product and some related definitions.

  \begin{definition}\label{def:tensor_product}
  \cite{Kilmer2011,Kilmer2013}
  \textbf{t-product}. The tensor-product $\mathcal{C} = \mathcal{A} \ast \mathcal{B}$ of $\mathcal{A} \in \mathbb{R}^{n_1 \times n_2 \times k}$ and $\mathcal{B} \in \mathbb{R}^{n_2 \times n_3 \times k}$ is a tensor of size $n_1 \times n_3 \times k$,
  $\mathcal{C}(i,j,:) = \sum\limits_{s=1}^{n_2} \mathcal{A}(i,s,:) \ast \mathcal{B}(s,j,:)$, for $i \in [n_1]$ and $j \in [n_3]$.
  \end{definition}

  A 3-D tensor of size $n_1 \times n_2 \times k$ can be viewed as an $n_1 \times n_2$ matrix of tubes which lie in the third-dimension. So the t-product of two tensors can be regarded as a matrix-matrix multiplication, except that the operation between scalars is replaced by circular convolution between two tubes. Therefore, the two operators element-wise addition and the t-product, and the space $\mathbb{R}^{n \times n \times k}$ together define an {\em Abelian group} \cite{Braman2010}.

  \begin{definition}\cite{Kilmer2011,Kilmer2013}
  \textbf{Identity tensor}.
  The identity tensor $\mathcal{I} \in \mathbb{R}^{n \times n \times k}$ is a tensor whose first frontal slice $\mathcal{I}(:,:,1)$ is the $n \times n$ identity matrix and all other frontal slices $\mathcal{I}^{(i)},~(i=2,...,k)$ are zero matrices.
  \end{definition}

  \begin{definition}\cite{Kilmer2011,Kilmer2013}
  \textbf{Orthogonal tensor}. A tensor $\mathcal{Q} \in \mathbb{R}^{n \times n \times k}$ is orthogonal if it satisfies
  $\mathcal{Q}^{\dag} \ast \mathcal{Q} =  \mathcal{Q} \ast \mathcal{Q}^{\dag} = \mathcal{I}$.
  \end{definition}

  \begin{definition}\label{inverse}\cite{Kilmer2011,Kilmer2013}
  \textbf{Inverse}. The inverse of a tensor $\mathcal{T} \in \mathbb{R}^{n \times n \times k}$ is written as $\mathcal{T}^{-1} \in \mathbb{R}^{n \times n \times k}$ and satisfies
  $\mathcal{T}^{-1} \ast \mathcal{T} = \mathcal{T} \ast \mathcal{T}^{-1} = \mathcal{I}$.
  \end{definition}

  \begin{definition}\label{def:block-diagonal}\cite{Kilmer2011,Kilmer2013}
  \textbf{Block diagonal form of third-order tensor}. Let $\overline{\mathcal{A}}$ denote the block-diagonal matrix representation of the tensor $\mathcal{A}$ in the Fourier domain, i.e.,
  \begin{equation}
  \overline{\mathcal{A}} \triangleq  blkdiag(\widetilde{\mathcal{A}}) \triangleq  \left[
  \begin{array}{cccc}
  \widetilde{\mathcal{A}}^{(1)} &  & &\\
  & \widetilde{\mathcal{A}}^{(2)} & &\\
  & & ... & \\
  & & &\widetilde{\mathcal{A}}^{(k)}
  \end{array}\right] \in \mathbb{C}^{mk \times nk},
  \end{equation}
  where $\mathbb{C}$ denotes the set of complex numbers. It is easy to verify that  $\overline{\mathcal{A}^{\dagger}} = \overline{\mathcal{A}}^{\dagger}$.
  \end{definition}

  \begin{remark}\label{remark:computing_tensor_product}\cite{Kilmer2011,Kilmer2013}
  The following fact will be used throughout the paper for calculating tensor products and also tensor inverse. For tensors $\mathcal{A} \in \mathbb{R}^{n_1 \times n_2 \times k}$ and $\mathcal{B} \in \mathbb{R}^{n_2 \times n_3 \times k}$, we have
  \begin{equation}
  \mathcal{A} * \mathcal{B} = \mathcal{C} \Longleftrightarrow \overline{\mathcal{A}}~  \overline{\mathcal{B}} = \overline{\mathcal{C}}.
  \end{equation}
  \end{remark}

  \begin{definition}\cite{Kilmer2011,Kilmer2013}
  \textbf{f-diagonal tensor}.
  A tensor is called f-diagonal if each frontal slice of the tensor is a diagonal matrix, i.e., $\Theta(i,j,\kappa)=0$ for $i \neq j, \forall \kappa$.
  \end{definition}

  Using the above definitions, one can obtain the t-SVD \cite{Kilmer2011,Kilmer2013} (tensor singular value decomposition) for compressing or denoising third-order data. 

  \begin{definition}\label{tsvd}\cite{Kilmer2011,Kilmer2013}
  \textbf{t-SVD}. The t-SVD of $\mathcal{T} \in \mathbb{R}^{m \times n \times k}$ is given by
   $\mathcal{T} = \mathcal{U} \ast \Theta \ast \mathcal{V}^{\dag}$,
   where $\mathcal{U}$ and $\mathcal{V}$ are orthogonal tensors of sizes $m \times m \times k$ and $n \times n \times k$, respectively, $\Theta$ is a f-diagonal tensor of size $m \times n \times k$ and its tubes are called the eigentubes of $\mathcal{T}$. An algorithm for computing the t-SVD is outlined in Alg. \ref{alg:tSVD}.
   \end{definition}

  \begin{figure}[t]\centering
  \includegraphics[width=0.495\textwidth]{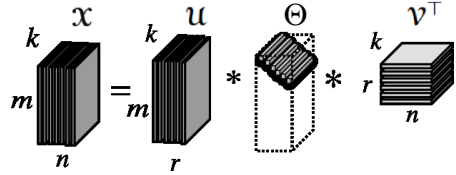}
   \caption{The (reduced) t-SVD of an $m \times n \times k$ tensor of tubal-rank $r$.}
   \label{fig:tsvd}
  \end{figure}

   \begin{figure}[h]
 	\centering
	\vspace{-5mm}
    \begin{minipage}{\textwidth}
  \begin{algorithm}[H]
  \caption{t-SVD \cite{Kilmer2011,Kilmer2013}}
  \begin{algorithmic}
   \label{alg:tSVD}
  \STATE \textbf{Input: } $\mathcal{X} \in \mathbb{C}^{n_1 \times n_2  \times n_3}$
  	\STATE ${\widetilde{\mathcal{X}}} \leftarrow {\tt fft}(\mathcal{X},[\hspace{1mm}],3)$; ~\%Take DFT along the 3rd dimension
  \FOR{$i = 1 \hspace{2mm} \rm{to} \hspace{2mm} n_3$}
  	\STATE $ [\hat{U}, \hat{S}, \hat{V}] = {\tt SVD} (\widetilde{\mathcal{X}}(:,:,i))$;
  	\STATE $ {\widetilde{\mathcal{U}}}^{(i)} = \hat{U};  {\widetilde{\mathcal{S}}}^{(i)} = \hat{S}$; $ \widetilde{\mathcal{V}}^{(i)}= \hat{V}; $
  \ENDFOR

  \STATE $\mathcal{U} \leftarrow {\tt ifft}(\widetilde{\mathcal{U}},[\hspace{1mm}],3);$ $\mathcal{S} \leftarrow {\tt ifft}(\widetilde{\mathcal{S}},[\hspace{1mm}],3);$ $\mathcal{V} \leftarrow {\tt ifft}(\widetilde{\mathcal{V}},[\hspace{1mm}],3)$.
  \end{algorithmic}
\end{algorithm}
\end{minipage}
\vspace{-3mm}
\end{figure}

   \begin{definition}\label{tubal_rank}\cite{Kilmer2011,Kilmer2013}
   \textbf{Tensor tubal-rank}. The tensor tubal-rank of a third-order tensor $\mathcal{T}$ is the number of non-zero tubes of $\Theta$ in the t-SVD, denoted as $r$.
   \end{definition}

   \begin{remark}\label{reduced_t_svd}
   Suppose $\mathcal{T}$ has tubal-rank $r$, then the reduced t-SVD of $\mathcal{T}$ is given by $\mathcal{T} = \mathcal{U} \ast \Theta \ast \mathcal{V}^{\dagger}$,
   where $\mathcal{U} \in \mathbb{R}^{m \times r \times k}$ and $\mathcal{V} \in \mathbb{R}^{n \times r \times k}$ satisfying $\mathcal{U}^\dagger * \mathcal{U} = \mathcal{I}$, $\mathcal{V}^\dagger * \mathcal{V} = \mathcal{I}$, and $\Theta$ is a f-diagonal tensor of size $r \times r \times k$, as illustrated in Fig. \ref{fig:tsvd}. This reduced version of t-SVD will be used throughout the paper unless otherwise noted.
   \end{remark}

   \begin{lemma}\label{best_r_rank}\cite{Kilmer2011,Kilmer2013}
   \textbf{Best rank-$r$ approximation}.
   Let the t-SVD of $\mathcal{T} \in \mathbb{R}^{m \times n \times k}$ be $\mathcal{T} = \mathcal{U} \ast \Theta \ast \mathcal{V}^{\dag}$. For a given positive integer $r$, define $\mathcal{T}_r = \sum_{s=1}^{r} \mathcal{U}(:,s,:) \ast \Theta(s,s,:) \ast \mathcal{V}^{\dag}(:,s,:)$. Then
   $\mathcal{T}_r = \argmin\limits_{\overline{\mathcal{T}} \in \mathbb{T}}||\mathcal{T} - \overline{\mathcal{T}}||_F$,
   where $\mathbb{T} = \{\mathcal{X} \ast \mathcal{Y}^{\dag} | \mathcal{X} \in \mathbb{R}^{m \times r \times k}, \mathcal{Y} \in \mathbb{R}^{n \times
    r \times k} \}$.
   \end{lemma}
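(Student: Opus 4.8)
The plan is to transport the entire minimization into the Fourier (block-diagonal) domain, where the t-product collapses to ordinary matrix multiplication and the problem decouples slice-by-slice into classical matrix rank-$r$ approximations. First I would record the Parseval-type identity for the third-mode DFT: since the unnormalized DFT is an isometry up to the factor $k$, one has $\|\mathcal{Z}\|_F^2 = \tfrac{1}{k}\|\overline{\mathcal{Z}}\|_F^2$ for every real tensor $\mathcal{Z}$. Combining this with Remark~\ref{remark:computing_tensor_product} ($\overline{\mathcal{X}\ast\mathcal{Y}^{\dag}} = \overline{\mathcal{X}}\,\overline{\mathcal{Y}}^{\dag}$) and the fact $\overline{\mathcal{Y}^{\dag}} = \overline{\mathcal{Y}}^{\dag}$ from Definition~\ref{def:block-diagonal}, any candidate $\mathcal{X}\ast\mathcal{Y}^{\dag}\in\mathbb{T}$ satisfies
\[
\|\mathcal{T} - \mathcal{X}\ast\mathcal{Y}^{\dag}\|_F^2 = \frac{1}{k}\big\|\overline{\mathcal{T}} - \overline{\mathcal{X}}\,\overline{\mathcal{Y}}^{\dag}\big\|_F^2 = \frac{1}{k}\sum_{\kappa=1}^{k}\big\|\widetilde{\mathcal{T}}^{(\kappa)} - \widetilde{\mathcal{X}}^{(\kappa)}\big(\widetilde{\mathcal{Y}}^{(\kappa)}\big)^{\dag}\big\|_F^2 ,
\]
because block diagonality makes the Frobenius norm separate across frontal slices.

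Next I would derive a lower bound. Each $\widetilde{\mathcal{X}}^{(\kappa)}$ is $m\times r$ and each $\widetilde{\mathcal{Y}}^{(\kappa)}$ is $n\times r$, so every product $\widetilde{\mathcal{X}}^{(\kappa)}(\widetilde{\mathcal{Y}}^{(\kappa)})^{\dag}$ has rank at most $r$. By the classical Eckart--Young--Mirsky theorem applied to each complex matrix $\widetilde{\mathcal{T}}^{(\kappa)}$, the $\kappa$-th summand is at least $\sum_{s>r}\sigma_s\big(\widetilde{\mathcal{T}}^{(\kappa)}\big)^2$, and summing over $\kappa$ yields a lower bound on $\|\mathcal{T}-\overline{\mathcal{T}}\|_F^2$ that holds uniformly over all of $\mathbb{T}$. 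To prove achievability, I would unwind the definition of $\mathcal{T}_r$: since the summation and t-product in $\mathcal{T}_r = \sum_{s=1}^{r}\mathcal{U}(:,s,:)\ast\Theta(s,s,:)\ast\mathcal{V}^{\dag}(:,s,:)$ commute with block-diagonalization, its $\kappa$-th Fourier block equals $\sum_{s=1}^{r}\sigma_s^{(\kappa)} u_s^{(\kappa)}(v_s^{(\kappa)})^{\dag}$, which by Algorithm~\ref{alg:tSVD} is precisely the top-$r$ SVD truncation of $\widetilde{\mathcal{T}}^{(\kappa)}$. Hence $\mathcal{T}_r$ saturates the per-slice lower bound in every block, so it attains the global lower bound; and since it factors as $\big(\mathcal{U}(:,1\!:\!r,:)\ast\Theta(1\!:\!r,1\!:\!r,:)\big)\ast\mathcal{V}(:,1\!:\!r,:)^{\dag}$ with real factors, it lies in $\mathbb{T}$, so it is a minimizer.

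The step I expect to be the main obstacle is reconciling the \emph{relaxation} used for the lower bound with the \emph{constraint} defining $\mathbb{T}$. The lower bound in the previous paragraph allows arbitrary complex per-slice factors, whereas $\mathbb{T}$ requires $\mathcal{X},\mathcal{Y}$ to be real; for a real tensor this imposes the conjugate-symmetry relation $\widetilde{\mathcal{X}}^{(\kappa)} = \overline{\widetilde{\mathcal{X}}^{(k+2-\kappa)}}$ linking paired slices, so the $k$ subproblems are not genuinely independent and the decoupled minimizer need not a priori be feasible. The clean way around this is the logic already set up: I never need to argue the constraint is inactive in general — I only need a single feasible \emph{real} point that meets the bound. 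Because $\mathcal{T}$ is real, conjugate-symmetric frontal slices have conjugate SVDs, hence conjugate-symmetric rank-$r$ truncations; therefore the blocks of $\mathcal{T}_r$ automatically respect the symmetry, its inverse DFT is real, and $\mathcal{T}_r\in\mathbb{T}$ while simultaneously saturating the relaxed lower bound.

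This matching of a feasible real point to the relaxed complex lower bound is what closes the argument, and it is the part I would write most carefully, verifying in particular that the t-SVD produced by Algorithm~\ref{alg:tSVD} indeed returns real $\mathcal{U},\Theta,\mathcal{V}$ so that $\mathcal{T}_r$ is well-defined as a real tensor. The remaining pieces — the Parseval identity and the slice-wise Eckart--Young--Mirsky application — are routine and can be cited or stated without extended computation.
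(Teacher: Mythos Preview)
The paper does not prove this lemma; it is stated as a known result with citations to \cite{Kilmer2011,Kilmer2013}. Your argument is correct and is essentially the standard proof given in those references: pass to the Fourier domain where the t-product becomes blockwise matrix multiplication, apply Eckart--Young--Mirsky to each frontal slice for the lower bound, and then observe that the t-SVD truncation $\mathcal{T}_r$ attains this bound slice by slice while remaining real. The real-versus-complex feasibility issue you single out is indeed the only genuine subtlety, and your resolution --- exhibit a single real feasible point ($\mathcal{T}_r$) that meets the relaxed complex lower bound, rather than arguing the relaxation is tight in general --- is exactly right. The one caveat worth recording when you write this up carefully is that Algorithm~\ref{alg:tSVD} as stated does not explicitly enforce conjugate-symmetric SVD choices across paired Fourier slices (which matters when singular values are repeated and the SVD is non-unique), so the claim that $\mathcal{U},\Theta,\mathcal{V}$ come out real relies on a consistent choice convention; this is a standard implementation detail in the t-SVD literature rather than a gap in your reasoning.
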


   Note that $\Theta$ in t-SVD is organized in a decreasing order, i.e., $||\Theta(1,1,:)||_2 \geq ||\Theta(2,2,:)||_2 \geq ... $, which is implicitly defined in \cite{Kilmer2013}. Therefore, the best rank-$r$ approximation of tensors is similar to PCA (principal component analysis) of matrices.

  We next define the {\em incoherence} of third-order tensors  which is a condition to guarantee unique solutions. The concept of incoherence is first introduced by \cite{Candy2009,Tao2010ToIT} for matrices and is a standard assumption for low-rank matrix/tensor completion problems.

  \begin{definition}\cite{Shuchin2015}
  \textbf{Tensor Incoherence Condition}.
  Given the t-SVD of a tensor $\mc{T} = \mc{U} *\Theta *\mc{V}^{\dagger}$ with tubal-rank $r$, $\mc{T}$ is said to satisfy the tensor incoherent condition, if there exists $\mu_0 > 0$ such that for $\kappa \in [k]$.
  \begin{equation}\label{tensor_incoherency}
  \begin{aligned}
  \text{(Tensor-column incoherence)}~~&\mu(\mathcal{U}) \triangleq \frac{m}{r}\max_{i \in [m]} \left\| \mc{U}^\dagger * \dot{e}_i \right\|_F^2 \leq \mu_0,\\
  \text{(Tensor-row incoherence)}~~&\mu(\mathcal{V}) \triangleq \frac{n}{r} ~\max_{j \in [n]} \left\| \mc{V}^\dagger * \dot{e}_j \right\|_F^2 \leq \mu_0,\\
  \end{aligned}
  \end{equation}
  where $\dot{e}_i$ is the $m \times 1 \times k$ column basis with $\dot{e}_{i11} = 1$  and $\dot{e}_j$ is the $n \times 1 \times k$ column basis with $\mathbf{e}_{j11}=1$.
  \end{definition}

  \begin{remark}
  The smallest $\mu_0$ is $1$ which is achieved by the case when each tensor-column subspace $\mathcal{U}(:,i,:)$ ($i \in [m]$) has elements with magnitude $\frac{1}{\sqrt{mk}}$, or each tensor-column subspace $\mathcal{U}(:,j,:)$ ($j \in [n]$) has elements with magnitude $\frac{1}{\sqrt{nk}}$. The largest possible value of $\mu_0$ is $\min (m,n)/r$ when one of the tensor columns of $\mathcal{U}$ is equal to the standard tensor column basis $\dot{e}_i$. With low $\mu_0$, each element of $\mathcal{T}$ is supposed to play a similar role in recovering  $\mathcal{T}$.
  \end{remark}

\subsection{Circulant Algebra}\label{sect:circular_algebra}

  The circulant algebra is recently introduced to generalize matrix algebra to the third-order tensor case. We borrow some notations and operations from  \cite{Braman2010,Gleich2013}, meanwhile we propose several new definitions to facilitate our analysis in the Appendix.

  Throughout the paper, circulants are denoted by underlined letters. We define {\em tubal scalar}, {\em tubal vector}, and {\em tubal matrix} in the following. Note that they are one-dimension higher than their counterparts in traditional linear algebra. In circulant algebra, a tubal scalar is a vector of length $k$. $\mathbb{K}$ denotes the space of length-$k$ {\em tubal scalars}, $\mathbb{K}^n$ denotes the space of {\em tubal vectors} where each element is a tubal scalar in $\mathbb{K}$, and $\mathbb{K}^{m \times n}$ denotes the space of $m \times n$ {\em tubal matrices} where each element is a tubal scalar in $\mathbb{K}$. We use $\underline{\alpha}, \underline{\beta} \in \mathbb{K}$ for tubal scalars, $\underline{\textbf{x}}, \underline{\textbf{y}} \in \mathbb{K}^n$ for tubal vectors, and $\underline{A},~\underline{B} \in \mathbb{K}^{m \times n}$ for tubal matrices. Their corresponding circular matrices are tagged with the superscript $c$, i.e., $\underline{\alpha}^c, \underline{\beta}^c, \underline{\textbf{x}}^c, \underline{A}^c$.

  D.~Gleich, {\em et al} \cite{Gleich2013} introduced the operator $\text{circ}(\cdot)$ to map circulants to their corresponding circular matrix representations. For tubal scalar $\alpha = \{a_1, a_2, ..., a_k\} \in \mathbb{K}$, tubal vector $\underline{\textbf{x}} \in \mathbb{K}^n$, and tubal matrix $\underline{A} \in \mathbb{K}^{m \times n}$, we use the notation $\leftrightarrow$ to denote this mapping as follows:
   \begin{equation}\label{eq:circ_operator}
   \begin{split}
   \underline{\alpha} ~~~~\leftrightarrow~~~~ \underline{\alpha}^c &= \text{circ}(\underline{\alpha}) =\left[
    \begin{array}{cccc}
    \alpha_1 & \alpha_k & ... & \alpha_2\\
    \alpha_2 & \alpha_1 & ... & ...\\
    ... & ... & ... & \alpha_k\\
    \alpha_k & \alpha_{k-1} & ... & \alpha_1\\
    \end{array}
    \right],~~~~~~
    \underline{\textbf{x}} ~~~~\leftrightarrow~~~~\underline{\textbf{x}}^c = \text{circ}(\underline{\textbf{x}}) = \left[
    \begin{array}{c}
    \text{circ}(\underline{x}_1)\\
    \vdots \\
    \text{circ}(\underline{x}_n)\\
    \end{array}
    \right], \\
    \underline{A} ~~~~\leftrightarrow~~~~ \underline{A}^c &= \text{circ}(\underline{A}) = \left[
    \begin{array}{ccc}
    \text{circ}(\underline{A}_{1,1}) & ... & \text{circ}(\underline{A}_{1,n})\\
    \vdots & \vdots & \vdots \\
    \text{circ}(\underline{A}_{m,1}) & ... & \text{circ}(\underline{A}_{m,n})\\
    \end{array}
    \right]. \\
   \end{split}
   \end{equation}

   \begin{lemma}\cite{Braman2010}
   $(\mathbb{K}^n,\pm,*)$ is a commutative ring with the multiplicative identity $\underline{1} = \{1~0~...~0\}$, where $\pm$ and $*$ denotes the addition/subtraction and circular convolution. We have:
   \begin{equation}
   \begin{split}
   \underline{\alpha} + \underline{\beta} ~~~~&\leftrightarrow~~~~\underline{\alpha}^c + \underline{\beta}^c = \text{circ}(\underline{\alpha}) + \text{circ}(\underline{\beta}), \\
   \underline{\alpha} * \underline{\beta} ~~~~&\leftrightarrow~~~~\underline{\alpha}^c \underline{\beta}^c = \text{circ}(\underline{\alpha})~  \text{circ}(\underline{\beta}), \\
   \underline{\textbf{x}} * \underline{\alpha} ~~~~&\leftrightarrow~~~~\underline{\textbf{x}}^c  \underline{\alpha}^c = \text{circ}(\underline{\textbf{x}})~  \text{circ}(\underline{\alpha}), \\
   \underline{A} * \underline{\textbf{x}}
   ~~~~&\leftrightarrow~~~~\underline{A}^c  \underline{\textbf{x}}^c = \text{circ}(\underline{A})~  \text{circ}(\underline{\textbf{x}}).
   \end{split}
   \end{equation}
   \end{lemma}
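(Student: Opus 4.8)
The plan is to exhibit $\text{circ}(\cdot)$ as a bijective ring homomorphism from the tubal scalars onto the commutative ring of $k \times k$ circulant matrices, and then simply transport the ring axioms across this isomorphism. The observation driving everything is that almost nothing needs to be checked directly in $\mathbb{K}$: each property is inherited from the familiar algebra of circulant matrices under ordinary matrix addition and multiplication. Note that the commutative-ring structure genuinely lives at the level of tubal scalars $\mathbb{K}$ --- the stated multiplicative identity $\underline{1}=\{1~0~\dots~0\}$ is a single tubal scalar --- and the tubal-vector and tubal-matrix correspondences are then module- and operator-level statements layered on top of it.

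First I would record the two structural facts about circulant matrices. A $k\times k$ circulant matrix is completely determined by its first column, so $\text{circ}:\mathbb{K}\to\{k\times k\text{ circulants}\}$ is a linear bijection; in particular $\text{circ}(\underline{1})$ is the identity matrix and $\text{circ}(\underline{0})$ is the zero matrix. Additivity, $\text{circ}(\underline{\alpha}+\underline{\beta})=\text{circ}(\underline{\alpha})+\text{circ}(\underline{\beta})$, is then immediate, since $\text{circ}$ is linear and matrix addition is entrywise; this is exactly the first displayed correspondence.

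The crux is the multiplicative relation $\text{circ}(\underline{\alpha}*\underline{\beta})=\text{circ}(\underline{\alpha})\,\text{circ}(\underline{\beta})$, which I would establish through the discrete Fourier transform. Let $F$ be the $k\times k$ DFT matrix; every circulant is diagonalized as $\text{circ}(\underline{\alpha})=F^{-1}\,\text{diag}(F\underline{\alpha})\,F$, and the convolution theorem gives $F(\underline{\alpha}*\underline{\beta})=(F\underline{\alpha})\odot(F\underline{\beta})$ (entrywise product). Consequently
\[
\text{circ}(\underline{\alpha})\,\text{circ}(\underline{\beta})
= F^{-1}\,\text{diag}(F\underline{\alpha})\,\text{diag}(F\underline{\beta})\,F
= F^{-1}\,\text{diag}\!\big(F(\underline{\alpha}*\underline{\beta})\big)\,F
= \text{circ}(\underline{\alpha}*\underline{\beta}).
\]
This single computation simultaneously shows that the product of two circulants is again circulant (so the image is closed under multiplication) and, because diagonal matrices commute, that $\text{circ}(\underline{\alpha})$ and $\text{circ}(\underline{\beta})$ commute --- hence $*$ is commutative. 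A DFT-free alternative is a direct index computation verifying that the $(i,j)$ entry of the matrix product equals the $((i-j)\bmod k)$-th circular-convolution coefficient; I expect this elementary but bookkeeping-heavy verification to be the only real obstacle, which is why I would route through the clean Fourier diagonalization instead.

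With the isomorphism in hand, the commutative-ring axioms for $(\mathbb{K},\pm,*)$ --- associativity and commutativity of $*$, distributivity over $\pm$, and the identities $\underline{0},\underline{1}$ --- all follow by pulling back the corresponding standard identities for matrix addition and multiplication through $\text{circ}^{-1}$. Finally, to obtain the remaining two displayed correspondences I would extend $\text{circ}(\cdot)$ block-wise to tubal vectors and tubal matrices exactly as in~\eqref{eq:circ_operator}, so that $\underline{\textbf{x}}^c$ and $\underline{A}^c$ are block matrices whose blocks are scalar circulants. The scalar relation then propagates entrywise through block matrix multiplication, with distributivity assembling precisely the convolution sums that define $\underline{\textbf{x}}*\underline{\alpha}$ and $\underline{A}*\underline{\textbf{x}}$; this reduces the module- and operator-level statements to repeated application of the scalar case.
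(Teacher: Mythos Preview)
The paper does not prove this lemma; it is stated with a citation to \cite{Braman2010} and no proof is given. Your argument via the DFT diagonalization of circulants is correct and is in fact the standard way to establish these facts, so there is nothing in the paper to compare against beyond noting that your proposal is sound. You also correctly flag that the ring structure sits on $\mathbb{K}$ rather than $\mathbb{K}^n$ as written in the statement.
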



  Throughout this paper, we view a tensor in the space $\mathbb{R}^{m \times n \times k}$ as a tubal matrix in the space $\mathbb{K}^{m \times n}$. The tensors $\mathcal{T},~\mathcal{X},~\mathcal{Y}$ have circulant representations $\underline{T},~\underline{X},~\underline{Y}$ and circular matrix representations $T^{c},~X^{c},~Y^{c}$. We define the Frobenius norm of a circulant as follows.
   \begin{lemma}\label{lemma:frob_equality}
   Define the Frobenius norm of a circulant equals to that of its tensor presentation, i.e., $||\underline{T}||_F = ||\mathcal{T}||_F$. Then, according to (\ref{eq:circ_operator}), we have $||\underline{T}||_F = \frac{1}{\sqrt{k}} ||T^{c}||$. If $\mathcal{T}=\mathcal{X} * \mathcal{Y}$, the following equality is used throughout the paper:
   \begin{equation}
   ||\mathcal{T}||_F =||\mathcal{X} * \mathcal{Y}||_F = \frac{1}{\sqrt{k}}||X^{c} Y^{c}||_F.
   \end{equation}
   \end{lemma}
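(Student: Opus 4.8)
The plan is to reduce everything to a single elementary counting fact about the $\text{circ}(\cdot)$ operator and then assemble the tensor identity block by block. The only genuinely substantive claim is the scaling factor $1/\sqrt{k}$: the first equality $\|\underline{T}\|_F = \|\mathcal{T}\|_F$ is stipulated as a definition, and the product identity follows by combining the scaling relation with the homomorphism property of $\text{circ}(\cdot)$.

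First I would analyze $\text{circ}(\cdot)$ on a single tubal scalar $\underline{\alpha} = (\alpha_1,\ldots,\alpha_k)$. By the definition in (\ref{eq:circ_operator}), $\text{circ}(\underline{\alpha})$ is the $k\times k$ matrix whose columns are the successive cyclic shifts of $(\alpha_1,\ldots,\alpha_k)^{\dagger}$; consequently each scalar $\alpha_\ell$ occupies exactly one entry in every column, and hence appears exactly $k$ times in the whole matrix. Summing squared entries therefore gives $\|\text{circ}(\underline{\alpha})\|_F^2 = k\sum_{\ell=1}^{k}\alpha_\ell^2 = k\|\underline{\alpha}\|_2^2$. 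This multiplicity-$k$ count is the crux of the lemma; once it is in hand the rest is bookkeeping.

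Next I would lift this to the full tensor. Since $T^{c} = \text{circ}(\underline{T})$ is assembled as the block matrix whose $(i,j)$ block is $\text{circ}(\mathcal{T}(i,j,:))$, and these blocks tile $T^{c}$ without overlap, the squared Frobenius norm splits as $\|T^{c}\|_F^2 = \sum_{i=1}^m\sum_{j=1}^n \|\text{circ}(\mathcal{T}(i,j,:))\|_F^2$. Applying the per-tube identity yields $\|T^{c}\|_F^2 = k\sum_{i,j}\|\mathcal{T}(i,j,:)\|_2^2 = k\sum_{i,j,\kappa}\mathcal{T}_{ij\kappa}^2 = k\|\mathcal{T}\|_F^2$, so that $\|\mathcal{T}\|_F = \tfrac{1}{\sqrt{k}}\|T^{c}\|_F$ (here $\|T^{c}\|$ is read as the Frobenius norm). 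Combined with the stipulated definition $\|\underline{T}\|_F = \|\mathcal{T}\|_F$, this gives $\|\underline{T}\|_F = \tfrac{1}{\sqrt{k}}\|T^{c}\|_F$.

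Finally, for the product identity I would invoke the homomorphism carried by $\text{circ}(\cdot)$ (the preceding commutative-ring lemma, equivalently Remark \ref{remark:computing_tensor_product}), namely that $\mathcal{T} = \mathcal{X}*\mathcal{Y}$ corresponds to $T^{c} = X^{c} Y^{c}$. Substituting into the scaling relation gives $\|\mathcal{X}*\mathcal{Y}\|_F = \tfrac{1}{\sqrt{k}}\|T^{c}\|_F = \tfrac{1}{\sqrt{k}}\|X^{c} Y^{c}\|_F$, the desired equation. I do not anticipate any real obstacle; the one point requiring care is confirming that the circulant blocks partition the entries of $T^{c}$ exactly, with neither double counting nor leftover entries, which is immediate from the block structure displayed in (\ref{eq:circ_operator}).
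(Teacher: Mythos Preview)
Your proposal is correct and matches the paper's own justification, which is essentially just the phrase ``according to (\ref{eq:circ_operator})'' with no further detail. You have simply made explicit the entry-counting argument that the paper leaves to the reader: each tube entry appears $k$ times in its circulant block, the blocks tile $T^{c}$ disjointly, and the product identity follows from the $\text{circ}(\cdot)$ homomorphism.
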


  \begin{definition}\label{def:circulant_tranpose}
  \textbf{Tubal-wise transpose, circulant transpose}. Let $\mathcal{X}^T \in \mathbb{R}^{n \times m \times k}$ denote the tube-wise transpose of $\mathcal{X} \in \mathbb{R}^{m \times n \times k}$, i.e., $\mathcal{X}^T(i,j,:) = \mathcal{X}(j,i,:)$. Similarly, let $\underline{X}^T$ denote the circulant transpose of $\underline{X}$, i.e., $\underline{X}^T(i,j) = \underline{X}(j,i)$, which can be viewed as the transpose of a matrix of vectors.
  \end{definition}

\section{Problem Statement and Proposed Algorithm}\label{sect:problem_statement}

   We first describe the low-tubal-rank tensor completion problem. Then, we present our Tubal-Alt-Min algorithm followed by detailed descriptions of its key components. Finally, we provide a procedure to implement the tensor least squares minimization.

\subsection{Problem Statement}



  We consider the problem of completing a 3-D tensor under the assumption that the 3-D tensor has low-tubal-rank. Specifically, assume that the data tensor $\mathcal{T} \in \mathbb{R}^{m \times n \times k}$ has tubal-rank $r \ll \min(m,n)$. By observing a set $\Omega \subset [m] \times [n] \times [k]$ of $\mathcal{T}$'s elements, our aim is to recover $\mathcal{T}$. That is, knowing the elements $\mathcal{T}_{ij\ell}$ for $(i,j,\ell) \in \Omega$, we want to estimate the elements outside of $\Omega$ as accurately as possible.

  Let $\mathcal{P}_{\Omega}(\cdot)$ denote the projection of a tensor onto the observed set $\Omega$, such that
  $$[\mathcal{P}_{\Omega}(\mathcal{T})]_{ij\ell} =\left\{
  \begin{aligned}
  \mathcal{T}_{ij\ell}, &~~\text{if}~(i,j,\ell) \in \Omega, \\
  0,~~&~~\text{otherwise}. \\
  \end{aligned}
  \right.
  $$
  Since $\mathcal{T}$ is known to be a low-tubal-rank tensor and the estimated $\hat{\mathcal{T}}$ should be close to $\mathcal{T}$ on the observation set $\Omega$, the {\em low-tubal-rank tensor completion problem} is formulated as the following optimization problem:
  \begin{equation}\label{problem_formulation}
  \begin{split}
  \widehat{\mathcal{T}} = &\argmin\limits_{\mathcal{Z} \in \mathbb{R}^{m \times n \times k}}~|| \mathcal{P}_{\Omega}(\mathcal{Z}) - \mathcal{P}_{\Omega}(\mathcal{T}) ||_F\\
  &{\rm s.t.}~~\text{rank}(\mathcal{Z}) \leq r,
  \end{split}
  \end{equation}
  where $\mathcal{Z} \in \mathbb{R}^{m \times n \times k} $ is the decision variable, and the function $\text{rank}(\cdot)$ refers to the tensor tubal-rank. Note that the noisy case is inherently included since the tensor least squares minimization deals with noise.
  Problem (\ref{problem_formulation}) is NP-hard since the function $\text{rank}(\cdot)$ induces combinatorial complexity and existing works \cite{Shuchin2014CVPR,Shuchin2015} seek to relax the rank function to its convex surrogate, namely, the tensor-nuclear norm. In \cite{Shuchin2015}, it was shown that tensor-nuclear norm minimization results in exact recovery under random sampling if the tensors satisfy certain tensor incoherence conditions (\ref{tensor_incoherency}).



   However, the computational cost of the algorithm in \cite{Shuchin2015} is relatively high due to two key factors: 1) each iteration requires computing SVD for large block diagonal matrices; and 2) the iterations are jointly carried out in both time and frequency domains, thus involving frequent and large number of Fourier and inverse Fourier transforms. Therefore, here we will propose an alternating minimization algorithm for solving (\ref{problem_formulation}), inspired by the alternating minimization approach's empirical and theoretical successes in low-rank matrix completion \cite{Jain2013STOC,Hardt2014COLT,Hard2014FOCS}. 


\subsection{The Alternating Minimization Algorithm for Low-tubal-rank Tensor Completion}

  We decompose the target tensor $\widehat{\mathcal{T}} \in \mathbb{R}^{m \times n \times k}$ as  $\widehat{\mathcal{T}} = \mathcal{X} * \mathcal{Y}^\dagger$, $\mathcal{X} \in \mathbb{R}^{m \times r \times k}$, $\mathcal{Y} \in \mathbb{R}^{n \times r \times k}$, and $r$ is the target tubal-rank. With this decomposition, the problem (\ref{problem_formulation}) reduces to

  \begin{equation}\label{problem_approximate}
  \widehat{\mathcal{T}} = \argmin\limits_{\mathcal{X} \in \mathbb{R}^{m \times r \times k},~ \mathcal{Y} \in \mathbb{R}^{n \times r \times k}} || \mathcal{P}_{\Omega}(\mathcal{T}) - \mathcal{P}_{\Omega} (\mathcal{X} * \mathcal{Y}^\dagger)||_F^2,
  \end{equation}
  which finds a target tensor  $\widehat{\mathcal{T}} = \mathcal{X} * \mathcal{Y}^\dagger$. According to Lemma \ref{best_r_rank}, we know that (\ref{problem_approximate}) is equivalent to the original problems  (\ref{problem_formulation}), if there exists a unique tubal-rank-$r$ tensor. 

\begin{algorithm}[t]
  \caption{Alternating Minimization: $\text{Tubal-Alt-Min}(\mathcal{P}_{\Omega}(\mathcal{T}), \Omega, L, r, \epsilon, \mu_0)$}
  \label{alg_AM}
  \begin{algorithmic}
  \STATE \textbf{Input}: Observation set $\Omega \in [m] \times [n] \times [k]$ and the corresponding elements $\mathcal{P}_{\Omega}(\mathcal{T})$, number of iterations $L$, target tubal-rank $r$, parameter $\epsilon > 0$, coherence parameter $\mu_0$.
  \STATE 1:~~~~~~~$(\Omega_0, ~\Omega_+) \leftarrow \text{Split}(\Omega, 2)$,
  \STATE 2:~~~~~~~$(\Omega_1, ...,~\Omega_L) \leftarrow \text{Split}(\Omega_+, L)$,
  \STATE 3:~~~~~~~$\mathcal{X}_0 \leftarrow \text{Initialize}(\mathcal{P}_{\Omega_0}(\mathcal{T}), \Omega_0, r, \mu_0)$,
  \STATE 4:~~~~~~~For $\ell = 1$ to $L$
  \STATE 5:~~~~~~~~~~~~~~ $\mathcal{Y}_{\ell} \leftarrow \text{MedianLS-Y}(\mathcal{P}_{\Omega_{\ell}}(\mathcal{T}), \Omega_{\ell}, \mathcal{X}_{\ell -1}, r)$,
  \STATE 6:~~~~~~~~~~~~~~ $\mathcal{Y}_{\ell} \leftarrow \text{SmoothQR}(\mathcal{Y}_{\ell}, \epsilon, \mu_0)$,
  \STATE 7:~~~~~~~~~~~~~~$\mathcal{X}_{\ell} \leftarrow \text{MedianLS-X}(\mathcal{P}_{\Omega_{\ell}}(\mathcal{T}), \Omega_{\ell}, \mathcal{Y}_{\ell}, r)$,
  \STATE 8:~~~~~~~~~~~~~~$\mathcal{X}_{\ell} \leftarrow \text{SmoothQR}(\mathcal{X}_{\ell}, \epsilon, \mu_0)$,
  \STATE \textbf{Output}: Tensor pair $(\mathcal{X}_{L}, \mathcal{Y}_{L})$.
  \end{algorithmic}
  \vspace{-2pt}
  \end{algorithm}

For an alternating minimization algorithm, there are two key steps. The first is the initialization. The second is to alternate between finding the best $\mathcal{X}$ and the best $\mathcal{Y}$ given the other. Each alternating optimization in isolation is essentially a tensor least squares update which is convex and tractable. For the analysis of the algorithm, we propose a variant of the alternating minimization algorithm, which is a {\em smoothed} alternating minimization, as shown in Alg. \ref{alg_AM}. This algorithm has the following framework: 1) randomizing $\mathcal{X}_0$ (line 3) as the initial input for the iterative loop (line 4-8); 2) fixing $\mathcal{X}_{\ell -1}$ and optimizing $\mathcal{Y}_{\ell}$ (line 5-6); and 3)  fixing $\mathcal{Y}_{\ell}$ and optimizing $\mathcal{X}_{\ell}$ (line 7-8). Different from the standard alternating minimization that alternatively performs a least squares minimization by fixing one and another, Alg. \ref{alg_AM} introduces a median operation over the least squares minimization, and a SmoothQR process. This median operation enables us to get tighter concentration bounds in the proof, while the SmoothQR process guarantees the incoherence of $\mathcal{X}_{\ell}$ and $\mathcal{Y}_{\ell}$ along the iterations.
%


  The general flow of Alg. \ref{alg_AM} is as follows.
  \begin{itemize}
  \item Line 1-2: Throughout the algorithm, the samples in $\Omega$ are utilized for two purposes: to initialize $\mathcal{X}_0$ as a ``good" starting point, and to update $\mathcal{X}$ and $\mathcal{Y}$ in the iterations. The Split function first splits $\Omega$ into two same-sized subsets $\Omega_0, \Omega_+$ and then splits $\Omega_+$ into $L$ subsets of roughly equal size, while preserving the distributional assumption that our theorem uses.
  \item Line 3: Using the samples in $\Omega_0$, the Intialize procedure generates a good starting point that is relatively close to the optimal solution. This is required for analysis purpose, since from this starting point we are able to prove convergence.
  \item Line 4-8: The MedianLS-minimization procedure relies on the {\em tensor least squares minimization} as described in Section \ref{subsect:LS_mini}. The median process is introduced to derive concentration bounds, while the SmoothQR function guarantees that in each iteration $\mathcal{X}_{\ell}$ and $\mathcal{Y}_{\ell}$ satisfy the tensor incoherence condition defined in (\ref{tensor_incoherency}). For general tensors, each iteration includes two consecutive MedianLS minimizations for $\mathcal{X}$ and $\mathcal{Y}$, respectively. Note that $\mathcal{X}$ and $\mathcal{Y}$ need to be treated differently because of the t-product, which is not the case for the matrix completion \cite{Jain2013STOC,Hardt2014COLT,Hard2014FOCS}.
  \end{itemize}

  In Section \ref{analysis}, we will prove that separating the nonconvex minimization (\ref{problem_approximate}) into two consecutive convex minimization subproblmes will also yield the optimal solution of (\ref{problem_formulation}).

  \begin{remark}
  The framework of Alg. \ref{alg_AM} is extended from a similar approach analyzed in \cite{Hard2014FOCS} for matrix completion. Note that Alg. \ref{alg_AM} differs from that of \cite{Hard2014FOCS} in three major aspects: 1) the low-tubal-rank tensor completion problem is essentially different from the matrix completion problem as shown in  Section \ref{whydifferent}, which indicates that matricizing a tensor that leads to a matrix completion problem is not appropriate for (\ref{problem_formulation}); 2) the key routines of Alg. \ref{alg_AM} in Section \ref{key_routines} has new forms, namely, the Initialize procedure in Alg. \ref{alg_initialization} and the tensor least squares minimization in Alg. \ref{alg_LS_update}; and 3) the implementation of {\em tensor least squares minimization} is newly proposed in Section \ref{subsect:LS_mini}.
  \end{remark}

\subsection{Key Routines}
\label{key_routines}

  The key routines of Alg. \ref{alg_AM} include the Split function, the Initialize procedure in Alg. \ref{alg_initialization}, the median least squares minimization in Alg. \ref{alg_median_LS_update}, the tensor least squares minimization in Alg. \ref{alg_LS_update}, and the SmoothQR factorization in Alg. \ref{alg_smoothQR}. In the following, we describe each one in detail.

  \subsubsection{Splitting the Samples}

  The procedure $\text{Split}(\Omega, t)$ takes the sample set $\Omega$ and splits it into $t$ independent subsets $\Omega_1,...,\Omega_t$ that preserve the uniform distribution assumption, e.g., each elment of $\Omega$ belongs to one of the $t$ subsets by sampling with replacement. This Split function is the same as that for the matrix case \cite{Jain2013STOC,Hardt2014COLT,Hard2014FOCS} since essentially they are both set operations.

  \subsubsection{Finding a Good Starting Point}

  \begin{algorithm}[t]
  \caption{Initialization Algorithm: $\text{Initialize}(\mathcal{P}_{\Omega_0}, \Omega_0, r, \mu)$}.
  \label{alg_initialization}
  \begin{algorithmic}
  \STATE \textbf{Input}: observation set $\Omega_0 \in [m] \times [n] \times [k]$ and elements $\mathcal{P}_{\Omega_0}(\mathcal{T})$, target dimension $r$, coherence parameter $\mu \in \mathbb{R}$.
  \STATE ~~~~~~~Compute the first $r$ eigenslices $\mathcal{A} \in \mathbb{R}^{n \times r \times k}$ of $\mathcal{P}_{\Omega_0}(\mathcal{T})$,\\
  \STATE ~~~~~~~$\mathcal{Z} \leftarrow \mathcal{A} * \mathcal{O}$ where $\mathcal{O} \in \mathbb{R}^{r \times r \times k}$ is a random orthonormal tensor,
  \STATE ~~~~~~~$\mathcal{Z}' \leftarrow \text{Truncate}_{\mu'}(\mathcal{Z})$ with $\mu'= \sqrt{8\mu \log n / n}$, where $\text{Truncate}_{\mu'}$ scales tubes $\mathcal{Z}(i,j,:)$ with $||\mathcal{Z}(i,j,:)||_F > \mu'$ by $\mathcal{Z}(i,j,:)/\mu'$,\\
  \STATE ~~~~~~~$\mathcal{X}_0 \leftarrow \text{QR}(\mathcal{Z}')$, where $QR(\cdot)$ is the standard QR factorization that returns an orthogonal subspace.
  \STATE \textbf{Output}: Orthonormal tensor $\mathcal{X}_0 \in \mathbb{R}^{n \times r \times k}$.
  \end{algorithmic}
  \vspace{-2pt}
  \end{algorithm}

   Alg. \ref{alg_initialization} describes the procedure for finding a good starting point. Since it is unclear how well the least squares minimization will converge from a random initial tensor, we start with an initial tensor that has bounded distance from the optimal result as shown in Appendix \ref{sec:initialization}. The algorithm serves as a fast initialization procedure for our main algorithm. It computes the top-$r$ eigenslices of $\mathcal{P}_{\Omega}(\mathcal{T})$, and truncates them in order to ensure incoherence.  Note that the truncation for our tensor problem scales the coefficients of a tube, which is different from the matrix case \cite{Hard2014FOCS} that truncates an element. We use a random orthonormal transformation to spread out the tubes of the eigenslices before truncation.

 \subsubsection{Tensor Least Squares Minimization}

   We describe the MedianLS iteration of $\mathcal{Y}$. Although the iteration for $\mathcal{X}$ is different from $\mathcal{Y}$, essentially it can be computed in a similar way as shown in Section \ref{subsect:LS_mini}. Each MedianLS minimization relies on the basic tensor least squares minimization. Partition $\Omega_+$ into $t = O(\log n)$ subsets, then performing the least squares minimization on each subset and then take the median of the returned tensors.  The median operation is performed in an element-wise manner.

  \begin{algorithm}[t]
  \caption{Median Least Squares:   $\text{MedianLS-Y}(\mathcal{P}_{\Omega}(\mathcal{T}), \Omega, \mathcal{X}, r)$}
  \label{alg_median_LS_update}
  \begin{algorithmic}
  \STATE \textbf{Input}: target tubal-rank $r$, observation set $\Omega \in [m] \times [n] \times [k]$ and elements $\mathcal{P}_{\Omega}(\mathcal{T})$, orthonormal tensor $\mathcal{X} \in \mathbb{R}^{m \times r \times k}$.
  \STATE ~~~~~~~$(\Omega_1,...,\Omega_t) \leftarrow \text{Split}(\Omega,t)~\text{for}~t=3\log_2 n$,
  \STATE ~~~~~~~$\mathcal{Y}_i = \text{LS}(\mathcal{P}_{\Omega_i}(\mathcal{T}), \Omega_i, \mathcal{X}, r)$ for $i \in [t]$,
  \STATE \textbf{Output}: $\text{median}(\mathcal{Y}_1,...,\mathcal{Y}_t)$.
  \end{algorithmic}
  \vspace{-2pt}
\end{algorithm}

\begin{algorithm}[t]
  \caption{Tensor Least Squares Minimization: $\text{LS}(\mathcal{P}_{\Omega}(\mathcal{T}), \Omega, \mathcal{X}, r)$}
  \label{alg_LS_update}
  \begin{algorithmic}
  \STATE \textbf{Input}: target dimension $r$, observation set $\Omega \in [m] \times [n] \times [k]$ and elements $\mathcal{P}_{\Omega}(\mathcal{T})$, orthonormal tensor $\mathcal{X} \in \mathbb{R}^{m \times r \times k}$.
  \STATE ~~~~~~~$ \mathcal{Y} = \argmin_{\mathcal{Y} \in \mathbb{R}^{n \times r \times k} } ||\mathcal{P}_{\Omega} (\mathcal{T} - \mathcal{X} * \mathcal{Y}^\dagger )||_F^2 $,
  \STATE \textbf{Output}: $\mathcal{Y}$.
  \end{algorithmic}
  \vspace{-2pt}
\end{algorithm}

 \subsubsection{Smooth QR}

  For the main theorem to hold, it is required that each iterates $\mathcal{X}_\ell$ and $\mathcal{Y}_\ell$ have coherence less than $\mu$. To achieve this, we adopt the smooth operation, as shown in Alg. \ref{alg_smoothQR}. Note the $\text{QR}(\cdot)$ and $\text{GS}(\cdot)$ operation for third-order tensors are defined in \cite{Kilmer2013}. The $\text{QR}(\cdot)$ process returns the orthogonal projector while $\text{GS}(\cdot)$ make it to be orthonormal tensor-column space. The Gaussian perturbation $\mathcal{H}_{\ell}$ is added to $\mathcal{Y}_{\ell}$ to ensure small coherence. In our context, the tensor $\mathcal{X}_\ell$ and $\mathcal{Y}_\ell$ are the outcomes of a noisy operation $\mathcal{X}_\ell = \mathcal{A}* \mathcal{Y}_{\ell - 1} + \mathcal{G}_{\ell}$ (in Appendix \ref{sec:noisy_subspace_iteration}), and so there is no harm in actually adding a Gaussian noise tensor $\mathcal{H}_{\ell}$ to $\mathcal{X}_\ell$ provided that the norm of that tensor is no larger than that of $\mathcal{G}_{\ell}$.

  \begin{algorithm}[t]
  \caption{Smooth QR factorization: $\text{SmoothQR}(\mathcal{Y}, \epsilon, \mu)$}
  \label{alg_smoothQR}
  \begin{algorithmic}
  \STATE \textbf{Input}: $\mathcal{Y} \in \mathbb{R}^{n \times r \times k}$, parameters $\mu, \epsilon > 0$
  \STATE ~~~~~~~$\mathcal{Z} \leftarrow \text{QR}(\mathcal{Y}), \mathcal{H} \leftarrow 0, \sigma \leftarrow \epsilon ||\mathcal{Y}|| / n$,
  \STATE ~~~~~~~While $\mu(\mathcal{Z}) > \mu$ and $\sigma \leq ||\mathcal{Y}||$
  \STATE ~~~~~~~~~~~$\mathcal{Z} \leftarrow \text{GS}(\mathcal{Y} + \mathcal{H})$ where $\mathcal{H} \backsim \text{N}(0, \sigma^2 / n)$,
  \STATE ~~~~~~~~~~~$\sigma \leftarrow 2 \sigma$,
  \STATE \textbf{Output}: $\mathcal{Z}$.
  \end{algorithmic}
  \vspace{-2pt}
  \end{algorithm}


\subsection{Implementation of Tensor Least Squares Minimization}\label{subsect:LS_mini}

  In this section, we describe the detailed steps to solve the tensor least squares minimization problem in Alg. \ref{alg_LS_update}:
  $ \widehat{\mathcal{Y}} = \argmin\limits_{\mathcal{Y} \in \mathbb{R}^{n \times r \times k} } ||\mathcal{P}_{\Omega} (\mathcal{T} - \mathcal{X} * \mathcal{Y}^\dagger )||_F^2 $. To simplify the description, we define three new products as follows.
  \begin{definition}\label{def:new_products}
  \textbf{Element-wise tensor product, tube-wise circular convolution, frontal-slice-wise tensor product}. The element-wise tensor product $\mathcal{T} = \mathcal{T}_1 \odot \mathcal{T}_2$ operates on two same-sized tensors and results in a same-size tensor, i.e., $T(i,j,\kappa) = \mathcal{T}_1(i,j,\kappa) \mathcal{T}_{2}(i,j,\kappa)$ for $\mathcal{T}, \mathcal{T}_1, \mathcal{T}_2 \in \mathbb{R}^{m \times n \times k}$. The tube-wise circular convolution  $\mathcal{T} = \mathcal{T}_1 \cdot\otimes \mathcal{T}_2$ operates on two same-sized tensors and results in a same-size tensor, i.e., $\mathcal{T}(i,j,:) = \mathcal{T}_1(i,j,:) * \mathcal{T}_2(i,j,:)$ for $\mathcal{T}, \mathcal{T}_1, \mathcal{T}_2 \in \mathbb{R}^{m \times n \times k}$. The frontal-slice-wise tensor product $\mathcal{T} = \mathcal{T}_1 \cdot\S \mathcal{T}_2$ performs matrix multiplication on the corresponding frontal slices of two tensors, i.e., $\mathcal{T}(:,:,\kappa) = \mathcal{T}_1(:,:,\kappa) \mathcal{T}_2(:,:,\kappa)$ for $\mathcal{T} \in \mathbb{R}^{m \times m \times k}, \mathcal{T}_1 \in \mathbb{R}^{m \times n \times k}, \mathcal{T}_2 \in \mathbb{R}^{n \times n \times k}$.
  \end{definition}

  Here, the frontal-slice-wise tensor product is introduced to have a concise third-order tensor representation of Remark \ref{remark:computing_tensor_product}, avoiding the block diagonal form representations as in Definition \ref{def:block-diagonal}. The operator $\cdot \S$ is introduced to replace the block diagonal matrix in \cite{Shuchin2015}\cite{Shuchin2014CVPR} since we want to preserve the three-way data array structure.

  For simplicity, denote $\mathcal{T}_{\Omega} = \mathcal{P}_{\Omega}(\mathcal{T})$, then we have $\mathcal{T}_{\Omega}  = \mathcal{P}_{\Omega} \odot \mathcal{T}$. According to the {\em Convolution Theorem}, we can transform the least squares minimization to the following frequency domain version:
  \begin{equation}\label{LS_transform}
  \widehat{\mathcal{Y}} = \argmin_{\widetilde{\mathcal{Y}} \in \mathbb{R}^{r \times n \times k}} ||\widetilde{\mathcal{T}}_{\Omega} - \widetilde{\mathcal{P}}_{\Omega} \cdot\otimes (\widetilde{\mathcal{X}}~\cdot\S~ \widetilde{\mathcal{Y}}) ||_F^2.
  \end{equation}
  We first transform (\ref{LS_transform}) into $n$ separate standard least squares minimization subproblems:
  \begin{equation}\label{LS_transform_sub}
  \widehat{\mathcal{Y}}(:,j,:) = \argmin_{\widetilde{\mathcal{Y}}(:,j,:) \in \mathbb{R}^{r \times 1 \times k}} ||\widetilde{\mathcal{T}}_{\Omega}(:,j,:) - \widetilde{\mathcal{P}}_{\Omega}(:,j,:) \cdot\otimes (\widetilde{\mathcal{X}}~\cdot\S~ \widetilde{\mathcal{Y}}(:,j,:)) ||_F^2,
  \end{equation}
  where each subproblem corresponds to estimating a lateral slice $\widetilde{\mathcal{Y}}(:,j,:),~j \in [n]$. One can solve it by performing the following steps.
\begin{enumerate}
\item A lateral slice, $\widetilde{\mathcal{T}}_{\Omega}(:,j,:)$ of size $n \times 1 \times k$, is squeezed into a vector $b$ of size $nk \times 1$ in the following way:
  \begin{equation}
   b= [ \text{squeeze}( \widetilde{\mathcal{T}}_{\Omega}(1,j,:)); \text{squeeze}( \widetilde{\mathcal{T}}_{\Omega}(2,j,:)); ...; \text{squeeze}( \widetilde{\mathcal{T}}_{\Omega}(n,j,:))],
  \end{equation}
  where $\text{squeeze}( \widetilde{\mathcal{T}}_{\Omega}(i,j,:))$ squeezes the $i$-th tube of the $j$-th lateral slice of $\widetilde{\mathcal{T}}_{\Omega}$ into a vector of size $k \times 1$. Similarly when estimating $\mathcal{Y}$, $\widetilde{\mathcal{Y}}_\Omega(:,j,:)$ is transformed into a vector $\textbf{x}$ of size $rk \times 1$:
  \begin{equation}
  \textbf{x}= [ \text{squeeze}( \widetilde{\mathcal{Y}}_{\Omega}(1,j,:)); \text{squeeze}( \widetilde{\mathcal{Y}}_{\Omega}(2,j,:)); ...; \text{squeeze}( \widetilde{\mathcal{Y}}_{\Omega}(n,j,:))];.
  \end{equation}
\item  $\widetilde{\mathcal{X}}$ is transformed into a block diagonal matrix of size $nk \times rk$,

    \begin{equation}
    A_1 =\left[
    \begin{array}{cccc}
    \widetilde{\mathcal{X}}(:,:,1)&  &  &   \\
  & \widetilde{\mathcal{X}}(:,:,2) &  &  \\
     &   & ... &  \\
     & &   & \widetilde{\mathcal{X}}(:,:,k)\\
    \end{array}
    \right].
    \end{equation}

\item The $j$-th lateral slice $\widetilde{\mathcal{P}}_{\Omega}(:,j,:)$ is transformed into a tensor $\mathcal{A}_2$ of size $k \times k \times n $ first, and then into a matrix $A_3$ of size $nk \times nk$.

  \begin{equation}
  \mathcal{A}_2(:,:,\ell) = \text{circ}(\widetilde{\mathcal{P}}_{\Omega}(\ell,j,:)),~\ell \in [n],
  \end{equation}
  \begin{equation}
  A_3 = \left[
    \begin{array}{cccc}
    \text{diag}(\mathcal{A}_2(1,1,:))& \text{diag}(\mathcal{A}_2(1,2,:)) & \hdots & \text{diag}(\mathcal{A}_2(1,k,:)) \\
    \text{diag}(\mathcal{A}_2(2,1,:)) & \text{diag}(\mathcal{A}_2(2,2,:)) & \hdots & \vdots\\
    \vdots & \vdots & \vdots & \vdots\\
    \text{diag}(\mathcal{A}_2(k,1,:)) & \hdots & \hdots & \text{diag}(\mathcal{A}_2(k,k,:))\\
    \end{array}
    \right],
  \end{equation}
  where the operator $\text{diag}(\cdot)$ transform a tube into a diagonal matrix by putting the elements in the diagonal.
\end{enumerate}

  Therefore, the problem in (\ref{LS_transform_sub}) becomes the following standard LS problem:
  \begin{equation}\label{eq:standart_ls}
  \widehat{\textbf{x}} = \argmin_{\textbf{x} \in \mathbb{R}^{rk \times 1}} || b - A_3 A_1 \textbf{x}||_F^2.
  \end{equation}

  \noindent Similarly, we can estimate $\widehat{\mathcal{X}}$, using similar steps as that for $\widehat{\mathcal{Y}}$:
  \begin{equation}
  \widehat{\mathcal{X}} = \argmin_{\mathcal{X} \in \mathbb{R}^{n \times r \times k}} ||\widetilde{\mathcal{T}}_{\Omega}^T - \widetilde{\mathcal{P}}_{\Omega}^T \cdot\otimes (\widetilde{\mathcal{Y}}^T ~\cdot\S~  \widetilde{\mathcal{X}}^T ) ||_F^2,
  \end{equation}
  where $\mathcal{X}^T$ denotes the tube-wise transpose (the transpose of a matrix of vectors).


\section{Performance of the Proposed Algorithm}\label{analysis}

  We first describe a counter example to show that the low-tubal-rank tensor completion problem is essentially different from the conventional matrix completion problem. Then, we present the analytical results for the performance guarantees.

\subsection{Why is Low-tubal-rank Tensor Completion Different from Matrix Completion}\label{whydifferent}

   One would naturally ask if tensor completion is in essence equivalent to matrix completion. It appears to be true but in fact wrong. Therefore, a tensor completion problem should be treated differently from a matrix completion problem. On one hand, in Section \ref{sect:circular_algebra} we introduce the operation $\text{circ}(\cdot)$ to establish a mapping between tensor product and matrix multiplication. However, such a mapping is injective, and we use it for easier understanding and it does not mean equivalence. On the other hand, in Appendix \ref{LS_to_NSI} we express the least squares update step as $\mathcal{Y} = \mathcal{T} * \mathcal{X} + \mathcal{G}$ and then transform tensor operations to matrix operations on the corresponding circular matrices. Since the mapping is injective, this transformation is for analysis purpose, which holds only if the operations in the tensor forms hold. Still it is a necessary condition (not sufficient condition) resulting from the injective mapping between tensor product and its circular matrix multiplication.

   The projection $\mathcal{P}_{\Omega}(\mathcal{T})$ can be viewed as an element-wise multiplication between two third-order tensors, i.e., $\mathcal{P}_{\Omega}(\mathcal{T}) =  \mathcal{P}_{\Omega} \odot \mathcal{T}$.
   We define a corresponding projection $\mathcal{P}_{\Omega'}(\cdot)$ for $T^c$ as follows: $\Omega' = \text{circ}(\Omega), P_{\Omega'} = \text{circ}(\mathcal{P}_{\Omega})$, i.e., $\mathcal{P}_{\Omega'}(T^c) = P_{\Omega'} \odot T^c$.

   For the least squares update $\mathcal{Y} = \argmin_{\mathcal{Y} \in \mathbb{R}^{n \times r \times k} } ||\mathcal{P}_{\Omega} (\mathcal{T} - \mathcal{X} * \mathcal{Y}^\dagger)||_F^2$, its circular form would be $Y = \argmin_{Y \in \mathbb{R}^{nk \times rk }} \frac{1}{\sqrt{k}}||\mathcal{P}_{\Omega'} (T^{c} - X^{c}Y^\dagger )||_F^2$. If $Y$ is circular we can transform $Y$ back to a tensor, then these two problems are equivalent. Therefore, the original question becomes the following one: will this circular least squares minimization output a circular matrix $Y$?
   In the following, we give a negative answer by presenting a counter example.

   Assume that $\Omega = [n] \times [n] \times [k]$, then the circular least squares minimization is equivalent to the following optimization problem:
   \begin{equation}\label{circular_LS}
   \min_{Y}~~ ||G||_F^2, ~~~~{\rm s.t.}~~ \mathcal{P}_{\Omega'}(T^{c}) = \mathcal{P}_{\Omega'}(X^{c} Y + G),
   \end{equation}
   where $G$ is a noise term. Without loss of generality, considering the following simple example with $\Omega$ being the whole set:
   \begin{equation} \label{circular_LS_example}
   \min~~ ||G||_F^2,~~~~
   {\rm s.t.}~~
   \begin{bmatrix}
   T_1 & T_2 \\
   T_2 & T_1 \\
   \end{bmatrix} =
    \begin{bmatrix}
   X_1 & X_2 \\
   X_2 & X_1 \\
   \end{bmatrix}
   \begin{bmatrix}
   Y_{11} & Y_{21} \\
   Y_{12} & Y_{22} \\
   \end{bmatrix} +
   \begin{bmatrix}
   G_{11} & G_{12} \\
   G_{21} & G_{22} \\
   \end{bmatrix},
   \end{equation}
   where $\mathcal{T}, \mathcal{X} \in \mathbb{R}^{1 \times 1 \times 2}$ with $T^{c}, X^{c} \in \mathbb{R}^{2 \times 2}$, and $Y, G \in \mathbb{R}^{2 \times 2}$.
   The constraint in (\ref{circular_LS_example}) can be transformed to the following four linear equations:
   \begin{align}
   T_1 = & X_1 Y_{11} + X_2 Y_{12} + G_{11} \label{eqn_t1},\\
   T_1 = & X_1 Y_{22} + X_2 Y_{21} + G_{22} \label{eqn_t2},\\
   T_2 = & X_1 Y_{21} + X_2 Y_{22} + G_{12} \label{eqn_t3},\\
   T_2 = & X_1 Y_{12} + X_2 Y_{11} + G_{21} \label{eqn_t4}.
   \end{align}

   Considering the first two equations (\ref{eqn_t1}) and (\ref{eqn_t2}), if $G_{11} = G_{22}$ (and also $G_{12} = G_{21}$ in (\ref{eqn_t3}) and (\ref{eqn_t4})), then the solution $Y$ is a circular matrix. Given $G_{11} = - G_{22}$ that ensuring $G_{11}^2 = G_{22}^2$,  then the solution $Y$ is not circular matrix. Therefore, the problem in (\ref{circular_LS}) does not guarantee to output a circular matrix.

\subsection{Sampling Complexity and Computational Complexity}\label{subsection:recovery_error}

   \subsubsection{Recovery Error}

   We assume that the unknown noisy tensor $\mathcal{T} \in \mathbb{R}^{m \times n \times k}$ is approximately low-tubal-rank in Frobenius norm. According to Lemma \ref{best_r_rank}, $\mathcal{T}$ has the form $\mathcal{T} = \mathcal{M} + \mathcal{N}$ where $\mathcal{M} = \mathcal{U} * \Theta * \mathcal{V}^\dagger$ has tubal-rank $r$ and eigentubes $\Theta(1,1,:), \Theta(2,3,:), ..., \Theta(r,r,:)$ such that $||\Theta(1,1,:)||_F \geq ||\Theta(2,2,:)||_F \geq ...\geq ||\Theta(r,r,:)||_F$, $\mathcal{U} \in \mathbb{R}^{m \times r \times k}, \mathcal{V} \in \mathbb{R}^{n \times r \times k}, \Theta \in \mathbb{R}^{r \times r \times k}$, and $\mathcal{N} = (\mathcal{I} - \mathcal{U}*\mathcal{U}^\dagger) * \mathcal{T} $ is the part of $\mathcal{T}$ not captured by the tensor-column subspace $\mathcal{U}$ and the dominant eigentubes. Here, we assume that $\mathcal{N}$ can be an arbitrary deterministic tensor that satisfies the following constraints:
   \begin{equation}\label{condition_N}
   \max_{i \in [n]} ||\dot{e}_i^\dagger * \mathcal{N} ||_F^2 \leq \frac{\mu_N}{n} \overline{\sigma}_{rk}^2, ~~\text{and}~~\max_{i,j, \kappa} |\mathcal{N}_{ij\kappa}| \leq \frac{\mu_N}{\max(m,n)}||\mathcal{T}||_F,
   \end{equation}
   where $\dot{e}_i$ denotes the $i$-th tensor-column basis so that $|| \dot{e}_i^\dagger * \mathcal{N}||_F$ is the Frobenius norm of the $i$-th horizontal slice of $\mathcal{N}$, and  $\overline{\sigma}_{rk}$ is the $rk$-th singular value of the block diagonal matrix $\overline{\mathcal{T}}$. The block diagonal matrix $\overline{\mathcal{T}}$ has (approximately) tubal-rank $rk$ and we denote those $rk$ singular values as $\overline{\sigma}_1, \overline{\sigma}_2,...,\overline{\sigma}_{rk}$ such that $\overline{\sigma}_1 \geq \overline{\sigma}_2 \geq ... \geq \overline{\sigma}_{rk}$, and $\overline{\sigma}_{rk+1} > 0$, where $\overline{\sigma}_{rk}$ is the smallest singular value of $\overline{\mathcal{M}}$ and  $\overline{\sigma}_{rk+1}$ is the largest singular value of $\overline{\mathcal{N}}$. These constraints state that no element and no horizontal slice of $\mathcal{N}$ should be too large compared with the Frobenius norm of $\mathcal{N}$. One can think of the parameter $\mu_N$ as an analog to the coherence parameter $\mu(\mathcal{U})$ in (\ref{tensor_incoherency}).

   Let $\mu^* = \max\{\mu(\mathcal{U}), \mu_N, \log k\max(m,n) \}$ and $\gamma_{rk} = 1 - \overline{\sigma}_{rk + 1}/ \overline{\sigma}_{rk}$, then we have the following theorem.

   \begin{theorem}\label{theorem:noisy_case}
   Given a sample set $\Omega$ of size $O(pmnk)$ with each element randomly drawn from an unknown $m \times n \times k$ tensor $\mathcal{T} = \mathcal{M} + \mathcal{N}$, where $ \mathcal{M} = \mathcal{U} * \Theta * \mathcal{V}^\dagger$ has tubal-rank $r$ and $\mathcal{N} = (\mathcal{I} - \mathcal{U}*\mathcal{U}^\dagger)*\mathcal{T}$ satisfies condition (\ref{condition_N}), then with probability at least $1 - \Theta\left(\frac{\gamma_{rk}^{-1}\log(\max(m,n)/\epsilon)}{\max(m,n)^2}\right)$, Alg. \ref{alg_AM} outputs $(\widehat{\mathcal{X}}, \widehat{\mathcal{Y}})$ such that 
   $||\widehat{\mathcal{X}} * \widehat{\mathcal{Y}}^\dagger - \mathcal{M}||_F \leq \frac{3\epsilon}{2} ||\mathcal{T}||_F$, provided that
   \begin{equation}
     p = O\left(\frac{r^2 \mu^* \left( ||\mathcal{M}||_F^2 + ||\mathcal{N}||_F^2/\epsilon^2 \right) \log(\max(m,n)/\epsilon) \log^2 (k\max(m,n)) }{\gamma_{rk}^5 \overline{\sigma}_{rk}^2 \max(m,n)}\right).
   \end{equation}
   \end{theorem}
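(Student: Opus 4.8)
\emph{Proof proposal.} The plan is to recast the alternating least-squares iteration as a \emph{noisy subspace (power) iteration} on the block-diagonal matrix $\overline{\mathcal{T}}$, and then to run the now-standard noisy power method analysis adapted to the circulant algebra. Because the $\text{circ}(\cdot)$ map is injective and, by Lemma \ref{lemma:frob_equality}, isometric up to the factor $1/\sqrt{k}$, it suffices to track Frobenius-norm quantities in the Fourier/block-diagonal domain, where the t-product becomes ordinary matrix multiplication (Remark \ref{remark:computing_tensor_product}). In this domain the signal is the rank-$rk$ matrix $\overline{\mathcal{M}}$ whose spectral gap is encoded by $\gamma_{rk} = 1 - \overline{\sigma}_{rk+1}/\overline{\sigma}_{rk}$, and the target is to drive the principal angle between the iterate subspace $\text{span}(\overline{\mathcal{X}}_\ell)$ and the top-$rk$ singular subspace of $\overline{\mathcal{M}}$ below $\epsilon$.

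First I would show (the content flagged in the remarks preceding the theorem) that a single $\text{LS}$ solve followed by $\text{SmoothQR}$ realizes a perturbed power-iteration update — for the $\mathcal{Y}$-step, $\mathcal{Y}_\ell = \mathcal{T}^\dagger * \mathcal{X}_{\ell-1} + \mathcal{G}_\ell$, and symmetrically for the $\mathcal{X}$-step — for an explicit error tensor $\mathcal{G}_\ell$; equivalently, in the block-diagonal domain, $\overline{\mathcal{Y}}_\ell = \overline{\mathcal{T}}^\dagger\,\overline{\mathcal{X}}_{\ell-1} + \overline{\mathcal{G}}_\ell$ before re-orthonormalization. The heart of the per-iteration step is then the classical statement that if $\|\overline{\mathcal{G}}_\ell\|$ is a small enough fraction of $\gamma_{rk}\,\overline{\sigma}_{rk}$ relative to the current angle, one full sweep contracts the tangent of the principal angle by a factor $\approx (1-\gamma_{rk})$, while the deterministic tail $\mathcal{N}$ contributes an additive floor proportional to $\|\mathcal{N}\|_F/\overline{\sigma}_{rk}$. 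Chaining this inequality over $\ell = 1,\dots,L$ with $L = O(\gamma_{rk}^{-1}\log(\max(m,n)/\epsilon))$ yields the claimed bound $\|\widehat{\mathcal{X}} * \widehat{\mathcal{Y}}^\dagger - \mathcal{M}\|_F \leq \tfrac{3\epsilon}{2}\|\mathcal{T}\|_F$, where the constant $3/2$ absorbs the initialization slack and the noise floor.

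Two ingredients make the probabilistic chaining go through and dictate the stated $p$. The initialization (Alg. \ref{alg_initialization}) must be shown to produce $\mathcal{X}_0$ whose principal angle to the truth is bounded away from $\pi/2$; I would establish this by a matrix-Bernstein concentration for the spectral estimate $\mathcal{P}_{\Omega_0}(\mathcal{T})$ in the block-diagonal domain, the truncation step guaranteeing the incoherence needed for that bound. Along the iterations, $\text{SmoothQR}$ keeps the coherence of every iterate below $\mu$, so that each fresh, independent sample block $\Omega_\ell$ (produced by $\text{Split}$) yields an error $\overline{\mathcal{G}}_\ell$ that concentrates; the $\text{MedianLS}$ wrapper, taking the element-wise median over $t = O(\log n)$ independent least-squares solves, boosts a constant-probability deviation bound on $\|\overline{\mathcal{G}}_\ell\|$ into one that fails with only inverse-polynomial probability, which after a union bound over all $L$ iterations produces the success probability $1 - \Theta(\gamma_{rk}^{-1}\log(\max(m,n)/\epsilon)/\max(m,n)^2)$.

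The main obstacle will be the deviation bound on $\overline{\mathcal{G}}_\ell$ itself, because here the tensor problem genuinely departs from matrix completion: the circular-convolution operator is intertwined with the sub-sampling operator (the counterexample of Section \ref{whydifferent} shows that the naive circular least squares need not even return a circulant). Concretely, one must show that for an incoherent rank-$r$ iterate the sampled normal-equation operator associated with $\mathcal{P}_\Omega(\mathcal{X} * (\cdot)^\dagger)$ is, with high probability, well-conditioned on the relevant $rk$-dimensional subspace and that its deviation from expectation is $O(\gamma_{rk}\,\overline{\sigma}_{rk}\cdot\text{angle})$. I expect to prove this by expanding in the block-diagonal Fourier representation, writing $\overline{\mathcal{G}}_\ell$ as a sum of independent rank-one contributions over the sampled index set, and applying a matrix Bernstein inequality whose variance and maximum-norm parameters are controlled by the incoherence $\mu^*$ and by the per-slice structure of $A_3 A_1$ from Section \ref{subsect:LS_mini}. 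Carefully accounting for the coupling across the $k$ frontal slices, which inflates the effective dimension from $r$ to $rk$, is exactly what forces the factors $r^2\mu^*$, $\log^2(k\max(m,n))$, and the $\gamma_{rk}^{-5}$ in the sampling bound.
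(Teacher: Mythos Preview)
Your proposal matches the paper's architecture: the LS step is rewritten as a noisy subspace iteration $\mathcal{Y}_\ell = \mathcal{T} * \mathcal{X}_{\ell-1} + \mathcal{G}_\ell$, the initialization is controlled by a matrix-Bernstein bound, SmoothQR maintains incoherence, MedianLS amplifies a constant-probability per-row bound to inverse-polynomial failure, and $L = \Theta(\gamma_{rk}^{-1}\log(n/\epsilon))$ iterations are chained under an $\epsilon$-admissibility condition on $\|\mathcal{G}_\ell\|$. One detail you omit: the paper does not argue directly for rectangular tensors but first reduces to the symmetric square case via the standard dilation $\mathcal{T}' = \bigl[\begin{smallmatrix} 0 & \mathcal{T} \\ \mathcal{T}^\dagger & 0 \end{smallmatrix}\bigr]$ (Remark~\ref{rm_extd}) and proves everything for Theorem~\ref{main_theorm}.

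The one substantive divergence is in how $\mathcal{G}_\ell$ is bounded. You propose passing to the block-diagonal Fourier domain and writing $\overline{\mathcal{G}}_\ell$ as a Bernstein sum of rank-one terms; but $\mathcal{P}_\Omega$ does not diagonalize under the DFT, so each sampled entry couples all $k$ frontal slices and the variance parameters become awkward---this is exactly the convolution/sampling entanglement you flag as the obstacle, and attacking it head-on in the Fourier domain is harder than necessary. The paper instead bounds $\mathcal{G}$ in the \emph{circulant} matrix representation, where $\Omega' = \text{circ}(\Omega)$ is simply $k$ shifted replicas of $\Omega$: the LS optimality condition yields an explicit row-wise formula for $G^c$ (Lemmas~\ref{optimal_condition}--\ref{lemma:least_squares}), and Hardt's per-row deviation lemma for matrix completion applies verbatim with the substitution $\mu(X^c) = k\,\mu(\mathcal{X})$ (Lemma~\ref{lemma:probability_p}). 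The Fourier/block-diagonal picture is reserved only for the convergence step (Lemma~\ref{lemma:noisy_tensor}), where the t-product decouples cleanly into $k$ parallel matrix power iterations. Using each representation where the relevant operator is simple---circulant for sampling, Fourier for the t-product---is the paper's main technical device, and it lets the error bound be imported from \cite{Hard2014FOCS} rather than redone from scratch.
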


   \begin{proof}
   This theorem is a direct result of Theorem \ref{main_theorm}, and the detailed proof is presented in Appendix \ref{proof:theorem}. Here, we briefly describe the high-level proof structure for easy understanding.

   Initialization is analyzed in Appendix \ref{sec:initialization}. We prove in Lemma \ref{proof:initialization_p} that initializing $\mathcal{X}_0$ as the top-$r$ left orthogonal eigenslices of $\frac{1}{p} \mathcal{P}_{\Omega}(\mathcal{T})$ will result in bounded distance to the optimum tensor $\mathcal{T}$, i.e.,  $||( \mathcal{I} - \mathcal{U} * \mathcal{U}^\dagger) * \mathcal{X}_0|| \leq 1/4$ with probability at least $1 - 1/\max(m,n)^2$. This is necessary because the key step in Alg. \ref{alg_AM}, iterating the tensor least squares minimization, does not provide information about how well it converges from a random initial tensor.

  With the ``good" initialization, our analysis proceeds as follows:
  \begin{itemize}
  \item Express the least squares update step as an update step of the noisy tensor-column subspace iteration which has the form $\mathcal{Y} = \mathcal{T} * \mathcal{X} + \mathcal{G}$.
  \item Analyze the local convergence of  Alg. \ref{alg_LS_update} by proving that the noisy tensor-column subspace iteration will converge at an exponential rate in Appendix \ref{sec:noisy_subspace_iteration}.
  \item Prove that the Frobenius norm of the error term $\mathcal{G}$, i.e., $||\mathcal{G}||$ in spectrum norm, is bounded in Lemma \ref{lemma:bounded}.
  \end{itemize}

  The smoothed alternating tensor least squares minimization is analyzed in Appendix \ref{LS_to_NSI}. The key step of Alg. \ref{alg_AM} is a tensor least squares minimization $\mathcal{Y}_\ell = \argmin_{\mathcal{Y} \in \mathbb{R}^{n \times r \times k}} || \mathcal{P}_{\Omega} (\mathcal{T} - \mathcal{X}_{\ell-1} * \mathcal{Y}^\dagger) ||_F^2$.
  We will show that the solution to this has the form of $\mathcal{Y}_\ell = \mathcal{T} * \mathcal{X}_{\ell-1} + \mathcal{G}_\ell$. The error term $||\mathcal{G}_\ell||$ in spectrum norm depends on the quantity $||( \mathcal{I} - \mathcal{U} * \mathcal{U}^\dagger) * \mathcal{X}_{\ell-1}||$ which coincides with the sine of the largest principal angle between $\mathcal{U}$ and $\mathcal{X}_{\ell-1}$. As the algorithm converges, the spectrum norm of the error term diminishes.
  \end{proof}


   \begin{remark}
   The low-tubal-rank tensor completion problem generalizes the low-rank matrix completion problem. One can verify that the result in Theorem \ref{theorem:noisy_case} reduces to Theorem I.2 in \cite{Hard2014FOCS} when setting $k=1$.
   \end{remark}

   \begin{remark}
   As compared to the TNN-ADMM's \cite{Shuchin2015} sampling complexity of $O\left(\max(m,n) k r \log \max(m,n) \right)$, Tubal-Alt-Min requires an extra $r$ factor. Even though the theoretical results need an extra $r$ factor, the numerical results in Section \ref{sec:evaluation} depict lower error using fewer number of samples.
   \end{remark}

   \subsubsection{Computational Complexity}

   \begin{theorem}
   The Tubal-Alt-Min algorithm in Alg. \ref{alg_AM} has a computational complexity $O(mnrk^2 \log n \log(n/\epsilon))$ if $k \geq \frac{n}{r \log n \log(n/\epsilon)}$, and $O(mn^2k)$ otherwise.
   \end{theorem}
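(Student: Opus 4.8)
The plan is to bound the running time routine-by-routine, summing the costs of the initialization (line 3 of Alg.~\ref{alg_AM}) and of the $L$ iterations of the main loop (lines 4--8), and then to observe that the stated two-case bound is simply the statement that the total is $O(mnrk^2\log n\log(n/\epsilon) + mn^2 k)$, the two regimes corresponding to which of the two summands dominates. Indeed, the threshold $k \geq n/(r\log n\log(n/\epsilon))$ is exactly the inequality $mnrk^2\log n\log(n/\epsilon) \geq mn^2 k$, so establishing these two additive terms immediately yields the theorem.

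First I would fix the iteration budget. By the convergence analysis (Theorem~\ref{theorem:noisy_case}) the number of outer iterations needed for an $\epsilon$-accurate solution is $L = O(\log(n/\epsilon))$, and each outer iteration performs two median least-squares updates (MedianLS-Y and MedianLS-X) together with two SmoothQR calls. Each MedianLS (Alg.~\ref{alg_median_LS_update}) invokes the tensor least-squares routine (Alg.~\ref{alg_LS_update}) $t = O(\log n)$ times and then takes an element-wise median, the latter costing only $O(mnk\,t)$. Hence the dominant work is $O(\log n\,\log(n/\epsilon))$ calls to the basic tensor least-squares solve, and it remains to bound the cost of a single such solve and of the one-time initialization.

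For the cost of one tensor least-squares solve I would use the frequency-domain reduction of Section~\ref{subsect:LS_mini}. After an FFT along the third mode (cost $O(mnk\log k)$), problem~(\ref{LS_transform}) splits into $n$ independent subproblems~(\ref{LS_transform_sub}), one per lateral slice. The key point is that the design matrix $A_3 A_1$ of the resulting standard least-squares problem~(\ref{eq:standart_ls}) inherits strong structure: $A_1$ is block-diagonal across the $k$ frequencies and $A_3$ is sparse, being a grid of diagonal blocks arising from the circular-convolution coupling. Exploiting these, forming and solving each subproblem costs $O(mrk^2)$ rather than the naive dense cost, so a single tensor least-squares over all $n$ slices costs $O(mnrk^2)$, which dominates the $O(mnk\log k)$ FFT overhead. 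Each SmoothQR call (Alg.~\ref{alg_smoothQR}) performs per-frequency Gram--Schmidt on an $n\times r$ (or $m\times r$) factor, costing $O((m+n)r^2 k)$ up to the $O(\log(1/\epsilon))$ doublings of $\sigma$, which is lower order. Multiplying the per-solve cost by the number of solves gives the first term, $O(mnrk^2\log n\log(n/\epsilon))$.

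Finally, the initialization (Alg.~\ref{alg_initialization}) computes the top-$r$ eigenslices of $\mathcal{P}_{\Omega_0}(\mathcal{T})$ via the t-SVD (Alg.~\ref{alg:tSVD}), i.e. an FFT followed by a full SVD on each of the $k$ frontal slices, at cost $O(mn\min(m,n)k) = O(mn^2 k)$; the subsequent random rotation, truncation, and QR are lower order. Since this is executed once, it contributes the additive $O(mn^2 k)$ term, and summing the two establishes the bound. The main obstacle is the tight accounting of the single structured least-squares solve: one must verify that the block-diagonal/sparse structure of $A_1$ and $A_3$ is fully exploited so that the $rk$-variable normal equations are not solved at the naive $O(r^3 k^3)$ cost, and carefully confirm that all remaining contributions (FFTs, SmoothQR, medians, and the sample splitting) are genuinely dominated by the two reported terms.
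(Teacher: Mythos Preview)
Your proposal is correct and follows essentially the same approach as the paper: decompose the total cost into the one-time initialization (dominated by the $O(mn^2k)$ t-SVD) plus $L\cdot t = O(\log(n/\epsilon)\log n)$ basic tensor least-squares solves at $O(mnrk^2)$ each, and then observe that the two-case bound simply records which additive term dominates. You are in fact slightly more thorough than the paper in explicitly accounting for the SmoothQR and median costs as lower-order terms, whereas the paper silently omits them.
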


   \begin{proof}
   We characterize the computational complexity by counting the multiplications involved in Alg. \ref{alg_AM}, which comes from the initialization process (line 1-3) and the for-loop (line 4-8).

   The initialization includes two steps: the t-SVD for computing the first $r$ eigenslices and the DFT along the third dimension. The t-SVD in Alg. \ref{alg:tSVD} has complexity $O(mn^2k)$, while taking the DFT over $mn$ tubes of size $k$ has complexity $O(mn k \log k)$. Therefore, the initialization process has total complexity $O(mn^2k + mn k\log k) = O(mn^2k)$ since $\log k \ll n$.

   Note that in Alg. \ref{alg_AM} there are $L=\Theta(\log(n/\epsilon))$  iterations, and each median operation is taken over $t=O(\log n)$ tensor least squares minimizations. The complexity of the tensor least squares minimization is dominated by (\ref{eq:standart_ls}), which has complexity $O(nrk^2)$. More specifically, calculating $A_3 A_1$ of (\ref{eq:standart_ls}) has complexity $O(rk)$ exploiting the sparseness of both $A_3$ and $A_1$, and the least squares minimization  (\ref{eq:standart_ls}) requires $O(nrk^2)$ multiplications which also takes into consideration the diagonal structure of $A_3A_1$. Note that each lateral slice is treated separately as shown in (\ref{LS_transform_sub}), thus a tensor least squares minimization has complexity $O(mnrk^2)$.


   Therefore, the total complexity of Alg. \ref{alg_AM} is $O(Lmnrk^2t + mn^2k)$ where $L=\Theta(\log(n/\epsilon))$ and $t=O(\log n)$. In particular, the complexity is $O(mnrk^2 \log n \log(n/\epsilon))$, if $k \geq \frac{n}{r \log n \log(n/\epsilon)}$, and if $k < \frac{n}{r \log n \log(n/\epsilon)}$, the total complexity is $O(mn^2k)$, dominated by the SVD.
   \end{proof}

   \begin{collary}
   The recovery error of the algorithm $||\widehat{\mathcal{X}} * \widehat{\mathcal{Y}}^\dagger - \mathcal{M}||_F$ decreases exponentially with the number of iterations $L$. However, the sampling complexity increases by a constant factor with $L$.
   \end{collary}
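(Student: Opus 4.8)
The plan is to derive both assertions directly from Theorem~\ref{theorem:noisy_case} together with the exponential convergence of the noisy tensor-column subspace iteration established in its proof.

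\emph{Exponential decay of the recovery error.} Recall from the proof of Theorem~\ref{theorem:noisy_case} that each least squares update has the form $\mathcal{Y}_\ell = \mathcal{T} * \mathcal{X}_{\ell-1} + \mathcal{G}_\ell$, where the spectral norm $||\mathcal{G}_\ell||$ is controlled by the quantity $d_\ell \triangleq ||(\mathcal{I} - \mathcal{U}*\mathcal{U}^\dagger)*\mathcal{X}_{\ell-1}||$, the sine of the largest principal angle between $\mathcal{U}$ and $\mathcal{X}_{\ell-1}$. First I would invoke the local convergence analysis of Appendix~\ref{sec:noisy_subspace_iteration} to show that one iteration contracts this quantity by a constant factor bounded away from one, i.e. $d_{\ell+1} \leq \rho\, d_\ell$ for some $\rho = \rho(\gamma_{rk}) < 1$ holding with high probability. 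Starting from the good initialization $d_0 \leq 1/4$ of Lemma~\ref{proof:initialization_p} and iterating this contraction gives $d_L \leq \rho^L/4$, so both the principal angle and the resulting error $||\widehat{\mathcal{X}}*\widehat{\mathcal{Y}}^\dagger - \mathcal{M}||_F$ shrink geometrically in $L$. Equivalently, and more directly from the theorem statement, I would invert the relation $L = \Theta(\log(n/\epsilon))$ used in the computational-complexity analysis: solving for the accuracy parameter yields $\epsilon = \Theta(n\,e^{-cL})$ for some constant $c>0$, and substituting into the guaranteed bound $||\widehat{\mathcal{X}}*\widehat{\mathcal{Y}}^\dagger - \mathcal{M}||_F \leq \tfrac{3\epsilon}{2}||\mathcal{T}||_F$ produces a recovery error of order $n\,e^{-cL}||\mathcal{T}||_F$, which is exponentially small in $L$.

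\emph{Linear growth of the sampling complexity.} The Split routine (lines~1--2 of Alg.~\ref{alg_AM}) partitions $\Omega$ into the initialization set $\Omega_0$ and $L$ iteration sets $\Omega_1,\dots,\Omega_L$, each drawn so as to preserve the uniform sampling assumption. The concentration bounds underlying each least squares step (Lemma~\ref{lemma:bounded}) require every $\Omega_\ell$ to independently contain $\Theta(pmnk)$ samples so that the per-iteration error term $\mathcal{G}_\ell$ is controlled. Because the subsets must be mutually independent across iterations, the total number of observed entries is $\sum_{\ell=0}^{L}|\Omega_\ell| = \Theta(L\,pmnk)$, which grows linearly in $L$; that is, each additional iteration raises the sampling requirement by a fixed per-iteration amount. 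The corresponding price in the failure probability is only a union-bound factor of $L$, which is absorbed into the $\Theta(\cdot)$ of the probability bound in Theorem~\ref{theorem:noisy_case}.

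The main obstacle is making the per-iteration contraction $d_{\ell+1} \leq \rho\, d_\ell$ hold \emph{uniformly} across all $L$ iterations under a single high-probability event. This in turn requires that the incoherence of each iterate $\mathcal{X}_\ell, \mathcal{Y}_\ell$ be preserved at every step, which is exactly the role of the SmoothQR process, and that the sample splits be independent so that $\mathcal{G}_\ell$ concentrates conditioned on the previous iterate. Once these two ingredients are in place, the geometric decay of the error and the linear-in-$L$ sampling cost both follow, establishing the corollary.
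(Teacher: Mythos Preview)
Your proposal is correct and its core argument---inverting $L=\Theta(\log(n/\epsilon))$ to obtain $\epsilon=\Theta(n\,e^{-cL})$ and then plugging into the Frobenius-norm bound---is exactly what the paper does. The paper's own proof is in fact much terser than yours: it handles only the exponential-decay claim via this inversion, and relegates the linear-in-$L$ sampling cost to the remark immediately following the corollary (``fresh samples are used in each step''), which is the same mechanism you spell out through the Split routine.
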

   \begin{proof}
   	In $L = \Theta( \log(n /\epsilon))$ iterations,  $||\widehat{\mathcal{X}} * \widehat{\mathcal{Y}}^\dagger - \mathcal{M}||_F \leq \frac{3\epsilon}{2} ||\mathcal{T}||_F$. Since $L = \Theta( \log(n /\epsilon))$, $\epsilon = \Theta(\exp{(-C L)})$ for some constant $C$ thus showing that the error $||\widehat{\mathcal{X}} * \widehat{\mathcal{Y}}^\dagger - \mathcal{M}||_F$ decreases exponentially with $L$.
   \end{proof}

   \begin{remark}
   	The number of update steps ($L = \Theta(\log(n /\epsilon))$) enters the sample complexity since we assume (as in \cite{Hard2014FOCS}) that fresh samples are used in each step.
   	\end{remark}

\section{Evaluation}\label{sec:evaluation}

   We evaluate our alternating minimization algorithm (Tubal-Alt-Min) on both synthetic and real video data. The synthetic data, generated according to our low-tubal-rank tensor model, serves as well-controlled inputs for testing and understanding Tubal-Alt-Min's performance over the convex algorithm TNN-ADMM. The real video data tests the applicability of our low-tubal-rank tensor model, compared with other tensor models.

   Please note that for actual implementation, we simplify Alg. \ref{alg_AM} by removing the Split operation (we use the whole observation set $\Omega$ in all subroutines of Alg. \ref{alg_AM}), the median operation, and the SmoothQR function. These three techniques are introduced only for the purpose of obtaining theoretical performance guarantees. The detailed steps of the simplified algorithm are in Algorithm \ref{alg_SAMI}. Note that random initialization is sufficient in practice as we observed the same performance with Alg. \ref{alg_initialization} in all of our testing cases.

\begin{algorithm}[h]
	\caption{Simplified Tubal Alternating Minimization}
	\label{alg_SAMI}
	\begin{algorithmic}
		\STATE \textbf{Input}: Observation set $\Omega \in [m] \times [n] \times [k]$ and the corresponding elements $\mathcal{P}_{\Omega}(\mathcal{T})$, number of iterations $L$, target tubal-rank $r$.
		\STATE 1:~~~~~~~$\mathcal{X}_0 \leftarrow \text{Initialize}(\mathcal{P}_{\Omega}(\mathcal{T}), \Omega, r)$,
		\STATE 2:~~~~~~~For $\ell = 1$ to $L$
		\STATE 3:~~~~~~~~~~~~~~  $\mathcal{Y}_{\ell} \leftarrow \text{LS}(\mathcal{P}_{\Omega}(\mathcal{T}), \Omega, \mathcal{X}_{\ell -1}, r)$,
		\STATE 4:~~~~~~~~~~~~~~~$\mathcal{X}_{\ell} \leftarrow \text{LS}(\mathcal{P}_{\Omega}(\mathcal{T}), \Omega, \mathcal{Y}_{\ell}, r)$,
		\STATE \textbf{Output}: Pair of tensors $(\mathcal{X}_{L}, \mathcal{Y}_{L})$.
	\end{algorithmic}
	\vspace{-2pt}
\end{algorithm}

\subsection{Experiment Setup}

   We use Matlab installed on a server with Linux operating system. The parameters of the server is: Intel$\circledR$ Xeon$^{\circledR}$ Processor E5-2650 v3, $2.3$ GHz clock speed, $2$ CPU each having $10$ physical cores, virtually maximum $40$ threads, $25$ MB cache, and $64$ GB memory.

   For synthetic data, we compare our Tubal-Alt-Min algorithm with tensor-nuclear norm (TNN-ADMM) \cite{Shuchin2014CVPR}, since both are designed for low-tubal-rank tensors, to show the advantages of our non-convex approach over its convex counterpart. We conduct experiments to recover third-order tensors of different sizes $m \times n \times k$ and tubal-ranks $r$, from observed elements in the subset $\Omega$. Three metrics are adopted for comparison, e.g., the recovery error, the running time, and the convergence speed.
   \begin{itemize}
   \item For recovery error, we adopt the relative square error metric, defined as RSE$=||\widehat{\mathcal{T}} - \mathcal{T}||_F / ||\mathcal{T}||_F$.
   \item For running time, varying the tensor size and fixing other parameters, we measure CPU time in seconds.
   \item For convergence speed, we measure the decreasing rate of the RSE across the iterations by linearly fitting the measured RSEs (in log scale). We include those plots due to three reasons: 1) both algorithms are iterative; 2) our theoretical analysis predicts exponential convergence; and 3) the decreasing speed of the RSE provides explanations for the observed performance of the recovery error and the running time.
   \end{itemize}

   For real dataset, we choose a basketball video of size $144 \times 256 \times 40$ (source: YouTube, as used in \cite{Shuchin2014CVPR}), with a non-stationary panning camera moving from left to right horizontally following the running players. Besides the low-tubal-rank tensor model, there are two widely used tensor models: low CP-rank tensor \cite{Kolda2009} and low Tuker-rank tensor \cite{Ye2013PAMI}. The compared algorithms are briefly described as follows:
   \begin{itemize}
   \item TNN-ADMM: \cite{Shuchin2014CVPR} introduced a convex norm, tensor-nuclear norm (TNN) \cite{Shuchin2015}, to approximate the tubal-rank and proposed an Alternating Direction Method of Multipliers (ADMM) algorithm. It is a convex approach and shares the alternating and iterative features. We include TNN-ADMM \cite{Shuchin2014CVPR} to show that the predicted advantages on the synthetic data in Section \ref{sect:synthetic} also hold for real datasets.
   \item The alternating minimization algorithm (CP-Altmin) \cite{Jain2014NIPS} under CANDECOMP/PARAFAC decomposition: The CP decomposition models a tensor as the outer product of $r$ (the CP-rank) vector components. The CP-Altmin algorithm \cite{Jain2014NIPS} alternates among the least squares minimization subproblems for each component. 
   \item The gradient-type algorithm Tuker-Gradient under Tuker decomposition: Tuker decomposition matricizes a tensor from $n$ modes, while the Tuker-Gradient algorithm \cite{Tuker} iteratively estimate each matricized component using the conjugate gradient decent. Essentially, Tuker-Gradient does not fall into the alternating minimization approach. However, we feel it is necessary to compare with it since both Tubal-Alt-Min and Tuker-Gradient iteratively estimate each factor and are nonconvex.
   \end{itemize}

\subsection{Synthetic Data}\label{sect:synthetic}

  \begin{figure}[t]
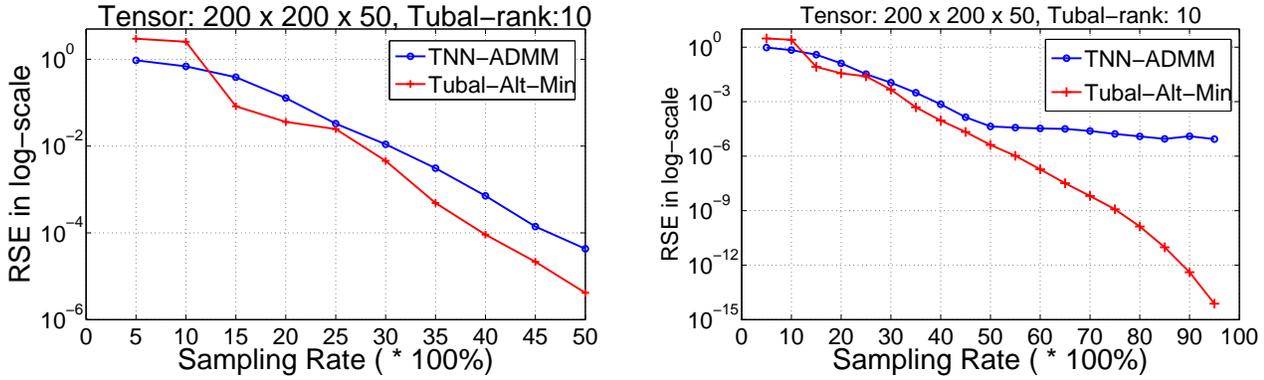
\centering
  \includegraphics[width=0.495\textwidth]{pic/Recovery_Error_Tensor_Model_Small_Scale_Observation.pdf}
  \includegraphics[width=0.495\textwidth]{pic/Recovery_Error_Tensor_Model.pdf}
  \caption{Recovery error RSE in $\log$-scale for different sampling rates.}\label{fig:recovery_error}
  \end{figure}

  \begin{figure}[t]
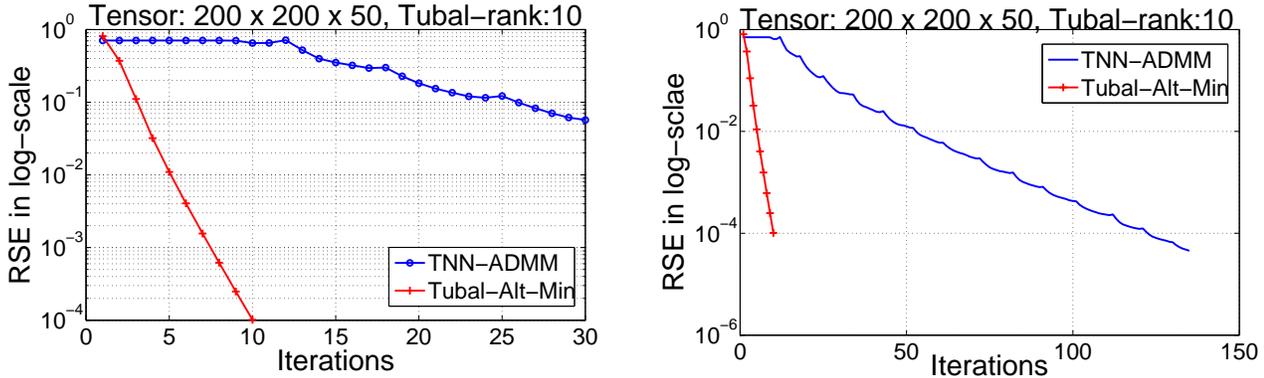
\centering
  \includegraphics[width=0.495\textwidth]{pic/Convergence_Speed_Tensor_Model_Small_Scale_Observation.pdf}
  \includegraphics[width=0.495\textwidth]{pic/Convergence_Speed_Tensor_Model.pdf}
  \caption{Convergence speed for the two algorithms on synthetic data.}\label{fig:convergence_speed}
  \end{figure}

  \begin{figure}[t]
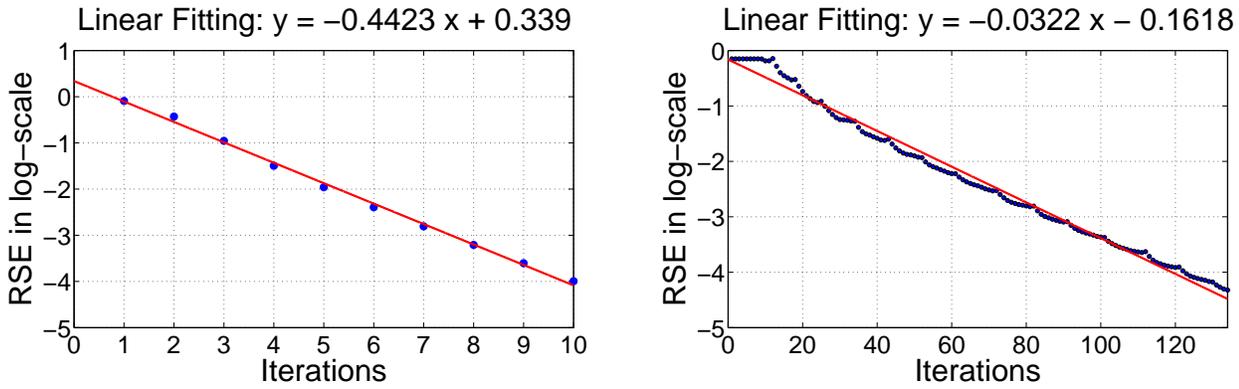
\centering
  \includegraphics[width=0.495\textwidth]{pic/fitting_convergence_rate.pdf}
  \includegraphics[width=0.495\textwidth]{pic/fitting_convergence_rate_tnn.pdf}
  \caption{Using linear fitting to estimate the convergence rate. Left: Tubal-Alt-Min. Right: TNN-ADMM.}\label{fig:convergence_rate_fitting}
  \end{figure}

  For recovery error, our input is a low-tubal-rank tensor of size $200 \times 200 \times 50$ and tubal-rank $10$. We first generate two Guassian random tensors of sizes $200 \times 10 \times 50$ and $10 \times 200 \times 50$ and then perform tensor product in Definition \ref{def:tensor_product} to get the input tensor of size $200 \times 200 \times 50$. We set the maximum iteration number to be $10$ for Tubal-Alt-Min and $500$ for TNN-ADMM, respectively. Varying the sampling rate as $5\%,~10\%,...,95\%$ by uniformly selecting entries, we test each sampling rate $5$ times and then plot the average results.

  Fig. \ref{fig:recovery_error} shows the recovery error performance (RSE in $\log$ scale) of Tubal-Alt-Min and TNN-ADMM for varying sampling rates. For a clear comparison, we draw two plots for sampling rate $\leq 50\%$ and sampling rates $5\% \sim 95\%$, respectively. When the sampling rate is higher than $50\%$, the RSE of Tubal-Alt-Min is orders of magnitude lower than that of TNN-ADMM. For sampling rates $15\% \sim 45\%$, the RSE of Tubal-Alt-Min is approximately half of or one order lower than that of TNN-ADMM except for an abnormal case at sampling rate $25\%$. However, for sampling rates $5\% \sim 10\%$, for Tubal-Alt-Min behaves badly since with insufficient samples, Tubal-Alt-Min switches between the factor tensors that are both not well determined. The possible reason that TNN-ADMM has much higher RSE than Tubal-Alt-Min is that TNN-ADMM does not have the exact tubal-rank value $r$ and keeps a higher tubal-rank value to avoid the risk of experiencing higher RSE.

  For convergence speed, the input is a low-tubal-rank tensor of size $200 \times 200 \times 50$ and tubal-rank $10$. We set the maximum iteration number to be $10$ for Tubal-Alt-Min and $500$ for TNN-ADMM. We fix the sampling rate to be $50\%$ and record the RSE in each iteration. TNN-ADMM terminates at the $134$-th iteration because the algorithm detected that the decrease of RSE is lower than a preset threshold.

  Fig. \ref{fig:convergence_speed} shows the decreasing RSEs across the iterations for Tubal-Alt-Min and TNN-ADMM. For a clear comparison, we draw two plots with $30$ and $140$ iterations for TNN-ADMM, respectively.  Clearly, Tubal-Alt-Min decreases much faster than TNN-ADMM. Interestingly, TNN-ADMM behaves very badly during the first $11$ iterations.

  Fig. \ref{fig:convergence_rate_fitting} shows the fitting results for the convergence rates of Tubal-Alt-Min and TNN-ADMM. We use linear functions to fit the observed RSEs (in $\log$-scale) across the iterations. For Tubal-Alt-Min, the fitted function is $y = -0.4423 x + 0.339$ and the estimated convergence rate is $10^{-0.4423 \text{Iter}}$ where $\text{Iter}$ denotes the number of iterations. For TNN-ADMM, the fitted function is $y = -0.0322 x - 0.1618$ and the estimated convergence rate is $10^{-0.0332 \text{Iter}}$. Therefore, Tubal-Alt-Min converges much faster than TNN-ADMM.

  \begin{figure}[t]\centering
  \includegraphics[width=0.5\textwidth]{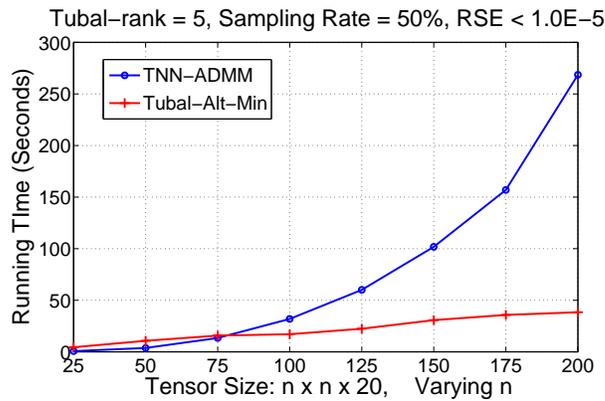}
  \caption{Comparison of running time for varying tensor size.}\label{fig:running_time}
  \end{figure}

  For running time, we measure the CPU time in seconds. We vary the tensor size from $25 \times 25 \times 20$ to $200 \times 200 \times 20$. Note that the tubal-rank tensor model views a tensor as a matrix of vectors, the third-demention $k$ acts as a linear scaling factor, therefore, to test large tensor cases we set $k=20$ that is smaller than $m$ and $n$ to avoid the ERR: out of memory. The tubal-rank is set to be $5$ for all cases, the sampling rate is $50\%$, while the target RSE less than $10^{-5}$. For fairness, we preset a threshold $10^{-5}$ and measure the running time that the algorithms need to reach an RSE less than this threshold.

  Fig. \ref{fig:running_time} shows the running time curves of Tubal-Alt-Min and TNN-ADMM. For tensors larger than $75 \times 75 \times 20$, our Tubal-Alt-Min algorithm beats TNN-ADMM. The acceleration ratio is about $2$ times for tensors of size $125 \times 125\times 20$ and $5$ times for tensors of size $200 \times 200 \times 20$. However, since our implementation of Tubal-Alt-Min in Section \ref{subsect:LS_mini} introduces intermediate matrices of size $nk \times nk \times n$, we encounter the ERR: out of memory and thus we are not able to test larger tensor sizes.

\subsection{Real Data}\label{sect:realdata}

  \begin{figure}[t]
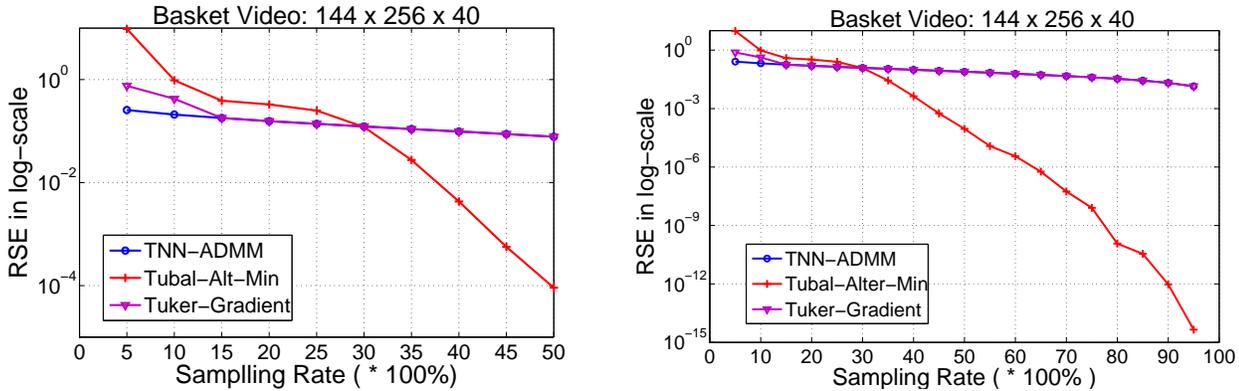
\centering
  \includegraphics[width=0.47\textwidth]{pic/Recovery_Error_Basket_Small_Scale_Observation.pdf}
  \includegraphics[width=0.505\textwidth]{pic/Recovery_Error_Basket.pdf}
  \caption{Recovery error RSE in $\log$-scale for different sampling rates.}\label{fig:recovery_error_video}
  \end{figure}

  For the basket video, we only compare the recovery error. One can visually see the improvements of Tubal-Alt-Min by downloading the video results \cite{video}. Since we are interested in fast algorithms and we think it is not fair to compare the running time and convergence speed of our Tubal-Alt-Min algorithm with the Tuker-Gradient \cite{Tuker}\footnote{The Tuker-Gradient \cite{Tuker} implementation tested tensor of size $10,000 \times 10,000 \times 10,000$, which is not possible in our implementation due to the ERR: out of memory for our implementation in Section \ref{subsect:LS_mini}.} that has much higher recovery error.

  Fig. \ref{fig:recovery_error_video} shows the recovery error performance (RSE in $\log$-scale) of Tubal-Alt-Min, TNN-ADMM and Tuker-Gradient for varying sampling rates. For a clear comparison, we draw two plots for sampling rate $\leq 50\%$ and sampling rates $5\% \sim 95\%$, respectively. Tubal-Alt-Min achieves relative lower error by orders of magnitude for sampling rates higher than $35\%$, while the RSE does not decrease much for TNN-ADMM. Here, Tubal-Alt-Min behaves worse than that in Fig. \ref{fig:recovery_error} for sampling rates $5\% \sim 30\%$, and much better for sampling rates $35\% \sim 95\%$. Comparing with Fig. \ref{fig:recovery_error}, Tubal-Alt-Min has similar recovery error for both synthetic data and real video data, while TNN-ADMM's performance for video data is worse than that for synthetic data.  The reasons are: 1) the synthetic input tensor has tubal-rank $10$, while the basket video data has larger tubal-rank (approximately $30$) with the residual error modelled as noise, and 2) Tubal-Alt-Min is more capable of dealing with noise in the real-world video data. Note that in $\log$-scale, the performance of TNN-ADMM and Tuker-Gradient are in the same order and thus indistinguishable.

\section{Conclusions}

   Alternating minimization provides an empirically appealing approach to solve low-rank matrix and tensor completion problems. We provide the first theoretical guarantees on the global optimality for the low-tubal-rank tensor completion problem. In particular, this paper proposes a fast iterative algorithm for the low-tubal-rank tensor completion problem, called {\em Tubal-Alt-Min}, which is based on the alternating minimization approach. The unknown low-tubal-rank tensor is represented as the product of two much smaller tensors with the low-tubal-rank property being automatically incorporated, and Tubal-Alt-Min alternates between estimating those two tensors using tensor least squares minimization. We have obtained theoretical performance guarantees on the success of the proposed algorithm under the tensor incoherency conditions. Based on extensive evaluations and comparisons on both synthetic and real-world data, the proposed algorithm is seen to be fast and accurate.


\appendix

  \begin{figure}[t]\centering
  \includegraphics[width=0.96\textwidth]{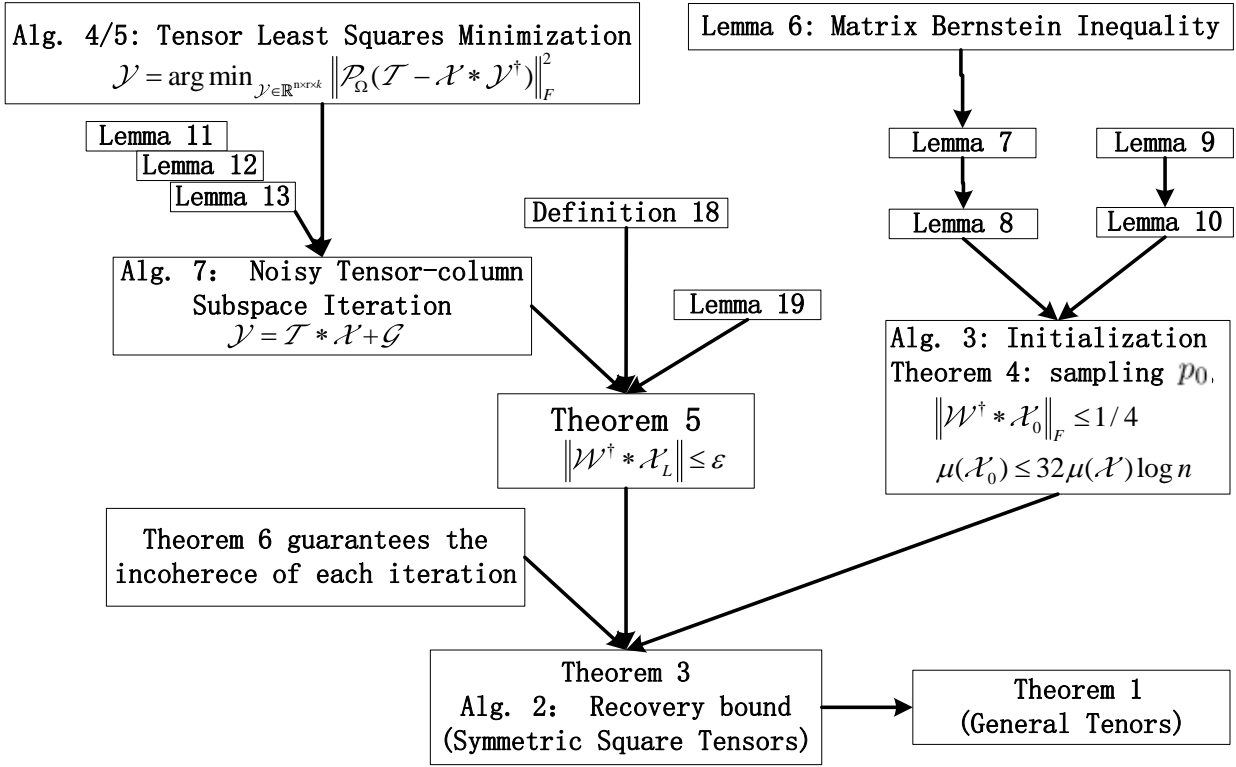}
  \caption{Overview of the proof flow of Theorem \ref{theorem:noisy_case}.}\label{fig:flow_proof}
  \end{figure}

   Fig. \ref{fig:flow_proof} shows the flow chart of our proof for Theorem \ref{theorem:noisy_case}. To guarantee the global optima of the iterative algorithm Alg. \ref{alg_AM}, we follow a two-stage approach. In the first stage, we prove in Theorem \ref{proof:initialization} that Alg. \ref{alg_initialization} (the first step of Alg. \ref{alg_AM}) returns a good initial point that is relatively close to the global optima, while in the second stage, Theorem \ref{theorem:addmissible_noisy_tensors} states that Alg. \ref{alg_median_LS_update} (the second step of Alg. \ref{alg_AM}) converges locally.

   Theorem \ref{proof:initialization} in Appendix \ref{sec:initialization} states that $\mathcal{X}_0$ approximates the tensor-column subspace $\mathcal{U}$ well such that $||(\mathcal{I} - \mathcal{U} * \mathcal{U}^\dagger) * \mathcal{X}_0||_F \leq 1/4$ with probability at least $1 - 1/n^2$, if the elements are included in $\Omega_0$ with probability $p_0 \geq \frac{6144r^2 \mu(\mathcal{U}) (||\mathcal{T}||_F / \gamma_{rk} \overline{\sigma}_{rk})^2 \log n}{n} + \frac{64r^{3/2} \mu(\mathcal{U}) (||\mathcal{T}||_F / \gamma_{rk} \overline{\sigma}_{rk}) \log n}{n}$ where $\overline{\sigma}_{rk}$ denotes the $rk$-th singular value of the block diagonal matrix $\overline{\mathcal{T}}$, and $\gamma_{rk} = 1 - \overline{\sigma}_{rk+1} / \overline{\sigma}_{rk}$.

   The convergence of Alg. \ref{alg_AM} relies on the fact that alternately calling Alg. \ref{alg_median_LS_update} will lead to local convergence. First, we express each iteration as a noisy tensor-column subspace iteration in Lemma \ref{lemma:least_squares} as $\mathcal{Y} = \mathcal{T}*\mathcal{X} + \mathcal{G}$ where $\mathcal{G}$ is the noise/error term. Second, we show that if the error term $\mathcal{G}$ is bounded as in Lemma \ref{lemma:bounded}, then Alg. \ref{alg_AM} converges at an exponential rate as in Lemma \ref{lemma:noisy_tensor}. Third, to guarantee $\mathcal{G}$ is bounded in Frobenius norm, we obtain a requirement that the elements are included in $\Omega_+$ with at least probability $p_{+} = O\left(\frac{r\mu(\mathcal{X})\log{nk}}{\delta^2 n}\right)$ in Lemma \ref{lemma:probability_p}.

   Therefore, the sampling complexity is the sum of two terms: the samples required by the initialization step, and the samples required to bound the error term in the tensor least squares minimization steps, i.e., $p = p_0 + p_+$. Theorem \ref{main_theorm} shows that if the sampling complexity is larger than the sum of these two terms, Alg. \ref{alg_AM} converges to the true unknown tensor rapidly.

\subsection{Additional Definitions and Lemmas}

  For easy description of the performance analysis, we use symmetric square tensor for analysis in the following, i.e., $\mathcal{T} \in \mathbb{R}^{n \times n \times k}$.

  \begin{definition}\label{def:symmetric_tensor}
  \textbf{Square tensor, rectangular tensor, symmetric square tensor}. A tensor $\mathcal{T} \in \mathbb{R}^{n \times n \times k} $ is a {\em square tensor}, and a tensor $\mathcal{X} \in \mathbb{R}^{n \times r \times k}$ is a {\em rectangular tensor}. A {\em symmetric square tensor} $\mathcal{T}$ is a tensor whose frontal slices are symmetric square matrices, i.e., $\mathcal{T}(:,:,i) = \mathcal{T}(:,:,i)^\dagger$. (Note that for a symmetric square tensor $\mathcal{T}$, generally $\mathcal{T} \neq \mathcal{T}^{\dagger}$).
  \end{definition}

  Therefore, a symmetric square tensor $T$ has the t-SVD decomposition $\mathcal{T} = \mathcal{U} * \Theta * \mathcal{U}^{\dagger}$. A tensor-column subspace of $\mathcal{T}$ is the space spanned by the lateral slices of $\mc{U}$ under the t-product, i.e.,
  $\text{t-span}(\mc{U}) = \{ \mathcal{X} = \sum_{s = 1}^{r} \mc{U}(:,s,:) \ast \V{c}_s \in \mathbb{R}^{n \times 1 \times k},~\V{c}_s \in \mb{R}^{1 \times 1 \times k} \}$, where $r$ denotes the tensor tubal-rank.

  \begin{definition}
  \textbf{Tensor basis and the corresponding decomposition}. We introduce two tensor bases \cite{Shuchin2015}. The first one is called {\em column basis} $\dot{e}_i$ of size $n \times 1 \times k$ with only one entry equal to $1$ and the rest equal to $0$. Note that this nonzero entry $1$ will only appear at the $i$-th entry of the first frontal slice of $\dot{e}_i$. Naturally, its transpose $\dot{e}_i^{\dagger}$ is called {\em row basis}. The second tensor basis is called {\em tubal basis} $e_i$ of size $1 \times 1 \times k$ with one entry equal to $1$ and rest equal to $0$. Fig. \ref{fig:two_basis} illustrates these two bases.
  \end{definition}

  \begin{figure}[t]\centering \label{tensor_basis}
  \includegraphics[width=0.3\textwidth]{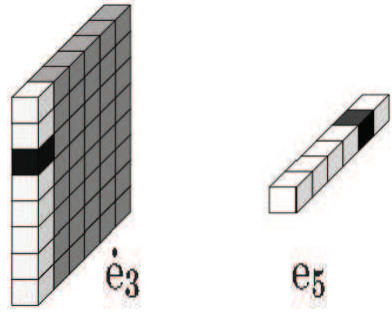}
   \caption{The column basis $\dot{e}_3$ and tubal basis $e_5$. The black entries are $1$, gray and white entries are $0$. The white entries are those that could be $1$.}
   \label{fig:two_basis}
  \end{figure}

  With the above two bases, one can obtain a unit tensor $\mathcal{E}$ with only non-zero entry $\mathcal{E}_{ij\kappa}=1$ as follows:
  \begin{equation}
  \mathcal{E} = \dot{e_i} * e_j * \dot{e}_\kappa^\dagger.
  \end{equation}
  Given any third order tensor $\mathcal{T} \in \mathbb{R}^{m \times n \times k}$, we have the following decomposition
  \begin{equation}
  \mathcal{X} = \sum_{i=1}^{m} \sum_{j=1}^{n} \sum_{\kappa=1}^{k} \mathcal{X}_{ij\kappa} \dot{e_i} * e_j * \dot{e}_\kappa^\dagger.
  \end{equation}

  \begin{definition}\cite{Shuchin2015}
  \textbf{$\ell_{2^*}$, $\ell_{\infty, 2^*}$-norm of tensor}. Let $\mathcal{X}$ be an $ m \times n \times k$ tensor. We define an $\ell_{2^*}$-norm of its $j$-th lateral slice $\mathcal{X}(:,j,:)$ as follows
  \begin{equation}
  ||\mathcal{X}(:,j,:)||_{2^*} = \sqrt{\sum_{i=1}^{n} \sum_{\kappa=1}^{k} \mathcal{X}_{ij\kappa}^2 },~~~ ||\mathcal{X}||_{\infty, 2^*} = \max_{j \in [n]} ||\mathcal{X}(:,j,:)||_{2^*}.
  \end{equation}
  Moreover, we have the following relationship between the $\ell_{2^*}$ norm of $\mathcal{X}(:,j,:)$ and its FFT along the third dimension $\mathcal{X}(:,j,:)$,
  \begin{equation}
  ||\mathcal{X}(:,j,:)||_{2^*} = \frac{1}{\sqrt{k}}||\widetilde{\mathcal{X}}(:,j,:)||_{2^*}.
  \end{equation}
  \end{definition}

  \begin{definition}\label{def:tensor_spectral}\cite{Shuchin2015}
  \textbf{Tensor spectral norm}. The tensor spectral norm $||\mathcal{X}||$ of a third-order tensor $\mathcal{X} \in \mathbb{R}^{m \times n \times k}$ is defined as the largest singular value of $\mathcal{X}$
  \begin{equation}
  ||\mathcal{X}|| = \sup_{\mathcal{L} \in \mathbb{R}^{n \times n \times k},~||\mathcal{L}||_F \leq 1} ||\mathcal{X}*\mathcal{L}||_F.
  \end{equation}
  \end{definition}

  \begin{lemma}\label{lemma:spectral_norm}\cite{Shuchin2015}
  The tensor spectral norm of $\mathcal{X}$ equals to the matrix spectral norm of $\overline{\mathcal{X}}$, i.e.,
  \begin{equation}
  ||\mathcal{X}|| = ||\overline{\mathcal{X}}||.
  \end{equation}
  Note that it also equal to the spectral norm of the circular matrix $X^c$ (defined in Section \ref{sect:circular_algebra}), i.e., $||\mathcal{X}||= ||X^c||=\lambda_{\max}(X^c)$ where $\lambda_{\max}(X^c)$ denotes the largest singular value of $X^c$.
  \end{lemma}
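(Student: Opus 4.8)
The plan is to pass to the Fourier/block-diagonal domain, where the t-product becomes ordinary matrix multiplication, and thereby reduce the tensor spectral norm of Definition \ref{def:tensor_spectral} to the operator norm of a block-diagonal matrix. Throughout I write $\overline{\mathcal{X}} = \mathrm{blkdiag}(\widetilde{\mathcal{X}})$ for the block-diagonal form of Definition \ref{def:block-diagonal}. First I would record the Frobenius-norm dictionary: by Parseval along the third mode, $\|\mathcal{T}\|_F^2 = \tfrac1k\|\widetilde{\mathcal{T}}\|_F^2$, and since $\overline{\mathcal{T}}$ merely stacks the frontal slices $\widetilde{\mathcal{T}}^{(\kappa)}$ on the diagonal, $\|\overline{\mathcal{T}}\|_F = \|\widetilde{\mathcal{T}}\|_F$; hence $\|\mathcal{T}\|_F = \tfrac1{\sqrt k}\|\overline{\mathcal{T}}\|_F$, which is the block-diagonal restatement of Lemma \ref{lemma:frob_equality}.

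Applying this identity to both $\mathcal{L}$ and $\mathcal{X}*\mathcal{L}$, and using $\overline{\mathcal{X}*\mathcal{L}} = \overline{\mathcal{X}}\,\overline{\mathcal{L}}$ from Remark \ref{remark:computing_tensor_product}, the definition of the tensor spectral norm rewrites as
$$\|\mathcal{X}\| = \sup_{\|\mathcal{L}\|_F \le 1}\|\mathcal{X}*\mathcal{L}\|_F = \sup_{\mathcal{L} \in \mathbb{R}^{n\times n\times k},\ \|\overline{\mathcal{L}}\|_F \le \sqrt k}\ \tfrac1{\sqrt k}\big\|\overline{\mathcal{X}}\,\overline{\mathcal{L}}\big\|_F .$$
The upper bound is then immediate: submultiplicativity gives $\|\overline{\mathcal{X}}\,\overline{\mathcal{L}}\|_F \le \|\overline{\mathcal{X}}\|\,\|\overline{\mathcal{L}}\|_F \le \sqrt k\,\|\overline{\mathcal{X}}\|$, so $\|\mathcal{X}\| \le \|\overline{\mathcal{X}}\|$.

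For the reverse inequality I would exploit that a block-diagonal matrix decouples across blocks, so $\|\overline{\mathcal{X}}\| = \max_{\kappa\in[k]}\|\widetilde{\mathcal{X}}^{(\kappa)}\|$; let this maximum be attained at $\kappa_0$ with top right singular vector $v$ and singular value $\sigma$. The only genuine obstacle is that the admissible $\overline{\mathcal{L}}$ are not arbitrary block-diagonal matrices: because $\mathcal{L}$ is a \emph{real} tensor, its Fourier blocks obey the conjugate symmetry $\widetilde{\mathcal{L}}^{(\kappa)} = \overline{\widetilde{\mathcal{L}}^{(k+2-\kappa)}}$, so one cannot simply place the optimal complex singular vector in a single block. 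I would reconcile this by choosing $\mathcal{L}$ whose only nonzero Fourier blocks carry $v$ in slice $\kappa_0$ and its conjugate $\bar v$ in the paired slice $k+2-\kappa_0$ (a single self-paired block when $\kappa_0\in\{1,\,k/2+1\}$, where $\widetilde{\mathcal{X}}^{(\kappa_0)}$ and $v$ are already real). This $\mathcal{L}$ is real-valued, and since the paired blocks of $\overline{\mathcal{X}}$ are complex conjugates they share the singular value $\sigma$; a direct computation then gives $\|\overline{\mathcal{X}}\,\overline{\mathcal{L}}\|_F = \sigma\,\|\overline{\mathcal{L}}\|_F$, so the supremum attains $\sigma = \|\overline{\mathcal{X}}\|$. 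This establishes $\|\mathcal{X}\| = \|\overline{\mathcal{X}}\|$.

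It remains to connect $\overline{\mathcal{X}}$ to the circular matrix $X^c$. The $k\times k$ circulant blocks composing $X^c$ are simultaneously diagonalized by the $k$-point DFT, so conjugating $X^c$ by unitary matrices $\Phi_m,\Phi_n$ built as a Kronecker product of the normalized DFT $F_k/\sqrt k$ with an identity, followed by a fixed permutation, yields exactly $\overline{\mathcal{X}}$. Unitary equivalence preserves singular values, hence $\|X^c\| = \|\overline{\mathcal{X}}\|$, and combining with the previous paragraph gives $\|\mathcal{X}\| = \|X^c\|$. Finally $\|X^c\| = \lambda_{\max}(X^c)$ is just the identification of the matrix operator norm with the largest singular value. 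I expect the reconciliation of the conjugate-symmetry constraint on the optimizing $\mathcal{L}$ with the unconstrained block-diagonal operator norm to be the only nontrivial point; everything else reduces to the Parseval dictionary together with invariance of singular values under the unitary Fourier change of basis.
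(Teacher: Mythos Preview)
The paper does not prove this lemma; it is simply quoted from \cite{Shuchin2015} without argument. Your proof is correct and self-contained. The Parseval dictionary and the block-diagonalization of $X^c$ by the DFT are standard, and you correctly identify and resolve the only real subtlety: the supremum in Definition~\ref{def:tensor_spectral} is over \emph{real} $\mathcal{L}$, so the admissible $\overline{\mathcal{L}}$ must obey the conjugate symmetry $\widetilde{\mathcal{L}}^{(\kappa)}=\overline{\widetilde{\mathcal{L}}^{(k+2-\kappa)}}$, and your paired-block construction (placing the top singular vector $v$ in slice $\kappa_0$ and $\bar v$ in the conjugate slice, or using a single real block when $\kappa_0$ is self-paired) shows the bound is still attained. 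Nothing is missing.
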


  \begin{definition}
  \textbf{Tensor infinity norm}. The tensor infinity norm $||\mathcal{A}||_{\infty}$ is defined as the largest absolute value of any of its entry, i.e.,
  \begin{equation}
  ||\mathcal{A}||_{\infty} = \max_{i,j,\kappa} |\mathcal{A}_{ij\kappa}|.
  \end{equation}
  \end{definition}

  Note that the tensor-column subspace iteration in Alg. \ref{alg_noisy_subspace_iteration} relies on definitions of inverse, angle function, inner products, norm, conjugate, and also the circulant Fourier transform \cite{Gleich2013}. We describe the inverse and the circulant Fourier transform while omit the rest since they are implicitly used in our paper.  The inverse of $\underline{\alpha} \in \mathbb{K}$ is $\underline{\alpha}^{-1} ~\leftrightarrow~ \text{circ}(\underline{\alpha})^{-1}$, where $\text{circ}(\underline{\alpha})^{-1}$ is also a circulant. The circulant Fourier transforms, $\text{cft}:\underline{\alpha} \in \mathbb{K} \mapsto \mathbb{C}^{k \times k}$ and its inverse $\text{icft}: \mathbb{C}^{k \times k} \mapsto \mathbb{K}$, are defined as follows:
   \begin{equation}
   \text{cft}(\underline{\alpha}) \triangleq\left[
    \begin{array}{ccc}
    \hat{\alpha}_1 &  & \\
    & ... & \\
    & &  \hat{\alpha}_k\\
    \end{array}
    \right] \leftrightarrow \textbf{F}^*\text{circ}(\underline{\alpha})\textbf{F},~~~~~~
    \text{icft}\left( \left[
    \begin{array}{ccc}
     \hat{\alpha}_1 &  & \\
    & ... & \\
    & &  \hat{\alpha}_k\\
    \end{array}
    \right]\right) \triangleq \underline{\alpha} \leftrightarrow \textbf{F}\text{cft}(\underline{\alpha}) \textbf{F}^*,
   \end{equation}
   where $\hat{\alpha}_{\ell}$ are the eigenvalues of $\text{circ}(\underline{\alpha})$ as produced in the Fourier transform order, $\textbf{F}$ is the $k \times k$ discrete Fourier transform matrix, and $\textbf{F}^*$ denotes the (circulant) conjugate \cite{Gleich2013}.

   \textbf{Eigentubes and Eigenslices}: As in \cite{Gleich2013}, we describe the eigentubes and eigenslices. The existence of an eigentube $\underline{\lambda} \in \mathbb{K}_k$ implies the existence of a corresponding eigenslice $\underline{A} \in \mathbb{K}_k^{n \times n}$, satisfying $\underline{A} * \underline{x} =  \underline{x} * \underline{\lambda}$. The corresponding Fourier transforms $\text{cft}(\underline{A}), \text{cft}(\underline{x}), \text{cft}(\underline{\lambda})$ satisfy:
  \begin{equation}\label{eqn_to_freq}
  \begin{split}
  \text{cft}(\underline{A} * \underline{x}) &= \text{cft}(\underline{x} * \underline{\lambda}) \\
  \text{cft}(\underline{A}) \text{cft}(\underline{x}) &= \text{cft}(\underline{x}) \text{cft}(\underline{\lambda}).
  \end{split}
  \end{equation}

   From here on we will always assume that the symmetric square tensor $\mathcal{T}$ has the decomposition $\mathcal{T} = \mathcal{U} * \Theta * \mathcal{U}^{\dagger} + \mathcal{W} * \Theta_{\mathcal{W}} * \mathcal{W}^{\dagger} = \mathcal{M} + \mathcal{N} $, where $\mathcal{M} = \mathcal{U} * \Theta * \mathcal{U}^{\dagger}$ and $\mathcal{N} = \mathcal{W} * \Theta_{\mathcal{W}} * \mathcal{W}^{\dagger} $,  $\mathcal{U} \in \mathbb{R}^{n \times r \times k}, \mathcal{W} \in \mathbb{R}^{n \times (n - r) \times k}$ corresponding to the first $r$ and last $(n-r)$ eigenslices respectively, and $\Theta, \Theta_{\mathcal{W}}$ corresponding to the first $r$ and last $(n - r)$ eigentubes of $\mathcal{T}$. Note that we represent $\mathcal{T}$ as $\mathcal{T} = \mathcal{X} * \mathcal{Y}^\dagger$ in Alg \ref{alg_AM}. The following equalities are frequently used in the proof:
   \begin{equation}
   \begin{split}
   &\mathcal{N} = (\mathcal{I} - \mathcal{U}* \mathcal{U}^{\dagger})* \mathcal{T} = \mathcal{W} * \Theta_{\mathcal{W}} * \mathcal{W}^{\dagger},\\
   &||\mathcal{W}^{\dagger} * \mathcal{X}||= ||(\mathcal{I} - \mathcal{U} * \mathcal{U}^\dagger)*\mathcal{X}||.
   \end{split}
   \end{equation}

\subsection{Proof of Theorem 1}\label{proof:theorem}
   We first prove the theoretical result for a symmetric tensor, and then extend it to Theorem 1 following the argument in Remark \ref{rm_extd}.

   \begin{theorem}\label{main_theorm}
   Suppose that we have a sample set $\Omega$ of size $O(pn^2k)$ with each element randomly drawn from an unknown symmetric square tensor $\mathcal{T}$ of size $n \times n \times k$, where   $\mathcal{T} = \mathcal{M} + \mathcal{N} $,  $ \mathcal{M} = \mathcal{U} * \Theta * \mathcal{U}^\dagger$ has tubal-rank $r$, and $\mathcal{N}=(\mathcal{I} - \mathcal{U}* \mathcal{U}^{\dagger})* \mathcal{T}$ satisfies condition (\ref{condition_N}). Let $\gamma_{rk} = 1 - \overline{\sigma}_{rk+1}/\overline{\sigma}_{rk}$ where $\overline{\sigma}_{rk}$ is the smallest singular value of $\overline{\mathcal{M}}$ and  $\overline{\sigma}_{rk+1}$ is the largest singular value of $\overline{\mathcal{N}}$ ($\overline{\sigma}_{rk+1} = 0, \gamma_{rk} = 1$ for the exact tensor completion problem). Then, there exist parameters $\mu = \Theta(\gamma_{rk}^{-2}r(\mu^* + \log nk))$ and $L = \Theta(\gamma_{rk}^{-1}\log(n/\epsilon))$ such that
    Alg. \ref{alg_AM} will output $(\widehat{\mathcal{X}}, \widehat{\mathcal{Y}})$ with probability at least $1 - \Theta\left(\frac{\gamma_{rk}^{-1}\log(n/\epsilon)}{n^2}\right)$, $\widehat{\mathcal{X}}$ is an orthogonal $n \times r \times k$ tensor that approximates the tensor-column subspace $\mathcal{U}$ as $|| (\mathcal{I} - \mathcal{U} * \mathcal{U}^{\dag}) * \widehat{\mathcal{X}}|| \leq \epsilon $, provided that
    \begin{equation}
    p = O\left(\frac{r^2 \mu^* \left( ||\mathcal{M}||_F^2 + ||\mathcal{N}||_F^2/\epsilon^2 \right) \log(n/\epsilon) \log^2 nk }{\gamma_{rk}^5 \overline{\sigma}_{rk}^2 n}\right).
    \end{equation}
   \end{theorem}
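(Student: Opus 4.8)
The plan is to prove the theorem in two stages---a global initialization stage and a local convergence stage---exploiting throughout the injective mapping between the circulant algebra and the circular matrix space established in Section~\ref{sect:circular_algebra}. The central device is Lemma~\ref{lemma:frob_equality} together with Lemma~\ref{lemma:spectral_norm}, which let us pass between the tensor spectral/Frobenius norms and the corresponding norms of the block-diagonal matrix $\overline{\mc{T}}$ (equivalently the circular matrix $T^c$). Because $\overline{\mc{T}}$ is block diagonal in the Fourier domain, each frequency slice resembles an ordinary low-rank matrix-completion instance, so the strategy is to lift the matrix-completion machinery of~\cite{Hard2014FOCS} to the tensor setting, taking care that the circular-convolution operator is intertwined with sub-sampling, which is exactly the obstruction exhibited in Section~\ref{whydifferent}.

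First I would establish the initialization guarantee (Theorem~\ref{proof:initialization}). The goal is to show that $\mc{X}_0$, formed from the top-$r$ left eigenslices of $\frac{1}{p}\mc{P}_\Omega(\mc{T})$ after truncation and $\mathrm{QR}$, satisfies $\|(\mc{I}-\mc{U}*\mc{U}^\dagger)*\mc{X}_0\| \le 1/4$ with probability at least $1-1/n^2$ (Lemma~\ref{proof:initialization_p}). This reduces to a matrix concentration argument: writing the sampled operator in block-diagonal form, I would apply a matrix Bernstein inequality to bound $\|\frac{1}{p}\overline{\mc{P}_\Omega(\mc{T})}-\overline{\mc{T}}\|$, then invoke a Davis--Kahan/Wedin-type perturbation bound to convert the spectral perturbation into a bound on the largest principal angle between $\mc{X}_0$ and $\mc{U}$. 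The spectral gap $\gamma_{rk}=1-\overline{\sigma}_{rk+1}/\overline{\sigma}_{rk}$ enters here and accounts for its appearance in the sampling rate; the tube-wise truncation (which scales whole tubes rather than single entries, unlike the matrix case) is what keeps the concentration tight.

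Next I would prove local convergence (Theorem~\ref{theorem:addmissible_noisy_tensors}). The key reduction, Lemma~\ref{lemma:least_squares}, expresses each tensor least-squares update as a noisy tensor-column subspace iteration $\mc{Y}_\ell = \mc{T}*\mc{X}_{\ell-1}+\mc{G}_\ell$, where $\mc{G}_\ell$ collects the discrepancy between the sampled and the full least-squares solutions. I would then (i) bound $\|\mc{G}_\ell\|$ in spectral norm in terms of the current subspace error $\delta_{\ell-1} \triangleq \|(\mc{I}-\mc{U}*\mc{U}^\dagger)*\mc{X}_{\ell-1}\|$ (Lemma~\ref{lemma:bounded}), and (ii) feed this into a noisy power-iteration analysis (Lemma~\ref{lemma:noisy_tensor}) yielding a contraction $\delta_\ell \le (1-\Theta(\gamma_{rk}))\,\delta_{\ell-1}+O(\text{noise floor})$. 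Iterating $L=\Theta(\gamma_{rk}^{-1}\log(n/\epsilon))$ times drives $\delta_L\le\epsilon$, giving the exponential convergence. Two auxiliary mechanisms are essential: the median over $t=O(\log n)$ independent least-squares solves (Alg.~\ref{alg_median_LS_update}) boosts the per-step success probability through a Chernoff argument so that the union bound over the $L$ iterations holds with the stated probability, and the SmoothQR step (Alg.~\ref{alg_smoothQR}) keeps each iterate's coherence below $\mu$, which is the precondition for the concentration bounds on $\mc{G}_\ell$.

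The hardest part will be bounding $\|\mc{G}_\ell\|$ (Lemma~\ref{lemma:bounded}), because the sub-sampling operator $\mc{P}_\Omega$ does not commute with the circular convolution embedded in the t-product, so the familiar matrix-completion identity for the least-squares residual does not transfer verbatim. The plan is to pass to the frequency domain via the Convolution Theorem, write the per-lateral-slice normal equations as in~(\ref{eq:standart_ls}), and control the deviation of the sampled Gram operator $A_3A_1$ from its expectation with a matrix Bernstein bound that crucially uses the incoherence of $\mc{X}_{\ell-1}$ enforced by SmoothQR; the resulting spectral-norm bound is proportional to $\delta_{\ell-1}$ plus a term driven by $\|\mc{N}\|_F$, which is exactly what produces the $\|\mc{M}\|_F^2+\|\mc{N}\|_F^2/\epsilon^2$ factor in $p$. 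Finally, the sampling complexity emerges as $p=p_0+p_+$ by combining the initialization requirement with the per-step requirement $p_+=O\!\left(r\mu(\mc{X})\log(nk)/(\delta^2 n)\right)$ from Lemma~\ref{lemma:probability_p}, and the extension from the symmetric square tensor treated here to the general rectangular case of Theorem~\ref{theorem:noisy_case} follows by the standard symmetrization argument of Remark~\ref{rm_extd}.
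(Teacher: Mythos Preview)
Your proposal is structurally correct and matches the paper's two-stage architecture (initialization via matrix Bernstein plus Davis--Kahan, then local convergence by recasting the least-squares step as noisy tensor-column subspace iteration and invoking the admissibility framework). Two points, however, deserve correction.

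First, your plan for the ``hardest part''---bounding $\|\mc{G}_\ell\|$---diverges from the paper's actual route. You propose to work in the frequency domain with the per-lateral-slice implementation equations~(\ref{eq:standart_ls}) and to control the sampled Gram operator $A_3A_1$ by matrix Bernstein. The paper does \emph{not} do this: equations~(\ref{LS_transform})--(\ref{eq:standart_ls}) are used only for implementation, never for analysis. For analysis the paper passes instead to the \emph{circular matrix} representation (Lemmas~\ref{LS_circular_gradient}--\ref{lemma:least_squares}), where the sampling set $\Omega'=\text{circ}(\Omega)$ consists of exactly $k$ replicas of $\Omega$, and then applies Hardt's matrix-completion lemma (Lemma~4.3 of~\cite{Hard2014FOCS}) directly to the $nk\times nk$ circular matrices; this is the content of Lemma~\ref{lemma:probability_p}. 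Your frequency-domain route is precisely where the intertwining obstruction of Section~\ref{whydifferent} bites: $\widetilde{\mc{P}}_\Omega$ is dense (the sampling mask does not diagonalize under the FFT), so a Bernstein bound on $A_3A_1$ would not cleanly yield the incoherence-based estimate you need. The circular-matrix lifting sidesteps this because the replicated sampling pattern keeps the per-row structure that Hardt's argument requires.

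Second, you treat SmoothQR only as a device that enforces $\mu(\mc{X}_\ell)\le\mu$, but you omit that SmoothQR itself injects an additional perturbation $\mc{H}_\ell$, so the effective noise is $\widetilde{\mc{G}}_\ell=\mc{G}_\ell+\mc{H}_\ell$. The paper closes the argument by a \emph{mutual induction} on three claims simultaneously: (i) $\{(\mc{X}_{\ell-1},\mc{G}_\ell)\}$ is $(\epsilon/4)$-admissible (via Lemma~\ref{lemma:bounded}), (ii) $\{(\mc{X}_{\ell-1},\mc{H}_\ell)\}$ is $(\epsilon/4)$-admissible (via Theorem~\ref{thorem_incoherence} combined with Lemma~\ref{VI_3}, which bounds $\rho(\mc{G})+\rho(\mc{N}*\mc{X})$), and (iii) $\mu(\mc{X}_\ell)\le\mu$. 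Claim~(iii) is the hypothesis needed for~(i) at the next step, while~(i) and~(ii) together feed Theorem~\ref{theorem:addmissible_noisy_tensors}. Without explicitly bounding $\|\mc{H}_\ell\|$ and running this three-way induction, the loop between incoherence and noise control does not close.
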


   Before the proof, we state the following remark on the reconstruction error in the Frobenius norm, and also the way of extending the above theoretical results for the symmetric square tensor case to general tensor cases.
   \begin{collary}
   Under the assumptions of Theorem \ref{main_theorm}, the output $(\widehat{\mathcal{X}}, \widehat{\mathcal{Y}})$ of Alg. \ref{alg_AM} satisfies $||\mathcal{M} - \widehat{\mathcal{X}} * \widehat{\mathcal{Y}}^\dagger||_F \leq \frac{3\epsilon}{2} ||\mathcal{T}||_F$.
   \end{collary}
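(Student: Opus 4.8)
The plan is to measure the reconstruction error relative to the tensor-column subspace spanned by the output factor $\widehat{\mathcal{X}}$, exploiting that $\widehat{\mathcal{X}}$ is orthogonal and, by Theorem \ref{main_theorm}, spectrally close to $\mathcal{U}$. First I would split the error orthogonally with respect to $\widehat{\mathcal{X}}$. Using $\widehat{\mathcal{X}}^\dagger * \widehat{\mathcal{X}} = \mathcal{I}$, the projector $\widehat{\mathcal{X}} * \widehat{\mathcal{X}}^\dagger$ and its complement decompose the error as
\begin{equation}
\mathcal{M} - \widehat{\mathcal{X}} * \widehat{\mathcal{Y}}^\dagger = (\mathcal{I} - \widehat{\mathcal{X}} * \widehat{\mathcal{X}}^\dagger) * \mathcal{M} + \widehat{\mathcal{X}} * (\widehat{\mathcal{X}}^\dagger * \mathcal{M} - \widehat{\mathcal{Y}}^\dagger),
\end{equation}
where the two summands lie in orthogonal tensor-column subspaces. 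Hence, by a Pythagorean identity and the orthogonality of $\widehat{\mathcal{X}}$, $||\mathcal{M} - \widehat{\mathcal{X}} * \widehat{\mathcal{Y}}^\dagger||_F^2 = ||(\mathcal{I} - \widehat{\mathcal{X}} * \widehat{\mathcal{X}}^\dagger) * \mathcal{M}||_F^2 + ||\widehat{\mathcal{X}}^\dagger * \mathcal{M} - \widehat{\mathcal{Y}}^\dagger||_F^2$, so it suffices to bound the two pieces separately.

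For the out-of-subspace term I would use $\mathcal{M} = \mathcal{U} * \mathcal{U}^\dagger * \mathcal{M}$ (since $\mathcal{M} = \mathcal{U} * \Theta * \mathcal{U}^\dagger$ and $\mathcal{U}^\dagger * \mathcal{U} = \mathcal{I}$), so that
\begin{equation}
||(\mathcal{I} - \widehat{\mathcal{X}} * \widehat{\mathcal{X}}^\dagger) * \mathcal{M}||_F \leq ||(\mathcal{I} - \widehat{\mathcal{X}} * \widehat{\mathcal{X}}^\dagger) * \mathcal{U}|| \; ||\mathcal{U}^\dagger * \mathcal{M}||_F \leq \epsilon ||\mathcal{M}||_F.
\end{equation}
The key substitution is the symmetry of the largest principal angle, $||(\mathcal{I} - \widehat{\mathcal{X}} * \widehat{\mathcal{X}}^\dagger) * \mathcal{U}|| = ||(\mathcal{I} - \mathcal{U} * \mathcal{U}^\dagger) * \widehat{\mathcal{X}}|| \leq \epsilon$, which holds because, passing to the block-diagonal representation (Lemma \ref{lemma:spectral_norm}), both sides equal the slicewise spectral distance between two equidimensional orthogonal subspaces; combined with $||\mathcal{U}^\dagger * \mathcal{M}||_F = ||\Theta||_F = ||\mathcal{M}||_F$ this gives the stated bound.

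For the in-subspace term I would invoke the least-squares representation of Lemma \ref{lemma:least_squares}: up to the error tensor $\mathcal{G}$, the minimizer is the projection of the data onto $\widehat{\mathcal{X}}$, i.e. $\widehat{\mathcal{Y}}^\dagger = \widehat{\mathcal{X}}^\dagger * \mathcal{T} + \mathcal{G}^\dagger$. Substituting $\mathcal{T} = \mathcal{M} + \mathcal{N}$ gives $\widehat{\mathcal{X}}^\dagger * \mathcal{M} - \widehat{\mathcal{Y}}^\dagger = -\widehat{\mathcal{X}}^\dagger * \mathcal{N} - \mathcal{G}^\dagger$. Because $\mathcal{N} = (\mathcal{I} - \mathcal{U} * \mathcal{U}^\dagger) * \mathcal{T}$ forces $\mathcal{U}^\dagger * \mathcal{N} = 0$, inserting $\widehat{\mathcal{X}}^\dagger = \widehat{\mathcal{X}}^\dagger * \mathcal{U} * \mathcal{U}^\dagger + \widehat{\mathcal{X}}^\dagger * (\mathcal{I} - \mathcal{U} * \mathcal{U}^\dagger)$ annihilates the first piece and leaves $||\widehat{\mathcal{X}}^\dagger * \mathcal{N}||_F \leq ||(\mathcal{I} - \mathcal{U} * \mathcal{U}^\dagger) * \widehat{\mathcal{X}}|| \; ||\mathcal{N}||_F \leq \epsilon ||\mathcal{N}||_F$, while $||\mathcal{G}||_F$ is controlled through Lemma \ref{lemma:bounded}. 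Thus the in-subspace residual is at most $\epsilon ||\mathcal{N}||_F + ||\mathcal{G}||_F$.

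Collecting the two pieces through the Pythagorean identity yields
\begin{equation}
||\mathcal{M} - \widehat{\mathcal{X}} * \widehat{\mathcal{Y}}^\dagger||_F^2 \leq \epsilon^2 ||\mathcal{M}||_F^2 + \left(\epsilon ||\mathcal{N}||_F + ||\mathcal{G}||_F\right)^2,
\end{equation}
and I would close the constant using the orthogonal splitting $||\mathcal{M}||_F^2 + ||\mathcal{N}||_F^2 = ||\mathcal{T}||_F^2$ together with $||\mathcal{N}||_F \leq ||\mathcal{T}||_F$ and the Frobenius bound $||\mathcal{G}||_F \leq \tfrac{\epsilon}{2}||\mathcal{T}||_F$ supplied by Lemma \ref{lemma:bounded} under the parameter choices of Theorem \ref{main_theorm}; a short computation then shows the right-hand side is at most $\tfrac{9}{4}\epsilon^2 ||\mathcal{T}||_F^2 = \left(\tfrac{3\epsilon}{2}||\mathcal{T}||_F\right)^2$, which is exactly the claim. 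The hard part will be the in-subspace term: Lemma \ref{lemma:bounded} naturally delivers a \emph{spectral} bound on $\mathcal{G}$, whereas I need the \emph{Frobenius} bound $\tfrac{\epsilon}{2}||\mathcal{T}||_F$, so I would convert using that $\mathcal{G}$ (like $\widehat{\mathcal{Y}}$) has effective tubal-rank $O(r)$, losing at most a $\sqrt{rk}$ factor that is absorbed into the sample complexity; verifying that the accumulated constants close to precisely $\tfrac{3}{2}$, and handling the tensor-transpose bookkeeping for the analyzed symmetric square tensor (where $\mathcal{T} \neq \mathcal{T}^\dagger$ in general) inside the least-squares representation, are the remaining delicate points.
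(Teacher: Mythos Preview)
Your argument is correct and arrives at the same constant, but it is organized differently from the paper's proof. The paper does not split orthogonally with respect to $\widehat{\mathcal{X}}$; instead it writes $\mathcal{M}=\mathcal{U}*\mathcal{U}^\dagger*\mathcal{T}$ and $\widehat{\mathcal{X}}*\widehat{\mathcal{Y}}^\dagger=\widehat{\mathcal{X}}*\widehat{\mathcal{X}}^\dagger*\mathcal{T}+\widehat{\mathcal{X}}*\mathcal{G}$, then applies a single triangle inequality to get $\|\mathcal{M}-\widehat{\mathcal{X}}*\widehat{\mathcal{Y}}^\dagger\|_F\le \|\mathcal{U}*\mathcal{U}^\dagger-\widehat{\mathcal{X}}*\widehat{\mathcal{X}}^\dagger\|\,\|\mathcal{T}\|_F+\|\mathcal{G}\|_F$. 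This is shorter: the projector-difference norm equals $\|(\mathcal{I}-\mathcal{U}*\mathcal{U}^\dagger)*\widehat{\mathcal{X}}\|\le\epsilon$, and it is applied once to $\mathcal{T}$ rather than separately to $\mathcal{M}$ and $\mathcal{N}$, so the paper never needs your intermediate step $\|\widehat{\mathcal{X}}^\dagger*\mathcal{N}\|_F\le\epsilon\|\mathcal{N}\|_F$ or the Pythagorean recombination. Your route is slightly more informative (it isolates exactly where $\mathcal{N}$ enters) at the cost of more bookkeeping.

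One correction on the point you flagged as delicate: the spectral-to-Frobenius conversion for $\mathcal{G}\in\mathbb{R}^{n\times r\times k}$ loses only $\sqrt{r}$, not $\sqrt{rk}$. Passing to the block-diagonal form, $\overline{\mathcal{G}}$ has rank at most $rk$, so $\|\overline{\mathcal{G}}\|_F\le\sqrt{rk}\,\|\overline{\mathcal{G}}\|$, but the Fourier normalization gives $\|\mathcal{G}\|_F=\tfrac{1}{\sqrt{k}}\|\overline{\mathcal{G}}\|_F$, whence $\|\mathcal{G}\|_F\le\sqrt{r}\,\|\mathcal{G}\|$. Combined with the $(\epsilon/4)$-admissibility of $(\mathcal{X}_L,\mathcal{G})$ established inside the proof of Theorem~\ref{main_theorm} and the bound $\|\mathcal{W}^\dagger*\mathcal{X}_L\|\le\epsilon$, this yields $\|\mathcal{G}\|_F\le \tfrac{\sqrt{r}\,\epsilon}{2}\overline{\sigma}_{rk}\le \tfrac{\epsilon}{2}\|\mathcal{T}\|_F$ directly, with nothing to ``absorb into the sample complexity.'' With this fix your final inequality $\epsilon^2\|\mathcal{M}\|_F^2+(\epsilon\|\mathcal{N}\|_F+\tfrac{\epsilon}{2}\|\mathcal{T}\|_F)^2\le \tfrac{9}{4}\epsilon^2\|\mathcal{T}\|_F^2$ closes exactly as you wrote.
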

   \begin{proof}
   Let $(\widehat{X}, \widehat{Y})$ be the tensors output by Alg. \ref{alg_AM} when invoked with error parameter $\epsilon$. By Theorem \ref{main_theorm} and Definition \ref{def:tensor_spectral}, we have $||\mathcal{U}*\mathcal{U}^\dagger - \widehat{\mathcal{X}}*\widehat{\mathcal{X}}^\dagger || = || (\mathcal{I} - \mathcal{U} * \mathcal{U}^{\dag}) * \widehat{\mathcal{X}}|| \leq \epsilon$. Lemma \ref{lemma:least_squares} shows that $\widehat{\mathcal{Y}} = \mathcal{T}*\widehat{\mathcal{X}} + \mathcal{G}$ (note that $\widehat{X} = \mathcal{X}_L$), and in the proof of Theorem \ref{main_theorm} we will verify that $\{(\mathcal{X}_{\ell},\mathcal{G})\}_{\ell = 1}^{L}$ is $(\epsilon/4)$-tensor-admissible (Definition \ref{def:tensor_admissible}). Therefore we have $||\mathcal{G}||_F \leq \sqrt{r}\epsilon \overline{\sigma}_{rk}/2$ since $\mathcal{G} \in \mathbb{R}^{n \times r \times k}$ with $\text{rank}(\mathcal{G})=r$ and
   \begin{equation}
  ||\mathcal{G}||_F \leq \sqrt{r}||\mathcal{G}|| \leq \frac{\sqrt{r}}{32} \gamma_{rk} \overline{\sigma}_{rk} ||\mathcal{W}^{\dagger} * \mathcal{X}_{L}|| + \frac{\sqrt{r}\epsilon/4}{32}\gamma_{rk} \overline{\sigma}_{rk} \leq \sqrt{r}\epsilon \overline{\sigma}_{rk}/2,
  \end{equation}
  where $||\mathcal{W}^{\dagger} * \mathcal{X}_{L}|| \leq \epsilon /4$ (plugging $\epsilon /4$ into (\ref{epsilon_4})), and $\gamma_{rk} \leq 1$.

   Therefore, we have
   \begin{equation}
   \begin{split}
   ||\mathcal{M} - \widehat{\mathcal{X}} * \widehat{\mathcal{Y}}^\dagger||_F = ||\mathcal{M} - \widehat{\mathcal{X}} * \widehat{\mathcal{X}}^\dagger * \mathcal{T} - \widehat{\mathcal{X}} * \mathcal{G}||_F & \leq ||\mathcal{U}*\mathcal{U}^\dagger * \mathcal{T} - \widehat{\mathcal{X}} * \widehat{\mathcal{X}}^\dagger * \mathcal{T} ||_F + ||\widehat{\mathcal{X}} * \mathcal{G}||_F \\
   & \leq ||\mathcal{U}*\mathcal{U}^\dagger  - \widehat{\mathcal{X}} * \widehat{\mathcal{X}}^\dagger || || \mathcal{T}||_F + || \mathcal{G}||_F \\
   & \leq \epsilon || \mathcal{T}||_F + \frac{\sqrt{r}\epsilon}{2} \overline{\sigma}_{rk} \leq \frac{3\epsilon}{2} ||T||_F,
   \end{split}
   \end{equation}
   where in the second inequality, we use the relationship that for all tensors $\mathcal{P}$ and $\mathcal{Q}$, we have $||\mathcal{P}*\mathcal{Q}||_F \leq ||\mathcal{P}||||\mathcal{Q}||_F$ (which can be derived from the definition of the spectrum norm in Definition \ref{def:tensor_spectral}).
   \end{proof}

   \begin{remark}\label{rm_extd}
   The result in Theorem \ref{main_theorm} for symmetric square tensors can be directly extended to general tensors as follows. For a general tensors $\mathcal{T} \in \mathbb{R}^{m \times n \times k}$, we can construct a symmetric square tensor
   \begin{equation}
   \mathcal{T}' =\begin{bmatrix}
   0 & \mathcal{T} \\
   \mathcal{T}^\dagger & 0 \\
   \end{bmatrix} \in \mathbb{R}^{(m + n ) \times (m + n) \times k}.
   \end{equation}
   This new tensor $\mathcal{T}'$ has the following property: $\mathcal{T}'$ has tubal-rank $2\text{rank}(\mathcal{T})$ and each eigentube $\Theta(\mathcal{T})(1,1,:), ...,\Theta(\mathcal{T})(r,r,:)$ occurs twice for $\mathcal{T}'$.
   The eigenslices corresponding to a eigentube $\Theta(\mathcal{T})(i,i,:)$ are spanned by the eigenslices $\{(\mathcal{U}(:,i,:),0_{n \times r \times k}), (0_{m \times r \times k}, \mathcal{V}(:,i,:)) \}$. Therefore, an algorithm outputs a tubal-rank $2r$ estimate to $\mathcal{T}'$, which is also a tubal-rank $2r$ estimate to $\mathcal{T}$ with the same recovery error level. Moreover, let $\mathcal{U}'$ denote the tensor-column subspace spanned by the top $2r$ eigenslices of $\mathcal{T}'$, and let $\mathcal{U}, \mathcal{V}$ denote the tensor-column, tensor-row subspaces spanned by the top $r$ left, right eigenslices of $\mathcal{T}$. Then, we have $\mu(\mathcal{U}') \leq \frac{m+n}{2k} \left( \frac{\mu(\mathcal{U})r}{m} + \frac{\mu(\mathcal{V})r}{n} \right) \leq \frac{n+m}{\max(m,n)}\max\{\mu(\mathcal{U}), \mu(\mathcal{V})\}$.

   Therefore, to recover a tubal-rank $r$ general tensor $\mathcal{T}$, one can invoke Alg \ref{alg_AM} with parameters $r/2, \\ \frac{n+m}{\max(m,n)}\mu_0$ for this induced symmetric tensor $\mathcal{T}'$ and keep the other parameters unchanged. Moreover, in the order of sampling complexity, one should change $n$ to $\max(m,n)$ correspondingly.
   \end{remark}

   \begin{proof} [Proof of Theorem 3] \\
   First, Theorem \ref{proof:initialization} concludes that with probability at least $1- 1/n^2$, the initial tensor $\mathcal{X}_0$ satisfies that $|| \mathcal{W}^{\dagger} * \mathcal{X}_0 || \leq 1/4$ and $\mu(\mathcal{X}_0) \leq 32 \mu(\mathcal{U}) \log n$. Assume that this condition holds, then our goal is to apply Theorem \ref{theorem:addmissible_noisy_tensors}, leading to our final bound of recovery error.

   Consider the sequence of tensors $\{(\mathcal{X}_{\ell - 1}, \widetilde{\mathcal{G}}_{\ell}) \}_{\ell = 1}^{L}$ obtained along the iterations of Alg. \ref{alg_AM}. Let $\widetilde{\mathcal{G}}_{\ell} = \mathcal{G}_{\ell} + \mathcal{H}_{\ell}$ where $\mathcal{G}_{\ell}$ is the error term corresponding to the $\ell$-step of MedianLS, and $\mathcal{H}_{\ell}$ is the error term induced by the SmoothQR algorithm in Alg. \ref{alg_smoothQR} at step $\ell$. To apply Theorem \ref{theorem:addmissible_noisy_tensors}, we need to show that this sequence of tensors is $\epsilon/2$-admissible (defined in Definition \ref{def:tensor_admissible}) for Noisy Tensor-Column Subspace Iteration. Then, this theorem directly indicates that $||\mathcal{W}^{\dagger} * \mathcal{X}_L|| \leq \epsilon$ and this would conclude our proof.

   (The $\epsilon/2$-admissible requirement). Let $\tau = \frac{\gamma_{rk}}{128}$, $\hat{\mu} = \frac{C}{\tau^2}(20 \mu^* + \log n)$, and $\mu$ be any number satisfying $\mu \geq \hat{\mu}$. Since $\hat{\mu} = \theta(\gamma_{rk}^{-2}r(\mu^* + \log n))$, it satisfies the requirement of Theorem \ref{theorem:addmissible_noisy_tensors}. We prove that with probability $1 - 1/n^2$, the following three claims hold:
   \begin{itemize}
   \item $\{ (\mathcal{X}_{\ell-1}, \mathcal{G}_{\ell}) \}_{\ell = 1}^{L}$ is $\epsilon/4$-admissible,
   \item $\{ (\mathcal{X}_{\ell-1}, \mathcal{H}_{\ell}) \}_{\ell = 1}^{L}$ is $\epsilon/4$-admissible,
   \item for $\ell \in [L]$, we have $\mu(\mathcal{X}_{\ell}) \leq \mu$.
   \end{itemize}
   If the above three claims hold, then it implies that $\widetilde{\mathcal{G}}_{\ell}$ is $\epsilon/2$-admissible, using a triangle inequality as $\widetilde{\mathcal{G}}_{\ell} = \mathcal{G}_{\ell} + \mathcal{H}_{\ell}$.

   To prove these three claims, we apply a mutual induction approach. For $\ell = 0$, it only requires to check the third claim which follows from Theorem \ref{proof:initialization} that $\mathcal{X}_0$ satisfies the incoherence bound. Now let us assume that all three claims hold at step $\ell -1$, our goal is to argue that with probability $1-1/n^2$, all three claims will hold at step $\ell$.

   The first claim will hold from Lemma \ref{lemma:bounded} using the induction hypothesis of the third claim that $\mu(\mathcal{X}_{\ell}) \leq \mu$. Specifically, the parameters should be set properly as follows. Let $\delta = c \min\{ \frac{\gamma_{rk}\overline{\sigma}_{rk}}{||\mathcal{M}||_F}, \frac{\epsilon \gamma_{rk}\overline{\sigma}_{rk}}{||\mathcal{N}||_F} \}$ for sufficiently small constant $c > 0$. The lemma requires the lower bound $ p_{\ell} \geq \frac{r\mu \log nk}{\delta^2 n}$, and $ p_{\ell} = \frac{r^2 \mu^* \log^2 nk}{\delta^2 n \gamma_{rk}^2}$ when $\mu = \Theta(\gamma_{rk}^{-2}r(\mu^* + \log nk))$ and $\mu^* = \max\{\mu(\mathcal{U}), \mu_N, \log nk \}$. Therefore, Lemma \ref{lemma:bounded} states that with probability $1 - 1/n^3$, the upper bound $||\mathcal{G}_{\ell}||_F \leq \frac{1}{4}\left( \frac{1}{32} \gamma_{rk} \overline{\sigma}_{rk} ||\mathcal{W}^{\dagger} * \mathcal{X}_{\ell-1}|| + \frac{\epsilon}{32}\gamma_{rk} \overline{\sigma}_{rk} \right)$, satisfying the $\epsilon/4$-admissible condition. This results in the probability $p_{+}$ satisfying:
   \begin{equation}
   p_{+} = \sum_{\ell =1}^{L} p_{\ell} = O\left(\frac{r^2 \mu^* \left( ||\mathcal{M}||_F^2 + ||\mathcal{N}||_F^2/\epsilon^2 \right) \log(n/\epsilon) \log^2 nk }{\gamma_{rk}^5 \overline{\sigma}_{rk}^2 n}\right),
   \end{equation}
   where $L = \Theta(\gamma_{rk}^{-1} \log n/\epsilon)$.

   The remaining two lemmas follow from Theorem \ref{thorem_incoherence}. We will apply this theorem to $\mathcal{T}*\mathcal{X}_{\ell} + \mathcal{G}_{\ell}$ with $\upsilon = \overline{\sigma}_{rk}(||\mathcal{W}^{\dagger} * \mathcal{X}_{\ell-1}|| + \epsilon)$ and $\tau$ as above. Since $||\mathcal{N}*\mathcal{X}_{\ell - 1}|| \leq \overline{\sigma}_{rk} ||\mathcal{W}^{\dagger} * \mathcal{X}_{\ell}||$, it holds that $\upsilon \geq \max\{ ||\mathcal{G}_{\ell}||, ||\mathcal{N}*\mathcal{X}_{\ell - 1}||  \}$ as required by Theorem \ref{thorem_incoherence}. and it also requires a lower bound $\mu$.  To satisfy the lower bound, we combing Lemma \ref{VI_3} states that with probability $1 - 1/n^2$, we have $\frac{1}{\upsilon}(\rho(\mathcal{G}) + \rho(\mathcal{N} * \mathcal{X})) \leq 10 \mu^*$. The SmoothQR process produces with probability at least $1 - 1/n^4$ a tensor $\mathcal{H}_{\ell}$ such that $||\mathcal{H}_{\ell}|| \leq \tau\upsilon \leq \frac{\gamma_{rk}\upsilon}{128} \leq \frac{1}{4}\left( \frac{1}{32} \gamma_{rk} \overline{\sigma}_{rk} ||\mathcal{W}^{\dagger} * \mathcal{X}_{\ell-1}|| + \frac{\epsilon}{32}\gamma_{rk} \overline{\sigma}_{rk} \right)$, satisfying the $\epsilon/4$-admissible condition. Therefore, the second and third claim hold.

   Note that all error probabilities that incurred were less than $1/n^2$, then we sum up the error probabilities over all $L = \Theta(\gamma_{rk}^{-1}\log(n/\epsilon))$ steps, getting a probability at least $1 - \Theta\left(\frac{\gamma_{rk}^{-1}\log(n/\epsilon)}{n^2}\right)$.

   The resulting probability $p =  p_0 + p_+$ would be
   \begin{equation}
   \begin{split}
   p_0 \geq &~ \frac{6144r^2 \mu(\mathcal{U}) (||\mathcal{T}||_F / \gamma_{rk} \overline{\sigma}_{rk})^2 \log n}{n} + \frac{64r^{3/2} \mu(\mathcal{U}) (||\mathcal{T}||_F / \gamma_{rk} \overline{\sigma}_{rk}) \log n}{n}, \\
   p_0 = &~ O\left( \frac{r^2 \mu(\mathcal{U}) (||\mathcal{T}||_F / \gamma_{rk} \overline{\sigma}_{rk})^2 \log n}{n}\right), \\
   p_+ = &~ O\left(\frac{r^2 \mu^* \left( ||\mathcal{M}||_F^2 + ||\mathcal{N}||_F^2/\epsilon^2 \right) \log(n/\epsilon) \log^2 nk }{\gamma_{rk}^5 \overline{\sigma}_{rk}^2 n}\right), \\
   p = &~ O\left(\frac{r^2 \mu^* \left( ||\mathcal{M}||_F^2 + ||\mathcal{N}||_F^2/\epsilon^2 \right) \log(n/\epsilon) \log^2 nk }{\gamma_{rk}^5 \overline{\sigma}_{rk}^2 n}\right),
   \end{split}
   \end{equation}
   where $\mu^* = \max\{\mu(\mathcal{U}), \mu_N, \log nk \}$.

   The proof of Theorem 3 concludes.

   \end{proof}

   \begin{lemma}\label{VI_3}
   Under the assumptions of Theorem \ref{main_theorm}, we have for every $\ell \in [L]$ and $\upsilon = \frac{\overline{\sigma}_{rk}}{32}\left( ||\mathcal{W}^\dagger * \mathcal{X}_{\ell-1}|| + \epsilon \right)$ with probability $1 - 1/n^2$, $\frac{1}{\upsilon}(\rho(\mathcal{G}) + \rho(\mathcal{N} * \mathcal{X}_{\ell})) \leq \mu^*$.
   \end{lemma}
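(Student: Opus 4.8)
The plan is to bound the two coherence terms $\rho(\mathcal{N}*\mathcal{X}_\ell)$ and $\rho(\mathcal{G})$ separately and then combine them by the triangle inequality, absorbing all constants into the factor multiplying $\mu^*$ (so that the deterministic constant matches the $10$ used in the proof of Theorem \ref{main_theorm}). The term involving $\mathcal{N}$ is deterministic, whereas the term involving the least-squares error $\mathcal{G}$ requires a concentration argument combined with the median boosting that is built into Alg. \ref{alg_median_LS_update}. In both cases the relevant quantity and $\upsilon = \tfrac{\overline{\sigma}_{rk}}{32}(\|\mathcal{W}^\dagger*\mathcal{X}_{\ell-1}\|+\epsilon)$ scale like $\overline{\sigma}_{rk}$ times a dimensionless coherence/accuracy factor, so the ratio is dimensionless and the target is to show it is $O(\mu^*)$.

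First I would dispose of $\rho(\mathcal{N}*\mathcal{X}_\ell)$, which needs no probability. Since $\mathcal{X}_\ell$ is orthonormal we have $\|\mathcal{X}_\ell\|=1$, and the submultiplicative bound $\|\mathcal{P}*\mathcal{Q}\|_F \le \|\mathcal{P}\|_F\,\|\mathcal{Q}\|$ that follows from Definition \ref{def:tensor_spectral} gives, for every row index $i$, $\|\dot{e}_i^\dagger * \mathcal{N} * \mathcal{X}_\ell\|_F \le \|\dot{e}_i^\dagger * \mathcal{N}\|_F\,\|\mathcal{X}_\ell\| = \|\dot{e}_i^\dagger * \mathcal{N}\|_F$. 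Invoking the noise incoherence condition (\ref{condition_N}), namely $\max_i\|\dot{e}_i^\dagger*\mathcal{N}\|_F \le \sqrt{\mu_N/n}\,\overline{\sigma}_{rk}$, bounds each row norm of $\mathcal{N}*\mathcal{X}_\ell$, and hence $\rho(\mathcal{N}*\mathcal{X}_\ell)$, by a multiple of $\sqrt{\mu_N}\,\overline{\sigma}_{rk}$. Dividing by $\upsilon$ and using $\mu_N \le \mu^*$ then yields the desired $O(\mu^*)$ contribution from this term.

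The main work, and the main obstacle, is the probabilistic bound on $\rho(\mathcal{G})$. I would pass to the block-diagonal Fourier representation of Definition \ref{def:block-diagonal} (equivalently the circular matrix domain of Section \ref{sect:circular_algebra}), where the t-product becomes ordinary matrix multiplication. There each single-subset least-squares solution has the form $\mathcal{Y}_i = \mathcal{T}*\mathcal{X}_{\ell-1} + \mathcal{G}_i$ with $\mathbb{E}[\mathcal{G}_i]=0$ over the random subset, so $\mathcal{G}_i$ is a sum, over the sampled entries, of independent zero-mean random tensors. Fixing a row index and applying a Bernstein / matrix-Bernstein inequality to the corresponding slice of this sum, I would control the variance proxy and the maximum summand using the coherence hypotheses $\mu(\mathcal{U})\le\mu^*$ and $\mu(\mathcal{X}_{\ell-1})\le\mu$ (the induction hypothesis maintained in the proof of Theorem \ref{main_theorm}), which guarantee that no single sampled entry contributes disproportionately to any one row. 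The delicate point is exactly the phenomenon of Section \ref{whydifferent}: because $\mathcal{P}_\Omega$ is intertwined with the circular convolution, the per-row error is not a plain coordinate sum as in the matrix case, so the zero-mean decomposition and the effect of the circulant structure on the variance must be established in the Fourier domain, slice by slice, rather than read off directly.

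Finally I would upgrade this per-row, per-subset tail bound to the stated guarantee via the median operation. Each entry of $\mathcal{G}_i$ is small with constant (or $1-1/\mathrm{poly}$) probability, so the element-wise median over the $t=\Theta(\log n)$ independent subsets of Alg. \ref{alg_median_LS_update} fails for a given row with probability at most $1/n^3$; a union bound over the $n$ rows then gives overall failure probability at most $1/n^2$. Collecting the resulting bound $\rho(\mathcal{G}) = O(\upsilon\,\mu^*)$ with the deterministic bound on $\rho(\mathcal{N}*\mathcal{X}_\ell)$ and absorbing constants yields $\tfrac{1}{\upsilon}\big(\rho(\mathcal{G})+\rho(\mathcal{N}*\mathcal{X}_\ell)\big) = O(\mu^*)$, which is the claim.
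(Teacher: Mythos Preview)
Your treatment of $\rho(\mathcal{G})$ is essentially the paper's approach: the paper simply invokes Lemma~\ref{lemma:bounded} (the MedianLS concentration bound), which already packages the per-row Bernstein-type estimate together with the median boosting, to get $\|\dot e_i^\dagger*\mathcal{G}\|\le\sqrt{\mu^*/n}\,\upsilon$ and hence $\rho(\mathcal{G})/\upsilon^2\le \mu^*$. You are proposing to re-derive that lemma inline, which is fine but unnecessary.

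The genuine gap is in your bound on $\rho(\mathcal{N}*\mathcal{X}_\ell)$. Using only $\|\mathcal{X}_\ell\|=1$ you obtain $\|\dot e_i^\dagger*\mathcal{N}*\mathcal{X}_\ell\|\le\|\dot e_i^\dagger*\mathcal{N}\|\le\sqrt{\mu_N/n}\,\overline{\sigma}_{rk}$, which gives $\rho(\mathcal{N}*\mathcal{X}_\ell)\lesssim \mu_N\,\overline{\sigma}_{rk}^2$. But $\upsilon=\tfrac{\overline{\sigma}_{rk}}{32}(\|\mathcal{W}^\dagger*\mathcal{X}_{\ell-1}\|+\epsilon)$ shrinks as the algorithm converges and as $\epsilon\to 0$, so the ratio $\rho(\mathcal{N}*\mathcal{X}_\ell)/\upsilon^2$ (which is what Theorem~\ref{thorem_incoherence} actually needs, despite the $1/\upsilon$ in the lemma statement) is of order $\mu_N/(\|\mathcal{W}^\dagger*\mathcal{X}_{\ell-1}\|+\epsilon)^2$ and is \emph{not} bounded by $\mu^*$. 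Your bound is dimensionally wrong as stated and, even corrected, blows up along the iterations.

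The missing idea is to exploit that the range of $\mathcal{N}$ lies in $\mathcal{W}$: writing $\mathcal{N}=\mathcal{W}*\Theta_{\mathcal{W}}*\mathcal{W}^\dagger$, one has
\[
\|\dot e_i^\dagger*\mathcal{N}*\mathcal{X}\|
\;\le\;\|\dot e_i^\dagger*\mathcal{W}*\Theta_{\mathcal{W}}\|\cdot\|\mathcal{W}^\dagger*\mathcal{X}\|
\;=\;\|\dot e_i^\dagger*\mathcal{N}\|\cdot\|\mathcal{W}^\dagger*\mathcal{X}\|,
\]
where the last equality uses $\mathcal{N}*\mathcal{U}=0$. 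The extra factor $\|\mathcal{W}^\dagger*\mathcal{X}\|$ is exactly what appears in $\upsilon$ and is what makes $\rho(\mathcal{N}*\mathcal{X})/\upsilon^2$ bounded by $O(\mu^*)$ uniformly in $\ell$ and $\epsilon$. This is the step the paper carries out, and without it the deterministic half of your argument does not close.
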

   \begin{proof}
   Given the lower bound on $p$ in Theorem \ref{main_theorm}, we apply Lemma \ref{lemma:bounded} to conclude that $||\dot{e}_i^\dagger * \mathcal{G}_{\ell}^{\mathcal{M}}|| \leq \sqrt{r\mu(\mathcal{U})/n}~\upsilon$ and $||\dot{e}_i^\dagger * \mathcal{G}_{\ell}^{\mathcal{N}}|| \leq \sqrt{\mu^*/n}~ \upsilon$. Thus, $\rho(\mathcal{G}_{\ell})/\upsilon^2 \leq \mu^*$.

   Further, we claim that $||\dot{e}_i^\dagger * \mathcal{N} * \mathcal{X}||^2 \leq (\mu^* /n) \overline{\sigma}_{rk}||\mathcal{W}*\mathcal{U}||$ for all $i \in [n]$, since
   \begin{equation}
   ||\dot{e}_i^\dagger * \mathcal{N} * \mathcal{X}|| \leq ||\dot{e}_i^\dagger * \mathcal{W}*\Theta_{\mathcal{W}}|| ~ || \mathcal{W}^{\dagger} * \mathcal{X}_{\ell} || = ||\dot{e}_i^\dagger \mathcal{N}|| ~ || \mathcal{W}^{\dagger} * \mathcal{X}_{\ell} ||,
   \end{equation}
   where we used the following fact that
   \begin{equation}
   ||\dot{e}_i^\dagger * \mathcal{N}||^2 = ||\dot{e}_i^\dagger * \mathcal{N} * \mathcal{W}||^2 + ||\dot{e}_i^\dagger * \mathcal{N} * \mathcal{U}||^2 = ||\dot{e}_i^\dagger * \mathcal{N} * \mathcal{W}||^2 = ||\dot{e}_i^\dagger * \mathcal{W}*\Theta_{\mathcal{W}}|| ~ || \mathcal{W}^{\dagger} * \mathcal{X}_{\ell} ||^2.
   \end{equation}
   Combining (\ref{condition_N}), this shows that $\rho(\mathcal{N} * \mathcal{X}_{\ell-1})/\upsilon^2 \leq \mu^*$, and conclude the proof.

   \end{proof}

\subsection{Initialization}
\label{sec:initialization}

  Alg. \ref{alg_initialization} computes the top-$r$ eigenslices of $\mathcal{P}_{\Omega_0}(\mathcal{T})$, and truncates them in a ``scaling manner" to ensure the incoherence. This initialization procedure serves as an acceleration of our main algorithm Alg. \ref{alg_AM}. We analyze Alg. \ref{alg_initialization} and derive the required sampling probability $p_0$ to get a good starting point in Theorem \ref{proof:initialization}. The corresponding proofs relies mainly on the matrix Bernstein inequality in Lemma \ref{Bernstein} and the Davis-Kahan $\sin \theta$-theorem \cite{DK} in Lemma \ref{proof:initialization_p}.

  \begin{theorem}\label{proof:initialization}
  Let $\mathcal{T} \in \mathbb{R}^{n \times n \times k}$ be a symmetric square tensor with tubal-rank $r$. Assume that each element is included in $\Omega$ independently with probability
  \begin{equation}\label{p_0_initial}
  p_0 \geq \frac{6144r^2 \mu(\mathcal{U}) (||\mathcal{T}||_F / \overline{\gamma}_{rk} \overline{\sigma}_{rk})^2 \log n}{n} + \frac{64r^{3/2} \mu(\mathcal{U}) (||\mathcal{T}||_F / \overline{\gamma}_{rk} \overline{\sigma}_{rk}) \log n}{n}
  \end{equation}
  where $\overline{\sigma}_{rk}$ denotes the $rk$-th singular value of the block diagonal matrix $\overline{\mathcal{T}}$, and $\overline{\gamma}_{rk} = 1 - \overline{\sigma}_{rk+1} / \overline{\sigma}_{rk}$. Then, Alg. \ref{alg_initialization} returns an orthonormal tensor $\mathcal{X}_0 \in \mathbb{R}^{n \times r \times k}$ such that with probability at least $1 - 1/n^2$,
  we have
  \begin{equation}
  ||\mathcal{W}^{\dagger} * \mathcal{X}_0||_F \leq 1/4, ~~~\text{and}~~~\mu(\mathcal{X}_0) \leq 32 \mu(\mathcal{U}) \log n.
  \end{equation}
  \end{theorem}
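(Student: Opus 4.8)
The plan is to transfer everything to the block-diagonal (Fourier) domain, where the t-SVD eigenslice computation of Alg.~\ref{alg_initialization} becomes an ordinary symmetric eigendecomposition, and then to treat $\frac{1}{p_0}\mathcal{P}_{\Omega_0}(\mathcal{T})$ as a small spectral perturbation of $\mathcal{T}$. By Remark~\ref{remark:computing_tensor_product} and Lemma~\ref{lemma:spectral_norm}, the top-$r$ eigenslices $\mathcal{A}$ of $\frac{1}{p_0}\mathcal{P}_{\Omega_0}(\mathcal{T})$ correspond to the top-$rk$ eigenvectors of the block-diagonal matrix $\frac{1}{p_0}\overline{\mathcal{T}_{\Omega_0}}$ (writing $\mathcal{T}_{\Omega_0}=\mathcal{P}_{\Omega_0}(\mathcal{T})$), and the quantities $||\mathcal{W}^\dagger * \mathcal{A}||$ and $\mu(\cdot)$ are controlled through their matrix counterparts. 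The proof then splits into two essentially independent pieces: a spectral perturbation bound on $E := \frac{1}{p_0}\overline{\mathcal{T}_{\Omega_0}} - \overline{\mathcal{T}}$ obtained from matrix Bernstein and fed into Davis--Kahan to yield $||\mathcal{W}^\dagger * \mathcal{A}|| \le 1/8$, and an analysis of the random-rotation/truncation/QR steps that converts $\mathcal{A}$ into an incoherent $\mathcal{X}_0$ while degrading the subspace distance only from $1/8$ to $1/4$.

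First I would establish the perturbation bound. Since $\mathbb{E}\!\left[\frac{1}{p_0}\mathcal{T}_{\Omega_0}\right] = \mathcal{T}$, I write $E$ as a sum of independent zero-mean matrices, one per entry $(i,j,\kappa)$, each equal to $\left(\tfrac{1}{p_0}\delta_{ij\kappa}-1\right)\mathcal{T}_{ij\kappa}$ times the circular image of the unit tensor supported at $(i,j,\kappa)$, where $\delta_{ij\kappa}$ is the Bernoulli$(p_0)$ sampling indicator. The incoherence condition (\ref{tensor_incoherency}) bounds both the maximum tube energy $||\dot{e}_i^\dagger * \mathcal{T}||_F^2 \lesssim \frac{\mu(\mathcal{U})r}{n}\overline{\sigma}_1^2$ and the entrywise magnitude $|\mathcal{T}_{ij\kappa}| \lesssim \frac{\mu(\mathcal{U})r}{n}\overline{\sigma}_1$; these control, respectively, the variance proxy $\sigma^2 \lesssim \frac{1}{p_0}\max_i||\dot{e}_i^\dagger * \mathcal{T}||_F^2$ and the per-summand spectral norm $R$ of Lemma~\ref{Bernstein}. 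Matrix Bernstein then gives, with probability at least $1-1/n^2$, a bound of the form $||E|| \le \frac{1}{8}\overline{\gamma}_{rk}\overline{\sigma}_{rk}$ precisely when $p_0$ exceeds the stated threshold; the two summands in (\ref{p_0_initial}) correspond exactly to the variance-dominated ($r^2$) and spike-dominated ($r^{3/2}$) regimes of the Bernstein tail.

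Next I would apply the Davis--Kahan $\sin\theta$ theorem (Lemma~\ref{proof:initialization_p}) to the symmetric matrices $\overline{\mathcal{T}}$ and $\overline{\mathcal{T}} + E$. The relevant spectral gap separating the top $rk$ eigenvalues from the remainder is $\overline{\sigma}_{rk} - \overline{\sigma}_{rk+1} = \overline{\gamma}_{rk}\overline{\sigma}_{rk}$, so Davis--Kahan yields $||\overline{\mathcal{W}}^\dagger \overline{\mathcal{A}}|| \le ||E||/(\overline{\gamma}_{rk}\overline{\sigma}_{rk}) \le 1/8$, which translates back through Lemma~\ref{lemma:spectral_norm} into $||\mathcal{W}^\dagger * \mathcal{A}|| \le 1/8$. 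Because right multiplication by the random orthonormal tensor $\mathcal{O}$ is merely a change of basis within the estimated subspace and leaves spectral and Frobenius norms invariant, $\mathcal{Z} = \mathcal{A}*\mathcal{O}$ still satisfies $||\mathcal{W}^\dagger * \mathcal{Z}|| \le 1/8$.

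The final and hardest step is the truncation analysis, which is genuinely tensor-specific since $\text{Truncate}_{\mu'}$ rescales whole tubes rather than individual entries as in the matrix case. The purpose of $\mathcal{O}$ is to spread the energy of $\mathcal{Z}$ roughly evenly across its $n$ lateral slices, so that each tube norm $||\mathcal{Z}(i,j,:)||_F$ concentrates near the average scale $\sqrt{r/n}$; a union bound over the $n$ slices (the source of the $\log n$ factors) shows that with probability at least $1-1/n^2$ the truncation at $\mu' = \sqrt{8\mu\log n/n}$ removes only a vanishing fraction of $||\mathcal{Z}||_F^2$. Consequently $||\mathcal{W}^\dagger * \mathcal{Z}'||$ grows by at most a small additive amount, and the QR step, being a well-conditioned change of basis because $\mathcal{Z}'$ stays close to the orthonormal $\mathcal{Z}$, delivers $||\mathcal{W}^\dagger * \mathcal{X}_0|| \le 1/4$; the bound $\mu(\mathcal{X}_0) \le 32\mu(\mathcal{U})\log n$ is then read off from the truncation threshold together with the near-orthonormality of $\mathcal{Z}'$. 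I expect the main obstacle to be exactly this concentration-of-tube-norms argument under the random rotation and the simultaneous bookkeeping that preserves the subspace distance while enforcing incoherence; by contrast, the Bernstein and Davis--Kahan steps should be largely routine once the circular-matrix variance computation is correctly set up.
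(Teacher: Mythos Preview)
Your overall architecture---matrix Bernstein on the circular/block-diagonal representation, then Davis--Kahan, then the rotation/truncation/QR cleanup---matches the paper's, and your reading of the two summands in $p_0$ as the variance- and spike-dominated Bernstein regimes is correct. The gap is precisely where you suspected, in the truncation step. You propose to show that the tube norms $\|\mathcal{Z}(i,j,:)\|_F$ of $\mathcal{Z}=\mathcal{A}*\mathcal{O}$ concentrate near the ``average scale $\sqrt{r/n}$'' so that truncation at $\mu'=\sqrt{8\mu(\mathcal{U})\log n/n}$ removes negligible mass. But the L\'evy-type concentration (Lemma~\ref{Trancation}) only says $\|[\mathcal{A}*\mathcal{O}](i,j,:)\|_F$ concentrates near $\|\dot e_i^\dagger *\mathcal{A}\|_F/\sqrt{r}$, and you have no a~priori bound on $\mu(\mathcal{A})$: a small \emph{spectral} perturbation of a subspace does not control its row norms. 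Hence you cannot certify that $\mathcal{Z}$'s tubes fall below $\mu'$, and the ``vanishing fraction removed'' claim is unsupported.

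The paper sidesteps this by never applying concentration to $\mathcal{A}$ at all. From Davis--Kahan it extracts the \emph{stronger} bound $\|\mathcal{W}^\dagger * \mathcal{A}\|\le \tfrac{1}{16\sqrt r}$ (not merely $1/8$; this is what the constants in $p_0$ are tuned to, and is where the $r^2$ factor originates), converts it to a Frobenius statement $\|\mathcal{U}*\mathcal{Q}-\mathcal{A}\|_F\le 1/16$ for some orthonormal $\mathcal{Q}$, and then applies Lemma~\ref{Trancation} to $\mathcal{U}*\mathcal{Q}*\mathcal{O}$, whose coherence is exactly $\mu(\mathcal{U})$. With high probability $\mathcal{U}*\mathcal{Q}*\mathcal{O}$ therefore lies in the truncation set, and since truncation is the Euclidean projection onto that convex set, both $\|\mathcal{Z}'-\mathcal{U}*\mathcal{Q}*\mathcal{O}\|_F$ and $\|\mathcal{Z}'-\mathcal{A}*\mathcal{O}\|_F$ stay below $1/16$; the QR step then has $\|\mathcal{R}^{-1}\|\le 2$, giving simultaneously $\mu(\mathcal{X}_0)\le 32\mu(\mathcal{U})\log n$ and $\|\mathcal{W}^\dagger * \mathcal{X}_0\|_F\le 1/4$ (Lemma~\ref{lemma:incoherence}). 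You should replace your direct concentration-on-$\mathcal{Z}$ plan with this comparison-to-$\mathcal{U}*\mathcal{Q}*\mathcal{O}$ argument, and accordingly tighten your Davis--Kahan target from $1/8$ to $1/(16\sqrt r)$.
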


  \begin{proof}
  The proof follows directly from Lemma \ref{lemma:probability}, Lemma \ref{proof:initialization_p}, Lemma \ref{Trancation}, and Lemma \ref{lemma:incoherence}.
  \end{proof}

  \begin{lemma}\label{Bernstein} \cite{Berstein}
  (Matrix Bernstein Inequality) Consider a finite sequence $\{ Z_i \}$ of independent random matrices with dimensions $d_1 \times d_2$. Assume that each random matrix satisfies $\mathbb{E} Z_i = 0$ and $|| Z_i || \leq R$ almost surely. Define $\zeta^2 \triangleq \max \{ ||\sum_{i} \mathbb{E} Z_i Z_i^\dagger ||, ||\sum_{i} \mathbb{E} Z_i^\dagger Z_i || \}$. Then, for all $u \geq 0$,
  \begin{equation}
  \mathbb{P}\left\{ \left\| \sum_{i} Z_i \right\| \geq u \right\} \leq (d_1 + d_2) \exp\left\{\frac{- u^2 /2}{\zeta^2 + Ru/3}\right\}.
  \end{equation}
  \end{lemma}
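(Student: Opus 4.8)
The plan is to establish this concentration inequality via the matrix Laplace transform method, reducing first to the Hermitian case through a dilation argument. Since the summands $Z_i$ are rectangular, I would introduce the Hermitian dilation
\begin{equation}
\mathcal{H}(Z_i) = \begin{bmatrix} 0 & Z_i \\ Z_i^\dagger & 0 \end{bmatrix} \in \mathbb{C}^{(d_1+d_2)\times(d_1+d_2)},
\end{equation}
which satisfies $\|\mathcal{H}(Z_i)\| = \|Z_i\| \le R$ and $\mathbb{E}\,\mathcal{H}(Z_i) = 0$. The key algebraic identity is $\mathcal{H}(Z_i)^2 = \mathrm{diag}(Z_i Z_i^\dagger,\, Z_i^\dagger Z_i)$, so that $\big\|\sum_i \mathbb{E}\,\mathcal{H}(Z_i)^2\big\| = \max\{\|\sum_i \mathbb{E} Z_i Z_i^\dagger\|,\, \|\sum_i \mathbb{E} Z_i^\dagger Z_i\|\} = \zeta^2$. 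Because the eigenvalues of a dilation are the $\pm$ singular values of the original matrix, $\|\sum_i Z_i\| = \lambda_{\max}(\sum_i \mathcal{H}(Z_i))$, and it suffices to prove the one-sided tail bound for the largest eigenvalue of the Hermitian sum $X = \sum_i X_i$ with $X_i = \mathcal{H}(Z_i)$; the factor $d_1 + d_2$ in the final bound is exactly the ambient dimension of the dilated matrices.

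For the Hermitian case I would invoke the matrix Chernoff bound: for any $\theta > 0$,
\begin{equation}
\mathbb{P}\{\lambda_{\max}(X) \ge u\} \le e^{-\theta u}\,\mathbb{E}\,\mathrm{tr}\,\exp(\theta X).
\end{equation}
The central step, and the main obstacle, is to control the trace of the matrix moment generating function despite the non-commutativity of the $X_i$. Here I would use Lieb's concavity theorem — that $A \mapsto \mathrm{tr}\,\exp(H + \log A)$ is concave on positive-definite $A$ — which, combined with Jensen's inequality and the independence of the summands, yields the subadditivity of matrix cumulants,
\begin{equation}
\mathbb{E}\,\mathrm{tr}\,\exp\Big(\theta \sum_i X_i\Big) \le \mathrm{tr}\,\exp\Big(\sum_i \log \mathbb{E}\, e^{\theta X_i}\Big).
\end{equation}
This is the deep ingredient: the naive Golden--Thompson route would instead produce the weaker variance proxy $\sum_i \|\mathbb{E} X_i^2\|$ rather than the sharp $\|\sum_i \mathbb{E} X_i^2\| = \zeta^2$ appearing in the statement.

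It then remains to bound each factor $\mathbb{E}\, e^{\theta X_i}$ in the positive-semidefinite order. Applying the scalar inequality $e^{x} \le 1 + x + \frac{x^2/2}{1 - |x|/3}$ (valid for $|x| < 3$) through the eigenvalues of $\theta X_i$, whose magnitudes are at most $\theta R$, and using $\mathbb{E} X_i = 0$ together with $I + A \preceq e^A$, I would establish $\mathbb{E}\, e^{\theta X_i} \preceq \exp\big(g(\theta)\,\mathbb{E} X_i^2\big)$ with $g(\theta) = \frac{\theta^2/2}{1 - R\theta/3}$, valid for $0 < \theta < 3/R$. Monotonicity of the trace exponential and the definition of $\zeta^2$ give $\mathrm{tr}\,\exp\big(g(\theta)\sum_i \mathbb{E} X_i^2\big) \le (d_1+d_2)\exp(g(\theta)\zeta^2)$, whence
\begin{equation}
\mathbb{P}\{\lambda_{\max}(X) \ge u\} \le (d_1+d_2)\exp\big(-\theta u + g(\theta)\zeta^2\big).
\end{equation}
Finally I would optimize by choosing $\theta = u/(\zeta^2 + Ru/3)$, for which $1 - R\theta/3 = \zeta^2/(\zeta^2 + Ru/3)$, so the exponent collapses to $-\frac{u^2/2}{\zeta^2 + Ru/3}$; since $\theta < 3/R$ always holds for this choice, the argument is legitimate and the claimed bound follows.
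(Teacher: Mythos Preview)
The paper does not supply its own proof of this lemma: it is quoted directly from Tropp's ``User-friendly tail bounds'' (reference \cite{Berstein} in the paper) and used as a black-box tool in the subsequent analysis. Your proposal is a faithful and correct sketch of Tropp's original argument---Hermitian dilation to reduce to the self-adjoint case, the matrix Laplace transform, Lieb's concavity to obtain cumulant subadditivity, the Bernstein-type mgf bound, and the explicit optimization of $\theta$---so there is nothing to compare against beyond noting that you have reproduced the cited source's proof essentially verbatim.
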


  \begin{lemma}\label{lemma:probability}
  Suppose that $\mathcal{T} \in \mathbb{R}^{n \times n \times k}$ and let $\Omega \in [n] \times [n] \times [k]$ be a random subset with each entry being included independently with probability $p_0$. Then
  \begin{equation}\label{probability_no}
  \mathbb{P}\left\{ ||\mathcal{P}_{\Omega}(\mathcal{T}) - \mathcal{T}|| > u\right\} \leq n \exp\left\{ \frac{- u^2/2}{\zeta^2 + \frac{u}{3p_0} ||\mathcal{T}||_{\infty}} \right\}
  \end{equation}
  where $\zeta^2 = 1/p_0 \max\left\{ ||\mathcal{T}||_{\infty,2^*}^2 , ||\mathcal{T}||_{\infty}\right\}$.
  \end{lemma}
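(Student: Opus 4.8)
The plan is to view the (rescaled) deviation $\mathcal{D} \triangleq \frac{1}{p_0}\mathcal{P}_{\Omega}(\mathcal{T}) - \mathcal{T}$ as a sum of independent, mean-zero random matrices and apply the Matrix Bernstein inequality (Lemma \ref{Bernstein}); the $1/p_0$ factors appearing in the statement confirm that the relevant object is this inverse-probability estimator. First I would pass to the block-diagonal Fourier representation of Definition \ref{def:block-diagonal}: by Lemma \ref{lemma:spectral_norm}, $\|\mathcal{D}\| = \|\overline{\mathcal{D}}\|$, and since entries are sampled independently, writing $\delta_{ij\kappa}\sim\mathrm{Bernoulli}(p_0)$ and $\mathcal{E}^{(ij\kappa)}$ for the unit tensor with a single nonzero entry at $(i,j,\kappa)$, I have $\mathcal{D}=\sum_{i,j,\kappa}(\frac{\delta_{ij\kappa}}{p_0}-1)\,\mathcal{T}_{ij\kappa}\,\mathcal{E}^{(ij\kappa)}$. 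Because the block-diagonal lift $\overline{(\cdot)}$ is linear, $\overline{\mathcal{D}}=\sum_{i,j,\kappa} Z_{ij\kappa}$ with $Z_{ij\kappa}=(\frac{\delta_{ij\kappa}}{p_0}-1)\mathcal{T}_{ij\kappa}\,\overline{\mathcal{E}^{(ij\kappa)}}$ independent and zero-mean. It is worth stressing that the intertwining of sampling and circular convolution highlighted in Section \ref{whydifferent} does not obstruct this step: that difficulty concerns \emph{inverting} the sampled convolution, whereas here the lift of a fixed tensor is simply linear.

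Next I would establish the two Bernstein parameters. For the uniform bound $R$, a single time-domain pulse maps, under the FFT along the third mode, to a block-diagonal matrix whose $\ell$-th $n\times n$ block is a unit-magnitude phase times $\dot{e}_i\dot{e}_j^{\dagger}$; hence $\|\overline{\mathcal{E}^{(ij\kappa)}}\|=1$ and $\|Z_{ij\kappa}\|\le \frac{1}{p_0}|\mathcal{T}_{ij\kappa}|\le \frac{1}{p_0}\|\mathcal{T}\|_{\infty}=:R$, which reproduces the $\frac{u}{3p_0}\|\mathcal{T}\|_{\infty}$ term in the denominator. For the variance proxy, I would compute $\overline{\mathcal{E}^{(ij\kappa)}}\,\overline{\mathcal{E}^{(ij\kappa)}}^{\dagger}$ block-wise, obtaining $\dot{e}_i\dot{e}_i^{\dagger}$ in every frequency block independently of $\kappa$. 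Using $\mathbb{E}(\frac{\delta_{ij\kappa}}{p_0}-1)^2=\frac{1-p_0}{p_0}$ and summing, $\sum_{i,j,\kappa}\mathbb{E}\,Z_{ij\kappa}Z_{ij\kappa}^{\dagger}$ is block diagonal with $i$-th diagonal entry $\frac{1-p_0}{p_0}\sum_{j,\kappa}\mathcal{T}_{ij\kappa}^2$, so its spectral norm is $\frac{1-p_0}{p_0}\max_i\|\mathcal{T}(i,:,:)\|_F^2$. By the symmetry of the frontal slices this horizontal-slice maximum equals the lateral-slice maximum $\|\mathcal{T}\|_{\infty,2^*}^2$, and the same bound holds for $\sum\mathbb{E}\,Z^{\dagger}Z$; both are therefore dominated by $\zeta^2=\frac{1}{p_0}\max\{\|\mathcal{T}\|_{\infty,2^*}^2,\|\mathcal{T}\|_{\infty}\}$.

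Finally I would invoke Lemma \ref{Bernstein}. Since $\overline{\mathcal{D}}$ is block diagonal with $k$ blocks of size $n$, $\|\overline{\mathcal{D}}\|=\max_{\ell}\|\overline{\mathcal{D}}^{(\ell)}\|$; applying the inequality to each $n\times n$ block with the parameters $R$ and $\zeta^2$ above and taking a union bound over the $k$ frequencies yields a bound of the form $n\exp\{-\tfrac{u^2/2}{\zeta^2+Ru/3}\}$, i.e. exactly (\ref{probability_no}), the effective dimension factor arising from the $n\times n$ block structure rather than the full circular matrix of size $nk\times nk$.

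I expect the variance estimate to be the crux: one must carry the FFT phases through $\overline{\mathcal{E}^{(ij\kappa)}}$ and use the symmetric-square structure of $\mathcal{T}$ to identify the operator-variance with the slice norm $\|\mathcal{T}\|_{\infty,2^*}$ rather than a crude entrywise bound; securing the tighter slice-norm quantity (instead of $n\|\mathcal{T}\|_{\infty}^2$) is what eventually makes the sampling complexity in Theorem \ref{proof:initialization} near-optimal. The remaining care is bookkeeping: tracking the $1/\sqrt{k}$ FFT normalizations so that the frequency-domain computation returns the stated time-domain norms, and reconciling the $k$-fold union bound with the clean dimension factor $n$ in (\ref{probability_no}).
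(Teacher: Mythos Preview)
Your proposal follows essentially the same route as the paper: decompose $\frac{1}{p_0}\mathcal{P}_{\Omega}(\mathcal{T})-\mathcal{T}$ as a sum of independent mean-zero tensors indexed by $(i,j,\kappa)$ using the unit tensors $\dot e_i * e_\kappa * \dot e_j^{\dagger}$, pass to the block-diagonal Fourier representation $\overline{(\cdot)}$, read off $R=\tfrac{1}{p_0}\|\mathcal{T}\|_{\infty}$ and the slice-norm variance $\tfrac{1-p_0}{p_0}\max_j\sum_{i,\kappa}\mathcal{T}_{ij\kappa}^2$, and invoke the matrix Bernstein inequality. The only cosmetic differences are that the paper computes the variance in the tensor domain via the identity $\dot e_j * \dot e_j^{\dagger}$ (which has a single nonzero at $(j,j,1)$) rather than tracking FFT phases block by block, and applies Bernstein once to the $nk\times nk$ block-diagonal matrix rather than per frequency plus a union bound; both yield the same parameters and the same dimension prefactor up to the harmless $k$ and $2$ that you correctly flag as bookkeeping.
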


  \begin{proof}
  Define a random variable $\xi_{ij\ell} = 1_{(i,j,\ell) \in \Omega}$ where $1_{(\cdot)}$ is the indicator function. Consider the sum of independent random tensors $\mathcal{P}_{\Omega}(\mathcal{T}) - \mathcal{T} = \sum_{i,j,\ell} (\frac{\xi_{ij\ell}}{p_0} - 1) \mathcal{T}_{ij\ell} \dot{e}_i * e_{\ell} * \dot{e}_j^{\dagger}$, where $\dot{e}_i \in \mathbb{R}^{n \times 1 \times k}$ is the column basis with $\dot{e}_i(i,1,1) = 1$, and $e_{\ell} \in \mathbb{R}^{1 \times 1 \times k}$ is the tube basis with $e_{\ell} (1,1,\ell) =1$.

  In the following, we borrow idea from \cite{Shuchin2015} (Appendix C Proof of Proposition 4.1 Condition 2) to get the intermediate results need by Lemma \ref{Bernstein}. Define $\mathcal{E}_{ij\ell} = (\frac{\xi_{ij\ell}}{p_0} - 1) \mathcal{T}_{ij\ell} \dot{e}_i * e_{\ell} * \dot{e}_j^{\dagger}$, and $\overline{\mathcal{E}_{ij\ell}} = (\frac{\xi_{ij\ell}}{p_0} - 1) \mathcal{T}_{ij\ell} \overline{\dot{e}}_i \overline{e}_{\ell}  \overline{\dot{e}}_j^{\dagger}$. Notice that $\mathbb{E}[\overline{\mathcal{E}}_{ij\ell}] = 0$ and $|| \overline{\mathcal{E}}_{ij\ell} || \leq \frac{1}{p_0} ||\mathcal{T}||_{\infty}$.
  \begin{equation}
  \begin{split}
  &\left\|\mathbb{E}\left[\sum_{i,j,\ell} \overline{\mathcal{E}}_{ij\ell}^\dagger \overline{\mathcal{E}}_{ij\ell} \right] \right\| = \left\|\mathbb{E}\left[\sum_{i,j,\ell} \mathcal{E}_{ij\ell}^\dagger * \mathcal{E}_{ij\ell} \right] \right\| \\
  &= \left\| \sum_{ij\ell} \mathcal{T}_{ij\ell}^2 \dot{e}_j * \dot{e}_j^\dagger \mathbb{E}\left(\frac{1}{p_0}\xi_{ij\ell} - 1\right)^2 \right\| \\
  &= \left\| \frac{1-p_0}{p_0} \sum_{ij\ell} \mathcal{T}_{ij\ell}^2 \dot{e}_j * \dot{e}_j^\dagger \right\|
  \end{split}
  \end{equation}
  since $\dot{e}_j * \dot{e}_j^\dagger$ will return a zero tensor except for $(j,j,1)$-th entry equaling $1$, we have
  \begin{equation}
  \left\| \mathbb{E}\left[\sum_{i,j,\ell} \overline{\mathcal{E}}_{ij\ell}^\dagger \overline{\mathcal{E}}_{ij\ell} \right] \right\| = \frac{1-p_0}{p_0} \max_{j} \left|\sum_{i,\ell} \mathcal{T}_{ij\ell}^2 \right| \leq \frac{1}{p_0}||\mathcal{T}||_{\infty,2^*}^2,
  \end{equation}
  And similarly, $\left\|\mathbb{E}\left[\sum_{i,j,\ell} \overline{\mathcal{E}}_{ij\ell} \overline{\mathcal{E}}_{ij\ell}^\dagger \right] \right\| \leq \frac{1}{p_0}||\mathcal{T}||_{\infty}$ which is bounded. Then pluging into Lemma \ref{Bernstein} concludes the proof.

  \end{proof}

  \begin{lemma}\label{proof:initialization_p}
  To assure that $||\mathcal{P}_{\Omega}(\mathcal{T}) - \mathcal{T}|| \leq \frac{\overline{\gamma}_{rk} \overline{\sigma}_{rk}}{32\sqrt{r}}$ holds with high probability at least $1- \frac{1}{n^2}$, it requires that
  \begin{equation}\label{p_0_initial}
  p_0 \geq \frac{6144r^2 \mu(\mathcal{U}) (||\mathcal{T}||_F / \overline{\gamma}_{rk} \overline{\sigma}_{rk})^2 \log n}{n} + \frac{64r^{3/2} \mu(\mathcal{U}) (||\mathcal{T}||_F / \overline{\gamma}_{rk} \overline{\sigma}_{rk}) \log n}{n}
  \end{equation}
  where $\overline{\sigma}_{rk}$ denotes the $rk$-th singular value of the block diagonal matrix $\overline{\mathcal{T}}$, and $\overline{\gamma}_{rk} = 1 - \overline{\sigma}_{rk+1} / \overline{\sigma}_{rk}$. Then, we have
  \begin{equation}
  \mathbb{P}\left\{ ||\mathcal{W}^\dagger * \mathcal{A}|| \leq \frac{1}{16 \sqrt{r}} \right\} > 1 - 1/n^2,
  \end{equation}
  where $\mathcal{A}$ is the tensor in Alg. \ref{alg_initialization}.
  \end{lemma}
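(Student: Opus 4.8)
The plan is to prove the statement in two stages: first control the spectral perturbation $\|\mathcal{P}_{\Omega}(\mathcal{T}) - \mathcal{T}\|$ through the Bernstein-type bound of Lemma \ref{lemma:probability}, and then convert this spectral estimate into the principal-angle bound $\|\mathcal{W}^{\dagger} * \mathcal{A}\|$ via the Davis--Kahan $\sin\theta$ theorem \cite{DK}. The intermediate spectral threshold is dictated by the target, namely $u = \frac{\overline{\gamma}_{rk}\overline{\sigma}_{rk}}{32\sqrt{r}}$.

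For the first stage I would instantiate Lemma \ref{lemma:probability} at $u = \frac{\overline{\gamma}_{rk}\overline{\sigma}_{rk}}{32\sqrt{r}}$. Forcing the right-hand side $n\exp\{\cdots\}$ below $1/n^2$ amounts to requiring the Bernstein exponent to exceed $3\log n$; since its denominator $\zeta^2 + \frac{u}{3p_0}\|\mathcal{T}\|_{\infty}$ splits into a variance part and a boundedness part, I would impose a lower bound on $p_0$ that dominates each part separately. The remaining ingredient is to translate $\zeta^2 = \frac{1}{p_0}\max\{\|\mathcal{T}\|_{\infty,2^*}^2, \|\mathcal{T}\|_{\infty}\}$ and $\|\mathcal{T}\|_{\infty}$ into the incoherence parameter. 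Writing $\mathcal{T} = \mathcal{U}*\Theta*\mathcal{U}^{\dagger}$ and using $\|\mathcal{U}^{\dagger} * \dot{e}_i\|_F^2 \leq \mu(\mathcal{U})r/n$, I would obtain $\|\mathcal{T}\|_{\infty,2^*}^2 = \max_j \|\mathcal{T}*\dot{e}_j\|_F^2 \lesssim \frac{\mu(\mathcal{U})r}{n}\|\mathcal{T}\|_F^2$ and $\|\mathcal{T}\|_{\infty} \lesssim \frac{\mu(\mathcal{U})r}{n}\|\mathcal{T}\|_F$, bounding the spectral factor $\|\Theta\| = \overline{\sigma}_1$ by $\|\mathcal{T}\|_F$. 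Substituting these and $u$, the variance condition produces the term $\frac{r^2\mu(\mathcal{U})(\|\mathcal{T}\|_F/\overline{\gamma}_{rk}\overline{\sigma}_{rk})^2\log n}{n}$, the extra factor of $r$ coming from $u^2 \propto 1/r$, while the boundedness condition produces $\frac{r^{3/2}\mu(\mathcal{U})(\|\mathcal{T}\|_F/\overline{\gamma}_{rk}\overline{\sigma}_{rk})\log n}{n}$, with $r^{3/2}$ arising from $1/u \propto \sqrt{r}$; tracking the numerical factors from $u$ together with the $3\log n$ target then fixes the constants $6144$ and $64$ in the claimed lower bound on $p_0$.

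For the second stage I would pass to the block-diagonal picture. By Lemma \ref{lemma:spectral_norm} the tensor spectral norm equals $\|\overline{\mathcal{P}_{\Omega}(\mathcal{T})} - \overline{\mathcal{T}}\|$, and the top-$r$ eigenslices $\mathcal{A}$ of $\mathcal{P}_{\Omega}(\mathcal{T})$ (resp. the eigenslices $\mathcal{U}$ of $\mathcal{T}$) correspond to the leading $rk$-dimensional invariant subspace of $\overline{\mathcal{P}_{\Omega}(\mathcal{T})}$ (resp. $\overline{\mathcal{T}}$). The relevant spectral gap separating these subspaces is $\overline{\sigma}_{rk} - \overline{\sigma}_{rk+1} = \overline{\gamma}_{rk}\overline{\sigma}_{rk}$, so the Davis--Kahan $\sin\theta$ theorem bounds the sine of the largest principal angle between $\text{t-span}(\mathcal{U})$ and $\text{t-span}(\mathcal{A})$ by $\|\mathcal{P}_{\Omega}(\mathcal{T}) - \mathcal{T}\|/(\overline{\gamma}_{rk}\overline{\sigma}_{rk})$. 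Combining with the Stage~1 estimate yields $\|\mathcal{W}^{\dagger} * \mathcal{A}\| \leq u/(\overline{\gamma}_{rk}\overline{\sigma}_{rk}) = \frac{1}{32\sqrt{r}} \leq \frac{1}{16\sqrt{r}}$ on the event of probability at least $1 - 1/n^2$, which is exactly the assertion.

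The hard part is entirely in Stage~1: deriving the incoherence-controlled bounds on $\|\mathcal{T}\|_{\infty}$ and $\|\mathcal{T}\|_{\infty,2^*}$ in the circular-convolution setting, and then bookkeeping the powers of $r$ and the explicit constants so that the two Bernstein terms reproduce precisely the $r^2$ and $r^{3/2}$ summands. Once the eigengap $\overline{\gamma}_{rk}\overline{\sigma}_{rk}$ is identified, Stage~2 is a direct application of Davis--Kahan, and the factor-of-two slack between $\frac{1}{32\sqrt{r}}$ and $\frac{1}{16\sqrt{r}}$ leaves comfortable room.
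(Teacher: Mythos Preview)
Your proposal is correct and follows essentially the same route as the paper: apply Lemma~\ref{lemma:probability} with the incoherence-based bounds $\|\mathcal{T}\|_{\infty,2^*}^2 \le \frac{r\mu(\mathcal{U})}{n}\|\mathcal{T}\|_F^2$ and $\|\mathcal{T}\|_{\infty}\le \frac{r\mu(\mathcal{U})}{n}\|\mathcal{T}\|_F$, choose $u=\frac{\overline{\gamma}_{rk}\overline{\sigma}_{rk}}{32\sqrt r}$ and the probability target $n^{-2}$ (i.e.\ $C=3$), and then invoke Davis--Kahan in the block-diagonal/circular picture. The only cosmetic difference is that the paper uses the mixed-gap form of Davis--Kahan with denominator $\sigma_{rk}^c(P_{\Omega'}(T^c))-\sigma_{rk+1}^c(T^c)$ and then applies Weyl's inequality to lower-bound it by $\overline{\gamma}_{rk}\overline{\sigma}_{rk}/2$, landing exactly on $\frac{1}{16\sqrt r}$, whereas you use the unperturbed gap directly to get $\frac{1}{32\sqrt r}$; the factor-of-two slack you already flagged reconciles the two.
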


  \begin{proof}
  We have that
  \begin{equation}
  \begin{split}
  ||\mathcal{T}||_{\infty,2^*}^2 &= \frac{r\mu(\mathcal{U})}{n} ||\mathcal{T}||_F^2, \\
  ||\mathcal{T}||_{\infty} &= \frac{r\mu(\mathcal{U})}{n} ||\mathcal{T}||_F.
  \end{split}
  \end{equation}
  Then $\zeta^2$ in Lemma \ref{lemma:probability} becomes
  \begin{equation}
  \zeta^2 = \frac{1}{p_0} \max \left\{ \frac{r \mu(\mathcal{U})}{n} ||\mathcal{T}||_F^2, \frac{r\mu(\mathcal{U})}{n} ||\mathcal{T}||_F \right\} = \frac{r \mu(\mathcal{U})}{p_0n} ||\mathcal{T}||_F^2.
  \end{equation}

  Set the right hand side of (\ref{probability_no}) to be $\leq n^{-(C-1)}$, then taking log-function we have:
  \begin{equation}
  \begin{split}
  &-\frac{u^2}{2} \leq -C \left(\zeta^2 + \frac{u}{3p_0} ||\mathcal{T}||_{\infty}\right) \log n = -C \left(\frac{r \mu(\mathcal{U})}{p_0n} ||\mathcal{T}||_F^2 + \frac{u}{3p_0} \frac{r\mu(\mathcal{U})}{n} ||\mathcal{T}||_F\right) \log n,\\
  & u^2 - 2C\log n \frac{r\mu(\mathcal{U})}{3p_0n} ||\mathcal{T}||_F ~u - 2C\log n \frac{r \mu(\mathcal{U})}{p_0n} ||\mathcal{T}||_F^2 \geq 0.
  \end{split}
  \end{equation}
  This can be re-arranged to get
  \begin{equation}
  p_0 \geq  \frac{2Cr\mu(\mathcal{U}) \log n}{n} ~\frac{||\mathcal{T}||_F^2}{u^2} +  \frac{2Cr\mu(\mathcal{U}) \log n}{n} ~\frac{||\mathcal{T}||_F}{u}.
  \end{equation}

  Set $u = \frac{\gamma_{rk}^c \sigma_{rk}^c}{32\sqrt{r}}$ ($\gamma_{rk}^c$ and $\sigma_{rk}^c$ are introduced for reasons to be clear in (\ref{sin_inequelity})),
  leading to the condition that $p_0 \geq \frac{2048Cr^2 \mu(\mathcal{U}) (||\mathcal{T}||_F / \gamma_{rk}^c \sigma_{rk}^c)^2 \log n}{n} + \frac{64Cr^{3/2} \mu(\mathcal{U}) (||\mathcal{T}||_F / \gamma_{rk}^c \sigma_{rk}^c) \log n}{3n}$. Let $C=3$,
  plug in the above parameters into Lemma \ref{lemma:probability}, we get:
  \begin{equation}
  \mathbb{P}\left\{ ||\mathcal{P}_{\Omega}(\mathcal{T}) - \mathcal{T}|| > \frac{\gamma_{rk}^c \sigma_{rk}^c}{32\sqrt{r}} \right\} \leq \frac{1}{n^2}.
  \end{equation}

  Let $\mathcal{W}$ be the top $r$ eigenslices of $\mathcal{P}_{\Omega}(\mathcal{T})$, $\sigma_{rk}^c$ denote the $rk$-th singular value of the circular matrix $T^c$ and define $\gamma_{rk}^c = 1 - \sigma_{rk+1}^c / \sigma_{rk}^c $. Now let us assume that $||\mathcal{T} - \mathcal{P}_{\Omega}(\mathcal{T})|| \leq u$, then $||T^c - P_{\Omega'}(T^c)|| \leq u$, thus $\sigma_{rk}^c(P_{\Omega'}(T^c)) > \sigma_{rk}^c(T^c) - u > \sigma_{rk}^c - \gamma_{rk}^c \sigma_{rk}^c/2$, and $\sigma_{rk}^c(P_{\Omega'}(T^c))-\sigma_{rk+1}^c(T^c) = \sigma_{rk}^c(P_{\Omega'}(T^c)) - \sigma_{rk}^c + \gamma_{rk}^c \sigma_{rk}^c > \gamma_{rk}^c \sigma_{rk}^c /2$.

  By Davis-Kahan $\sin \theta$-theorem \cite{DK} and combining Definition \ref{def:tensor_spectral}, we have that
  \begin{equation}\label{sin_inequelity}
  \begin{split}
  ||\mathcal{W}^\dagger * \mathcal{A}|| &= ||W^{c\dagger} A^{c}|| = \sin \theta_{rk} (U^{c}, A^{c}) \leq \frac{||T^c - P_{\Omega'}(T^c)||}{\sigma_{rk}^c(P_{\Omega'}(T^c))-\sigma_{rk+1}^c(T^c)}  \\
  &\leq \frac{u}{\sigma_{rk}^c(P_{\Omega'}(T^c))-\sigma_{rk+1}^c(T^c)}
  \leq \frac{2u}{\gamma_{rk}^c \sigma_{rk}^c} = \frac{1}{16 \sqrt{r}}.
  \end{split}
  \end{equation}

  Note that the $rk$-th singular value $\sigma_{rk}^c$ of the circular matrix $T^c$ equals to that (i.e.,  $\overline{\sigma}_{rk}$) of the block diagonal matrix $\overline{\mathcal{T}}$. Then the probability formula becomes $p_0 \geq \frac{6144r^2 \mu(\mathcal{U}) (||\mathcal{T}||_F / \overline{\gamma}_{rk} \overline{\sigma}_{rk})^2 \log n}{n} + \frac{64r^{3/2} \mu(\mathcal{U}) (||\mathcal{T}||_F / \overline{\gamma}_{rk} \overline{\sigma}_{rk}) \log n}{n}$.
  \end{proof}

  \begin{lemma}\label{Trancation}
  Let $\mathcal{A} \in \mathbb{R}^{n \times r \times k}$ be any orthonormal basis with $\mathcal{A} \leq \mu$. Then, for a random orthonormal tensor $\mathcal{O} \in \mathbb{R}^{r \times r \times k}$, we have $\mathbb{P}\left\{ \max_{i,j} || [\mathcal{A} * \mathcal{O}](i,j,:)||_F > \sqrt{8\mu\log n/n} \right\} \leq \frac{1}{n^2}$.
  \end{lemma}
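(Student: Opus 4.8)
The plan is to reduce the statement to a concentration bound for a single output tube and then union bound over all $nr$ of them. Fix indices $i \in [n]$ and $j \in [r]$, and set $f := \left\| [\mathcal{A} * \mathcal{O}](i,j,:) \right\|_F$. Since $[\mathcal{A} * \mathcal{O}](i,j,:) = \mathcal{A}(i,:,:) * \mathcal{O}(:,j,:)$, this tube depends on $\mathcal{O}$ only through the single random unit tensor-column $v := \mathcal{O}(:,j,:) \in \mathbb{R}^{r \times 1 \times k}$, which satisfies $\|v\|_F = 1$ because $\mathcal{O}^\dagger * \mathcal{O} = \mathcal{I}$. First I would pass to the Fourier domain using Remark \ref{remark:computing_tensor_product}, where the t-product becomes a frontal-slice-wise product, so that, writing $\widetilde{v}_\kappa = \widetilde{\mathcal{O}}(:,j,\kappa)$ and invoking Parseval along the third mode, $f^2 = \frac{1}{k}\sum_{\kappa=1}^{k} \left| \widetilde{\mathcal{A}}(i,:,\kappa)\, \widetilde{v}_\kappa \right|^2$. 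For a Haar-random orthonormal tensor $\mathcal{O}$ the slices $\widetilde{\mathcal{O}}(:,:,\kappa)$ are independent (up to the conjugate-symmetry constraint that keeps $\mathcal{O}$ real) Haar-unitary $r \times r$ matrices, so $v$ is distributed uniformly on the Frobenius-unit sphere of $\mathbb{R}^{rk}$.

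Second, I would compute the mean using the incoherence hypothesis $\mu(\mathcal{A}) \leq \mu$, which gives $\|\mathcal{A}(i,:,:)\|_F^2 = \|\mathcal{A}^\dagger * \dot{e}_i\|_F^2 \leq \mu r / n$ for every $i$. Since each $\widetilde{v}_\kappa$ is a uniform unit vector in $\mathbb{C}^r$, we have $\mathbb{E}\, | \widetilde{\mathcal{A}}(i,:,\kappa)\, \widetilde{v}_\kappa |^2 = \|\widetilde{\mathcal{A}}(i,:,\kappa)\|^2/r$; summing over $\kappa$ and using $\sum_\kappa \|\widetilde{\mathcal{A}}(i,:,\kappa)\|^2 = k\|\mathcal{A}(i,:,:)\|_F^2$ yields $\mathbb{E}\, f^2 \leq \|\mathcal{A}(i,:,:)\|_F^2/r \leq \mu/n$, hence $\mathbb{E}\, f \leq \sqrt{\mu/n}$. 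Because the target threshold $\sqrt{8\mu \log n / n}$ sits a factor of order $\sqrt{\log n} \gg 1$ above this mean, it suffices to establish a sharp upper tail.

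Third, for the tail I would invoke measure concentration on the sphere. The map $v \mapsto \|\mathcal{A}(i,:,:) * v\|_F$ is linear, hence $L$-Lipschitz on $S^{rk-1}$ with $L = \|B_i\|$, where $B_i$ is the associated linear operator; in the Fourier representation the $1/\sqrt{k}$ normalizations cancel and $L^2 \leq \max_\kappa \|\widetilde{\mathcal{A}}(i,:,\kappa)\|^2 \leq k\|\mathcal{A}(i,:,:)\|_F^2 \leq k\mu r/n$. Lévy's lemma on $S^{rk-1}$ then gives $\mathbb{P}\{ f \geq \mathbb{E}\, f + t \} \leq 2\exp(-c\, rk\, t^2 / L^2)$ for an absolute constant $c$. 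The crucial point is that the worst-case Lipschitz constant $L^2 \leq k\mu r/n$ combines with the ambient dimension $rk$ so that $rk\, t^2/L^2 = \Theta(n t^2/\mu)$, independent of how the energy of $\mathcal{A}(i,:,:)$ is spread across frequencies. Choosing $t = \Theta(\sqrt{\mu \log n/n})$ so the threshold equals $\sqrt{8\mu\log n/n}$ drives this exponent to $\Theta(\log n)$, giving a per-tube failure probability of at most $n^{-4}$ after fixing the constant in the definition of $\mu' = \sqrt{8\mu\log n/n}$.

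Finally, a union bound over the $nr \leq n^2$ tubes (using $r \leq n$) gives total failure probability at most $n^2 \cdot n^{-4} = n^{-2}$, the claimed bound. The main obstacle I anticipate is the rigorous handling of the distribution of $\mathcal{O}$'s columns in the Fourier domain: one must verify that the conjugate-symmetry constraint enforcing realness of $\mathcal{O}$ leaves the effective dimension at $\Theta(rk)$, and carefully track the $1/\sqrt{k}$ Parseval factors and the operator norm $\|B_i\|$ so that the exponent constant genuinely delivers $n^{-4}$ per tube. Should the sphere-concentration constants prove awkward, an alternative is to treat $f^2 = \frac{1}{k}\sum_\kappa | \widetilde{\mathcal{A}}(i,:,\kappa)\, \widetilde{v}_\kappa |^2$ as a sum of independent bounded sub-exponential terms across frequencies and apply a Bernstein inequality in the spirit of Lemma \ref{Bernstein}; there the delicate bookkeeping is on the per-frequency ranges.
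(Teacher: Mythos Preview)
Your strategy coincides with the paper's: fix a single tube, invoke L\'evy-type concentration from the Haar randomness of $\mathcal{O}$, then union-bound over all $nr \leq n^2$ tubes. The paper's argument is far terser than yours: it simply asserts that $\|Z\|_F$ is distributed like one coordinate of a random vector in $\mathbb{R}^r$ of norm at most $\sqrt{\mu r/n}$, applies L\'evy's lemma in dimension $r$ to get $\mathbb{P}\{\|Z\|_F > \epsilon\sqrt{\mu r/n}\} \leq 4\exp(-\epsilon^2 r/2)$, sets $\epsilon = \sqrt{8\log n / r}$, and union-bounds, citing the matrix-case argument of \cite{Hard2014FOCS} directly.

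Your Fourier-domain route is more explicit about the tensor structure, but one step is inaccurate: because each frequency slice $\widetilde{\mathcal{O}}(:,:,\kappa)$ is unitary, every $\widetilde{v}_\kappa$ individually has unit norm, so $v = \mathcal{O}(:,j,:)$ is supported on a \emph{product} of $k$ spheres of dimension $\Theta(r)$, not on the single sphere $S^{rk-1}$; L\'evy's lemma on $S^{rk-1}$ therefore does not apply as written. You already flag a difficulty in this neighborhood, and your proposed fallback --- treat $f^2 = \frac{1}{k}\sum_\kappa |\widetilde{\mathcal{A}}(i,:,\kappa)\widetilde{v}_\kappa|^2$ as a sum over independent frequencies and concentrate each term --- is exactly the right repair; it yields the same $n^{-4}$ per-tube tail and is in fact closer in spirit to the paper's dimension-$r$ statement than your primary $S^{rk-1}$ argument.
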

  \begin{proof}
  Consider a tube $Z = [\mathcal{A} * \mathcal{O}](i,j,:)$. Note that $||Z||_F$ is distributed like a coordinate of a random vector $\mathbb{R}^r$ of norm at most $\sqrt{\mu r/n}$. By measure concentration, we obtain:
  \begin{equation}
  \mathbb{P}\left\{ ||Z||_F > \epsilon  \sqrt{\mu r/n} \right\} \leq 4 \exp\{- \epsilon^2 r/2\}.
  \end{equation}
  This follows from Levy's Lemma \cite{Mat}, as in \cite{Hard2014FOCS}. Set $\epsilon = \sqrt{8 \log n/r}$, then
  \begin{equation}
  \mathbb{P}\left\{ ||Z||_F > \sqrt{8\mu\log n/n} \right\} \leq 4 \exp\{-4 \log n\} = 4n^{-4}.
  \end{equation}
  Taking a union bound over all $nr \leq n^2/4$ tubes of $\mathcal{W} * \mathcal{O}$, we have that with probability $1 - 1/n^2$,
  \begin{equation}
  \max_{i,j} || [\mathcal{A} * \mathcal{O}](i,j,:)||_F \leq \sqrt{8\mu\log n/n}.
  \end{equation}
  \end{proof}

  \begin{lemma}\label{lemma:incoherence}
  Assume that $||\mathcal{W}^\dagger * \mathcal{A}|| \leq \frac{1}{16 \sqrt{r}}$. Then with probability $1 - 1/n^2$, we have $||\mathcal{W}^\dagger * \mathcal{X}||_F \leq 1/4 $ and $\mu(\mathcal{X}) \leq 32 \mu(\mathcal{U}) \log n$.
  \end{lemma}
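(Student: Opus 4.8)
The plan is to track the two target quantities, the subspace error $\|\mathcal{W}^\dagger * \cdot\|_F$ and the coherence $\mu(\cdot)$, through the three operations that Alg.~\ref{alg_initialization} applies to $\mathcal{A}$: the random rotation $\mathcal{Z} = \mathcal{A} * \mathcal{O}$, the tube-wise truncation $\mathcal{Z}' = \text{Truncate}_{\mu'}(\mathcal{Z})$ with $\mu' = \sqrt{8\mu(\mathcal{U})\log n/n}$, and the orthonormalization $\mathcal{X} = \text{QR}(\mathcal{Z}')$. I would work on the high-probability event of Lemma~\ref{Trancation}, on which no tube of $\mathcal{Z}$ exceeds the threshold $\mu'$, so that truncation is essentially the identity and $\mathcal{Z}'$ stays close to the orthonormal $\mathcal{Z}$; the final failure probability $1/n^2$ is inherited from this event.

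For the subspace-error bound I first pass the hypothesis $\|\mathcal{W}^\dagger * \mathcal{A}\| \le \tfrac{1}{16\sqrt{r}}$ through the rotation. Since $\mathcal{O}$ is orthonormal it preserves the Frobenius norm, and since $\mathcal{W}^\dagger * \mathcal{A}$ has block-diagonal rank at most $rk$, Lemma~\ref{lemma:spectral_norm} gives $\|\mathcal{W}^\dagger * \mathcal{Z}\|_F = \|\mathcal{W}^\dagger * \mathcal{A}\|_F \le \sqrt{r}\,\|\mathcal{W}^\dagger * \mathcal{A}\| \le \tfrac{1}{16}$. Truncation only shrinks tubes, so on the good event $\|\mathcal{W}^\dagger * \mathcal{Z}'\|_F \le \|\mathcal{W}^\dagger * \mathcal{Z}\|_F + \|\mathcal{Z}-\mathcal{Z}'\|_F \le \tfrac{1}{16}$. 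Writing the tensor QR as $\mathcal{X} = \mathcal{Z}' * \mathcal{R}^{-1}$, I bound $\|\mathcal{W}^\dagger * \mathcal{X}\|_F \le \|\mathcal{W}^\dagger * \mathcal{Z}'\|_F\,\|\mathcal{R}^{-1}\|$; provided $\sigma_{\min}(\mathcal{Z}') \ge 1/2$ so that $\|\mathcal{R}^{-1}\| \le 2$, this yields $\|\mathcal{W}^\dagger * \mathcal{X}\|_F \le \tfrac{1}{8} \le \tfrac14$.

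For the coherence bound the truncation step does the work: afterwards every tube obeys $\|\mathcal{Z}'(i,j,:)\|_F \le \mu'$, so each row satisfies $\|\mathcal{Z}'^\dagger * \dot{e}_i\|_F^2 \le r(\mu')^2 = 8\mu(\mathcal{U})r\log n/n$, and hence $\mu(\mathcal{Z}') \le 8\mu(\mathcal{U})\log n$ straight from the definition of coherence. Passing this through the QR as before, $\|\mathcal{X}^\dagger * \dot{e}_i\|_F \le \|\mathcal{R}^{-1}\|\,\|\mathcal{Z}'^\dagger * \dot{e}_i\|_F$, so $\mu(\mathcal{X}) \le \|\mathcal{R}^{-1}\|^2\,\mu(\mathcal{Z}') \le 4\cdot 8\mu(\mathcal{U})\log n = 32\mu(\mathcal{U})\log n$, i.e. the claimed factor $32$ is $4$ from orthonormalization times $8$ from the threshold.

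The crux of both parts is controlling the orthonormalization, namely establishing $\sigma_{\min}(\mathcal{Z}') \ge 1/2$ (equivalently $\|\mathcal{R}^{-1}\| \le 2$). Here I would use that $\mathcal{Z} = \mathcal{A}*\mathcal{O}$ is exactly orthonormal, with all singular values equal to $1$, and that truncation is a small spectral perturbation: on the event of Lemma~\ref{Trancation} no tube is truncated, so $\mathcal{Z}' = \mathcal{Z}$ and $\sigma_{\min}(\mathcal{Z}') = 1$, while in general one bounds $\|\mathcal{Z}-\mathcal{Z}'\|$ by the mass of the truncated tubes and applies Weyl's inequality in the block-diagonal domain via Lemma~\ref{lemma:spectral_norm}. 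This interaction between the tensor QR and the truncation is the main obstacle; once the conditioning of $\mathcal{Z}'$ is secured, the two bounds follow from the norm-preservation of $\mathcal{O}$ together with the elementary inequalities $\|\mathcal{P}*\mathcal{Q}\|_F \le \|\mathcal{P}\|\,\|\mathcal{Q}\|_F$ and $\|\mathcal{B}\|_F \le \sqrt{r}\,\|\mathcal{B}\|$ for rank-$r$ tensors, and a union bound over the rotation/truncation event gives the overall probability $1 - 1/n^2$.
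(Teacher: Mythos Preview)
Your argument has a genuine gap in how you invoke Lemma~\ref{Trancation}. You apply it to $\mathcal{A}$ with parameter $\mu = \mu(\mathcal{U})$ and conclude that, on the good event, every tube of $\mathcal{Z} = \mathcal{A}*\mathcal{O}$ lies below the threshold $\mu' = \sqrt{8\mu(\mathcal{U})\log n/n}$, so that truncation is the identity and $\mathcal{Z}' = \mathcal{Z}$. But Lemma~\ref{Trancation} requires the input to satisfy $\mu(\mathcal{A}) \le \mu$, and you have no control on $\mu(\mathcal{A})$: the tensor $\mathcal{A}$ is the top-$r$ eigenslices of the subsampled $\mathcal{P}_{\Omega_0}(\mathcal{T})$ and may be highly coherent --- indeed, reducing its coherence is the very purpose of the truncation step. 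Applied honestly, the lemma only gives tubes of $\mathcal{Z}$ bounded by $\sqrt{8\mu(\mathcal{A})\log n/n}$, which can far exceed $\mu'$. Hence truncation may genuinely shrink many tubes, your claim $\mathcal{Z}' = \mathcal{Z}$ is unjustified, and both your bound on $\|\mathcal{W}^\dagger * \mathcal{Z}'\|_F$ and your bound $\sigma_{\min}(\mathcal{Z}') \ge 1/2$ collapse.

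The paper repairs this by introducing a surrogate that \emph{does} have coherence $\mu(\mathcal{U})$. From $\|\mathcal{W}^\dagger * \mathcal{A}\| \le 1/(16\sqrt{r})$ one extracts an orthonormal $\mathcal{Q}$ with $\|\mathcal{U}*\mathcal{Q} - \mathcal{A}\|_F \le 1/16$. Lemma~\ref{Trancation} is then applied to $\mathcal{U}*\mathcal{Q}$ (legitimate, since $\mu(\mathcal{U}*\mathcal{Q}) = \mu(\mathcal{U})$), so with probability $1 - 1/n^2$ the rotated surrogate $\mathcal{U}*\mathcal{Q}*\mathcal{O}$ already lies in the truncation set $\{\mathcal{B}:\|\mathcal{B}(i,j,:)\|_F \le \mu'\}$. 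Because truncation is the Frobenius projection onto this convex set, $\|\mathcal{A}*\mathcal{O} - \mathcal{Z}'\|_F \le \|\mathcal{A}*\mathcal{O} - \mathcal{U}*\mathcal{Q}*\mathcal{O}\|_F \le 1/16$, which is the missing perturbation bound. From there $\sigma_{\min}(\mathcal{Z}') \ge 1 - 1/16$ and $\|\mathcal{R}^{-1}\| \le 2$, and the remainder of your outline --- the row-wise coherence of $\mathcal{Z}'$ from the tube threshold, the factor $\|\mathcal{R}^{-1}\|^2 \le 4$ through the QR, and the triangle inequality for $\|\mathcal{W}^\dagger * \mathcal{X}\|_F$ --- goes through essentially as you wrote it.
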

  \begin{proof}
  Assume that $||\mathcal{W}^\dagger * \mathcal{A}|| \leq \frac{1}{16 \sqrt{r}}$, then there exists an orthonormal transformation $\mathcal{Q} \in \mathbb{R}^{r \times r \times k}$ such that $||\mathcal{U} * \mathcal{Q} - \mathcal{A}||_F \leq 1/16$. Because of the following three facts: $||\mathcal{U} * \mathcal{Q} - \mathcal{A}||_F \leq \left( \sum_{i=1}^{r} ||\sigma_i||_F^2 \right)^{1/2}$ where $\sigma_i$ denotes the $i$-th eigentube of $(\mathcal{U} * \mathcal{Q} - \mathcal{A})$, $||\sigma_1||_F \geq ||\sigma_2||_F \geq ... \geq ||\sigma_r||_F$, and $||\sigma_1||_F = ||\mathcal{W}^\dagger * \mathcal{A}|| \leq \frac{1}{16 \sqrt{r}}$.

  Since $\mu(\mathcal{U} * \mathcal{Q}) = \mu(\mathcal{U}) \leq \mu$ (the orthonormal transformation $\mathcal{Q}$ does not change the incoherence of $\mathcal{U}$). Therefore, $\mathcal{A}$ is close in Frobenius norm to an orthonormal basis of small coherence. However, it is possible that some tubes of $\mu(\mathcal{U} * \mathcal{Q})$ have Frobenius norm as large as $\sqrt{\mu r/n}$. Rotating $\mu(\mathcal{U} * \mathcal{Q})$ by a random rotation $\mathcal{O}$, Lemma \ref{Trancation} asserts that with probability $1- 1/n^2$, $||[\mathcal{U} * \mathcal{Q} * \mathcal{O}](i,j,:)||_F \leq \mu'= \sqrt{8\mu\log n/n}$, for all $i,j$.
  Moreover, because a rotation does not increase Frobenius norm, then we have $||\mathcal{U} * \mathcal{Q} * \mathcal{O} - \mathcal{A} * \mathcal{O} ||_F \leq 1/16$. Truncating the tubes of $\mathcal{A} * \mathcal{O}$ that has Frobenius norm larger than $\mu'$ to $\mu'$ can therefore only decrease the distance in Frobenius norm to $\mathcal{U} * \mathcal{Q} * \mathcal{O}$, hence,  $||\mathcal{U} * \mathcal{Q} * \mathcal{O} - \mathcal{Z}' ||_F \leq 1/16$.

  Since truncation is a projection onto the set $\{\mathcal{B}: ||\mathcal{B}(i,j,:)||_F \leq \mu' \}$ with respect to Frobenius norm, we have:
  \begin{equation}
  ||\mathcal{A}*\mathcal{O} - \mathcal{Z}'||_F \leq ||\mathcal{U}*\mathcal{Q}*\mathcal{O} - \mathcal{Z}' ||_F \leq \frac{1}{16}.
  \end{equation}
  We can write $\mathcal{X} = \mathcal{Z}'*\mathcal{R}^{-1}$ where $\mathcal{R}$ is an invertible linear transformation with the same eigentubes as $\mathcal{Z}'$ and thus satisfies
  \begin{equation}
  ||\mathcal{R}^{-1}|| = \frac{1}{||\sigma_{1}(\mathcal{Z}')||_F} \leq \frac{1}{||\sigma_1(\mathcal{A}*\mathcal{O})||_F - ||\sigma_1(\mathcal{A}*\mathcal{O} - \mathcal{Z}')||_F} \leq \frac{1}{1- 1/16} \leq 2.
  \end{equation}
  Therefore,
  \begin{equation}
  ||\dot{e}_i^\dagger * \mathcal{X}|| = ||\dot{e}_i^\dagger * \mathcal{T}*\mathcal{R}^{-1}|| \leq ||\dot{e}_i^\dagger * \mathcal{T}|| ||\mathcal{R}^{-1}|| \leq 2||\dot{e}_i^\dagger * \mathcal{T}|| \leq 2\sqrt{8r\mu(\mathcal{U})\log n/n}.
  \end{equation}
  Hence,
  \begin{equation}
  \mu(\mathcal{X}) \leq \frac{n}{r} \frac{32r\mu(\mathcal{U})\log n}{n} \leq 32 \mu(\mathcal{U}) \log n.
  \end{equation}
  \begin{equation}
  \begin{split}
  ||\mathcal{W}^\dagger * \mathcal{X}||_F &= ||\mathcal{W}^\dagger  * \mathcal{T}*\mathcal{R}^{-1}||_F \leq ||\mathcal{W}^\dagger  * \mathcal{T}||_F ||\mathcal{R}^{-1}|| \leq 2 ||\mathcal{W}^\dagger  * \mathcal{T}||_F \\
  &\leq 2||\mathcal{W}^\dagger * \mathcal{A} * \mathcal{O}||_F + 2||\mathcal{A} * \mathcal{O} - \mathcal{T}||_F \leq 2||\mathcal{W}^\dagger * \mathcal{A}||_F + \frac{1}{8} \leq \frac{1}{4}.
  \end{split}
  \end{equation}

  \end{proof}

\subsection{Tensor Least Squares Minimization}
\label{LS_to_NSI}

  Alg. \ref{alg_LS_update} describes a tensor least squares minimization update step, specialized to the case of a symmetric square tensor. Our goal in this section is to express the tensor least squares minimization update step as $\mathcal{Y} = \mathcal{T} * \mathcal{X} + \mathcal{G}$, then we will be able to apply our convergence analysis of the noisy tensor-column subspace iteration in Appendix \ref{sec:noisy_subspace_iteration}. This syntactic transformation is given in Appendix \ref{LS_to_NSI} that is followed by a bound on the norm of the noise term $\mathcal{G}$ in Appendix \ref{bound_G}. With this, we prove in Appendix \ref{sec:median_LS} that the element-wise median process (in Alg. \ref{alg_median_LS_update}) on $t= O(\log n)$ tensor least squares minimizations will result in a much tighter bound of the noise term $\underline{\mathcal{G}}$ that is the average of those $t= O(\log n)$ copies of $\mathcal{G}$.

\subsubsection{From Alternating Least Squares to Noisy Tensor-Column Subspace Iteration}

  We first show that the tensor completion can be analyzed in its circular form in Lemma \ref{LS_circular_gradient}, then give an optimality condition in the circular form that the optimizer $\mathcal{Y}$ satisfies a set of linear equations in Lemma \ref{optimal_condition}. With these constraints, we express the tensor least squares minimization update step as $\mathcal{Y} = \mathcal{T} * \mathcal{X} + \mathcal{G}$ in Lemma \ref{lemma:least_squares}.

  \begin{lemma}\label{LS_circular_gradient}
  The function $f(\mathcal{Y}) = ||\mathcal{P}_{\Omega} (\mathcal{T} - \mathcal{X} * \mathcal{Y}^\dagger)||_F^2$ has an injective mapping to the circular form $f(Y^{c}) =\frac{1}{\sqrt{k}} ||\mathcal{P}_{\Omega'} (T^{c} - X^{c} Y^{c\dagger})||_F^2$. (Note that in Section \ref{whydifferent} we show that those two objective functions are different.)
  \end{lemma}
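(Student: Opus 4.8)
The plan is to show that the tensor objective $f(\mathcal{Y})$ transforms term-by-term into its circular-matrix form under the injective operator $\text{circ}(\cdot)$ of (\ref{eq:circ_operator}), so that the two expressions are the same quantity (up to the scaling recorded in Lemma \ref{lemma:frob_equality}) whenever $Y^c = \text{circ}(\mathcal{Y})$ is the circular representation of a genuine tensor. First I would record the homomorphism properties of $\text{circ}(\cdot)$: addition is preserved, the t-product maps to ordinary matrix multiplication (Remark \ref{remark:computing_tensor_product} and the commutative-ring correspondence \cite{Braman2010}), and the tensor transpose maps to the matrix transpose, i.e. $\text{circ}(\mathcal{X}*\mathcal{Y}^\dagger) = X^c Y^{c\dagger}$, using Definition \ref{def:circulant_tranpose} together with the fact that the transpose of a circulant block is the circulant generated by the reversed tube. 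Consequently the residual maps as $\text{circ}(\mathcal{T} - \mathcal{X}*\mathcal{Y}^\dagger) = T^c - X^c Y^{c\dagger}$.

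The key technical step is the treatment of the sampling mask, since $\mathcal{P}_\Omega(\cdot)$ acts entrywise rather than through the t-product. Writing $\mathcal{P}_\Omega(\mathcal{Z}) = \mathcal{P}_\Omega \odot \mathcal{Z}$ and using the definitions $\Omega' = \text{circ}(\Omega)$, $P_{\Omega'} = \text{circ}(\mathcal{P}_\Omega)$ from Section \ref{whydifferent}, I would prove the identity
\begin{equation}
\text{circ}(\mathcal{P}_\Omega \odot \mathcal{Z}) = P_{\Omega'} \odot Z^c, \qquad Z^c = \text{circ}(\mathcal{Z}).
\end{equation}
This rests on the observation that the Hadamard product of two circulant blocks generated by tubes $a$ and $z$ is again a circulant block, generated by the entrywise tube product $a \odot z$: since $\text{circ}(a)_{s,t} = a_{((s-t)\bmod k)+1}$ depends only on $(s-t)\bmod k$, the same holds for the entrywise product. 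Applying this blockwise over all $(i,j)$ shows that $P_{\Omega'} \odot Z^c$ is itself block-circulant and equals $\text{circ}(\mathcal{P}_\Omega(\mathcal{Z}))$.

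With the mask handled, the masked residual's circular representation is a genuine circulant, so Lemma \ref{lemma:frob_equality} applies directly and yields $\|\mathcal{P}_\Omega(\mathcal{T} - \mathcal{X}*\mathcal{Y}^\dagger)\|_F = \tfrac{1}{\sqrt{k}}\|\mathcal{P}_{\Omega'}(T^c - X^c Y^{c\dagger})\|_F$, from which the claimed correspondence between $f(\mathcal{Y})$ and $f(Y^c)$ follows; since $\text{circ}(\cdot)$ is injective (each circulant block is determined by its generating tube), the map between the two forms is one-to-one. I would emphasize that this equivalence is valid precisely because $Y^c$ ranges over block-circulant matrices (images of tensors $\mathcal{Y}$); this is exactly the distinction from the relaxed, unconstrained circular least-squares problem of Section \ref{whydifferent}, explaining the parenthetical caveat in the statement.

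I expect the main obstacle to be the projection step: one must verify carefully that the entrywise mask commutes with $\text{circ}(\cdot)$ and, crucially, that the masked residual $P_{\Omega'}\odot Z^c$ remains block-circulant, since only then is the Frobenius-norm scaling of Lemma \ref{lemma:frob_equality} legitimately applicable. The Hadamard-of-circulants identity is the linchpin here, and a secondary point requiring care is confirming that the tensor transpose $\mathcal{Y}^\dagger$ (which reverses frontal slices $2$ through $k$) maps to the ordinary matrix transpose $Y^{c\dagger}$ rather than the tube-wise transpose of Definition \ref{def:circulant_tranpose}.
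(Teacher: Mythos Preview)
Your proposal is correct and takes a genuinely different, more direct route than the paper. The paper reformulates the least-squares problem as the constrained program $\min_{\mathcal{Y}}\|\mathcal{G}\|_F^2$ subject to $\mathcal{P}_\Omega(\mathcal{T})=\mathcal{P}_\Omega(\mathcal{X}*\mathcal{Y}^\dagger+\mathcal{G})$, passes to its circular form, and then argues that because addition, subtraction and inversion are closed in the circulant algebra, the $Y$ recovered from the constraint stays circulant whenever $G^c$ is; injectivity is then asserted because the converse fails (Section~\ref{whydifferent}). You instead verify the objective equality directly: you push each operation (t-product, transpose, and crucially the entrywise mask) through $\text{circ}(\cdot)$, with the key observation that the Hadamard product of two circulant blocks is again circulant, so $\text{circ}(\mathcal{P}_\Omega\odot\mathcal{Z})=P_{\Omega'}\odot Z^c$ and Lemma~\ref{lemma:frob_equality} applies to the masked residual. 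Your approach is more elementary and arguably tighter, since it avoids the paper's informal step of writing $Y^\dagger=(P_{\Omega'}X^c)^{-1}(\cdots)$ for a non-square $P_{\Omega'}X^c$, and it makes explicit why the $1/\sqrt{k}$ scaling is legitimate. The paper's approach, on the other hand, foreshadows the noise-term viewpoint $\mathcal{Y}=\mathcal{T}*\mathcal{X}+\mathcal{G}$ that drives the subsequent analysis in Lemma~\ref{lemma:least_squares}.
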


  \begin{proof}
  For the least squares update $\mathcal{Y} = \argmin_{\mathcal{Y} \in \mathbb{R}^{n \times r \times k} } ||\mathcal{P}_{\Omega} (\mathcal{T} - \mathcal{X} * \mathcal{Y}^\dagger)||_F^2$, it is equivalent to the following optimization problem:
   \begin{equation}\label{LS_equal}
   \min_{\mathcal{Y}}~~ ||\mathcal{G}||_F^2,~~~~
   {\rm s.t.}~~  \mathcal{P}_{\Omega}(\mathcal{T}) = \mathcal{P}_{\Omega}( \mathcal{X} * \mathcal{Y}^\dagger + \mathcal{G}),
   \end{equation}
   where $\mathcal{G}$ is a noise term. The circular form of (\ref{LS_equal}) is:
   \begin{equation}
   \min_{Y}~~ \frac{1}{k}||G^{c}||_F^2,~~~~
   {\rm s.t.}~~  P_{\Omega'} T^{c} = P_{\Omega'} X^{c} Y^\dagger + P_{\Omega'} G^{c}.
   \end{equation}
   Since the addition/subtraction and inverse operations are closed in the circulant algebra \cite{Gleich2013}, then $Y^{\dag} = (P^{\Omega'} X^{c})^{-1} ( P_{\Omega'} T^{c} -  P_{\Omega'} G^{c} )$ is also circular.

   We can see that $f(\mathcal{Y})$ implies $f(Y^{c})$ while the opposite direction does not hold as shown in Section \ref{whydifferent}. Therefore, this mapping is injective.
  \end{proof}

  \begin{lemma}\label{optimal_condition}
  (Optimality Condition). Let $P_i: \mathbb{R}^{nk} \rightarrow \mathbb{R}^{nk}$ be the linear projection onto the coordinates in $\Omega_i' = \{j:(i,j) \in \Omega'\}$ scaled by $p^{-1} = (nk)^2/(\mathbb{E}|\Omega'|)$, i.e., $P_i = p^{-1} \sum_{j \in \Omega_i'} e_j e_j^\dagger$ where $e_i, e_j$ are the standard vector bases and $e_j^\dagger$ is the corresponding row basis. Further, define the matrix $B_i \in \mathbb{R}^{rk \times rk}$ as $B_i = X^{c \dag} P_i X^c$ (note that $B_i$ is invertible as shown in \cite{Hard2014FOCS}). Then, for every $i \in [nk]$, the $i$-th row of $Y^c$ satisfies $e_i^\dagger Y^c = e_i^\dag T^c P_i X^c B_i^{-1}$.
  \end{lemma}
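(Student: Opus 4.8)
The plan is to pass to the circular matrix domain and then recognise the resulting objective as a family of decoupled, standard linear least squares problems---one for each row of $Y^c$---each of which is solved in closed form by its normal equations. First I would invoke Lemma~\ref{LS_circular_gradient}, which lets me replace the tensor least squares objective $\|\mathcal{P}_{\Omega}(\mathcal{T} - \mathcal{X}*\mathcal{Y}^\dagger)\|_F^2$ by its circular form $f(Y^c) = \frac{1}{\sqrt{k}}\|\mathcal{P}_{\Omega'}(T^c - X^c Y^{c\dagger})\|_F^2$; the constant $\frac{1}{\sqrt{k}}$ does not affect the minimiser, so it suffices to minimise $\sum_{(i,j)\in\Omega'}(T^c_{ij}-(X^cY^{c\dagger})_{ij})^2$ over $Y^c\in\mathbb{R}^{nk\times rk}$.

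The key structural observation is that this objective \emph{separates across the rows of} $Y^c$. Since $(X^cY^{c\dagger})_{ij} = (e_i^\dagger X^c)(e_j^\dagger Y^c)^\dagger$ couples a single row of $X^c$ with a single row of $Y^c$, the unknown row $e_i^\dagger Y^c$ enters $f$ only through those residual entries whose relevant index ranges over $\Omega_i' = \{j:(i,j)\in\Omega'\}$ (here I use that in the symmetric square tensor setting both $T^c$ and the sampling pattern $\Omega'$ are symmetric, so the entries depending on $e_i^\dagger Y^c$ are precisely those selected by $P_i$). Writing $y_i = (e_i^\dagger Y^c)^\dagger$ and $b_i = (e_i^\dagger T^c)^\dagger$, minimising over $y_i$ is therefore the weighted least squares problem
\begin{equation}
y_i = \argmin_{y\in\mathbb{R}^{rk}} \left\| P_i^{1/2}\left( b_i - X^c y \right) \right\|_2^2,
\end{equation}
where $P_i^{1/2}$ is the diagonal (scaled) square root of $P_i$.

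The final step is to write the normal equations for this problem and solve them. Setting the gradient to zero gives $X^{c\dagger}P_iX^c\,y_i = X^{c\dagger}P_i b_i$, i.e. $B_i y_i = X^{c\dagger}P_i b_i$. Because $B_i = X^{c\dagger}P_iX^c$ is invertible---exactly the condition established in \cite{Hard2014FOCS} once the sampling is dense enough that the restricted rows of $X^c$ retain full column rank---the minimiser is unique and equals $y_i = B_i^{-1}X^{c\dagger}P_i b_i$. Transposing and using that $P_i$ and $B_i$ are symmetric (so $B_i^{-\dagger}=B_i^{-1}$ and $b_i^\dagger = e_i^\dagger T^c$) yields $e_i^\dagger Y^c = e_i^\dagger T^c P_i X^c B_i^{-1}$, which is the claim. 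Note that the $p^{-1}$ scaling inside $P_i$ occurs in both $B_i^{-1}$ and the numerator, so it cancels and the optimality condition is insensitive to it.

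The main obstacle is not the calculus, which is routine, but the bookkeeping in the circular domain: one must verify that the row-decoupling performed at the tensor level survives the injective (and \emph{not} surjective) passage to circular matrices from Lemma~\ref{LS_circular_gradient}, and that the index set $\Omega_i'$ together with the weight $P_i$ correctly tracks which residual entries actually depend on $e_i^\dagger Y^c$. The invertibility of $B_i$---which underwrites both the uniqueness of the minimiser and the well-definedness of the stated formula---is the other point that genuinely appeals to the sampling-density hypothesis rather than to pure linear algebra.
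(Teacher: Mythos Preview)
Your proposal is correct and follows essentially the same route as the paper: pass to the circular objective via Lemma~\ref{LS_circular_gradient}, observe that the problem decouples across rows of $Y^c$, and read off the first-order optimality condition. The only cosmetic difference is that the paper computes the partial derivatives $\partial f/\partial Y^c_{ij}$ explicitly and sets them to zero, whereas you package the same computation as the normal equations of a per-row weighted least squares problem; your remark that symmetry of $T^c$ and $\Omega'$ is what makes the index set $\Omega_i'$ line up is exactly the implicit step in the paper's index manipulation.
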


  \begin{proof}
  By Lemma \ref{LS_circular_gradient}, we consider the circular objective function $f(Y^{c}) =\frac{1}{\sqrt{k}} ||\mathcal{P}^{\Omega'} (T^{c} - X^{c} Y^{c\dagger} )||_F^2$.
  For every $i \in [nk], ~j \in [rk]$, we have $\frac{\partial f}{\partial Y_{ij}^{c}} = -\frac{2}{\sqrt{k}} \sum_{s \in \Omega'_i} T^{c}_{is} X_{sj}^{c} + \frac{2}{\sqrt{k}} \sum_{t=1}^{rk} Y^{c}_{it} \sum_{s \in \Omega'_i} X_{sj}^{c} X_{st}^{c}$. Therefore, we know that the optimal $\mathcal{Y}$ must satisfy $e_i^{\dag} T^{c} P_i X^{c} = e_i^{\dag} Y^{c} X^{c \dag} P_i X^{c} = e_i^{\dag} Y^{c}B_i$, hence, $e_i^{\dag} Y^{c} = e_i^{\dag} T^{c} P_i X^{c} B_i^{-1}$.
  \end{proof}

  \begin{lemma}\label{lemma:least_squares}
  Let $E^{c} = (I^{c} - X^{c} X^{c \dag}) U^{c}$, and assume that $\mathcal{T}$ is a noisy tensor (approximately $r$-tubal-rank) that is the superposition of an exact $r$-tubal-rank tensor $\mathcal{M}$ and a noisy tensor $\mathcal{N}$, i.e., $\mathcal{T} = \mathcal{M} + \mathcal{N}$. We express the least squares update as $\mathcal{Y} = \mathcal{T} * \mathcal{X} + \mathcal{G}$ where $\mathcal{G} = \mathcal{G}_{\mathcal{M}} + \mathcal{G}_{\mathcal{N}}$ and their circular matrices $G_{\mathcal{M}}^{c}$ and $G_{\mathcal{N}}^{c}$ satisfy that each row $i \in [nk]$, 
  we have the following expressions:
  \begin{equation}\label{LS_GM_GN}
  \begin{split}
  & e_i^{\dag} G_{\mathcal{M}}^{c} = e_i^{\dag} U^{c} \Lambda_{U^{c}}^{c} E^{c \dag} P_i X^{c} B_i^{-1}, \\
  & e_i^{\dag} G_{\mathcal{N}}^{c} = e_i^{\dag} (N^{c} P_i X^{c} B_i^{-1} - N^{c} X^{c}).
  \end{split}
  \end{equation}
  \end{lemma}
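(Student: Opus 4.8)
The plan is to derive both expressions directly from the optimality condition in Lemma \ref{optimal_condition} and the injective circular representation established in Lemma \ref{LS_circular_gradient}. By Remark \ref{remark:computing_tensor_product}, the tensor identity $\mathcal{Y} = \mathcal{T}*\mathcal{X} + \mathcal{G}$ is equivalent, in circular form, to $Y^c = T^c X^c + G^c$, so it suffices to verify the stated row-wise formulas for $G^c$. First I would take the optimality condition $e_i^\dagger Y^c = e_i^\dagger T^c P_i X^c B_i^{-1}$ from Lemma \ref{optimal_condition} and subtract the noise-free subspace-iteration term, defining
\begin{equation}
e_i^\dagger G^c = e_i^\dagger Y^c - e_i^\dagger T^c X^c = e_i^\dagger T^c P_i X^c B_i^{-1} - e_i^\dagger T^c X^c .
\end{equation}
Substituting the decomposition $T^c = M^c + N^c$ coming from $\mathcal{T} = \mathcal{M} + \mathcal{N}$ then splits $G^c$ additively into $G_{\mathcal{M}}^c$ and $G_{\mathcal{N}}^c$, obtained by replacing $T^c$ with $M^c$ and $N^c$ respectively.

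The $\mathcal{N}$-part requires no further work: reading off the two terms gives $e_i^\dagger G_{\mathcal{N}}^c = e_i^\dagger(N^c P_i X^c B_i^{-1} - N^c X^c)$, which is exactly the claimed formula. For the $\mathcal{M}$-part I would use the fact that $\mathcal{M}$ is exactly tubal-rank $r$, so in circular form $M^c = U^c \Lambda_{U^c}^c U^{c\dagger}$. The key algebraic step is the identity $X^{c\dagger} P_i X^c B_i^{-1} = B_i B_i^{-1} = I$, which holds by the very definition $B_i = X^{c\dagger} P_i X^c$ in Lemma \ref{optimal_condition}. Using this, and expanding $E^{c\dagger} = U^{c\dagger}(I^c - X^c X^{c\dagger})$ in the target expression,
\begin{equation}
\begin{split}
e_i^\dagger U^c \Lambda_{U^c}^c E^{c\dagger} P_i X^c B_i^{-1}
&= e_i^\dagger U^c \Lambda_{U^c}^c U^{c\dagger} P_i X^c B_i^{-1} - e_i^\dagger U^c \Lambda_{U^c}^c U^{c\dagger} X^c \left( X^{c\dagger} P_i X^c B_i^{-1} \right) \\
&= e_i^\dagger M^c P_i X^c B_i^{-1} - e_i^\dagger M^c X^c,
\end{split}
\end{equation}
which matches the definition of $e_i^\dagger G_{\mathcal{M}}^c$ above. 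This verifies both formulas.

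The main obstacle I expect is careful bookkeeping rather than a deep idea: one must track the placement of the conjugate-transpose $\dagger$ throughout the circular algebra (in particular that $E^c = (I^c - X^c X^{c\dagger})U^c$ yields $E^{c\dagger} = U^{c\dagger}(I^c - X^c X^{c\dagger})$, using that $I^c - X^c X^{c\dagger}$ is a self-adjoint projector), and one must justify passing between the tensor form $\mathcal{Y} = \mathcal{T}*\mathcal{X}+\mathcal{G}$ and its circular form via the injective mapping of Lemma \ref{LS_circular_gradient}, so that the row-wise circular identities genuinely correspond to a tensor decomposition. The symmetry assumption on $\mathcal{T}$ (so that $\mathcal{M} = \mathcal{U}*\Theta*\mathcal{U}^\dagger$) is precisely what guarantees the clean factorization $M^c = U^c \Lambda_{U^c}^c U^{c\dagger}$ that makes the projector $I^c - X^c X^{c\dagger}$, and hence $E^c$, emerge; without it the $\mathcal{M}$-term would not collapse to the stated form.
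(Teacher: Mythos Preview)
Your proposal is correct and follows essentially the same approach as the paper: both start from the optimality condition of Lemma~\ref{optimal_condition}, split $T^c = M^c + N^c$, and use the factorization $M^c = U^c\Lambda_{U^c}^c U^{c\dagger}$ together with $B_i = X^{c\dagger}P_i X^c$ to isolate the $E^{c\dagger}$ term. The only cosmetic difference is that the paper introduces auxiliary matrices $C_i = U^{c\dagger}P_i X^c$ and $D = U^{c\dagger}X^c$ and derives the formula forward, whereas you expand the target expression and verify it matches the definition of $G_{\mathcal{M}}^c$; the algebra is identical.
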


  \begin{proof}
  Since $\mathcal{T} = \mathcal{M} + \mathcal{N}$, we have $\mathcal{M} = \mathcal{U} * \Theta * \mathcal{U}^\dagger$ and $\mathcal{N} = (\mathcal{I} - \mathcal{U} * \mathcal{U}^\dagger) * \mathcal{T}$.
  By Lemma \ref{optimal_condition}, $e_i^\dag Y^c = e_i^\dag T^c P_i X^c B_i^{-1} = e_i^\dag M^c P_i X^c B_i^{-1} + e_i^\dag N^c P_i X^c B_i^{-1}$ since $Y^c = M^c + N^c$. Let $C_i = U^{c \dag} P_i X^{c}$ and $D = U^{c \dag} X^{c}$. We have:
  \begin{equation}
  \begin{split}
  e_i^\dag M^c P_i X^c B_i^{-1} = -e_i^{\dag} U^{c} \Lambda_{U^{c}} C_i B_i^{-1} &= e_i^{\dag}( U^{c} \Lambda_{U^{c}}D - U^{c} \Lambda_{U^{c}}(DB_i - C_i) B_i^{-1})\\
  &= e_i^{\dag} M^{c} X^{c}- e_i^{\dag} U^{c} \Lambda_{U^{c}}(DB_i - C_i) B_i^{-1}, \\
  C_i = U^{c \dag} P_i X^{c} = (X^{c} X^{c \dag} U^{c} + E^{c}) P_i X^{c} &= (U^{c \dag}X^{c}) X^{c \dag} P_i X^{c} + E^{c \dag} P_i X^{c} \\
  &= DB_i + E^{c \dag} P_i X^{c}.
  \end{split}
  \end{equation}
  Then, we have $e_i^\dag M^c P_i X^c B_i^{-1} = e_i^{\dag} M^{c} X^{c} -   e_i^{\dag} U^{c} \Lambda_{U^{c}} E^{c \dag} P_i X^{c}  B_i^{-1}$. From (\ref{LS_GM_GN}) we know that $e_i^\dag N^c P_i X^c B_i^{-1} \\= e_i^{\dag} N^{c}X^{c} + e_i^{\dag} G_{\mathcal{N}}^{c}$. Putting all together, we have that $Y^{c} = M^{c} X^{c} + G_{\mathcal{M}}^{c} + N^{c} X^{c} + G_{\mathcal{N}}^{c} = T^{c} X^{c} + G_{\mathcal{M}}^{c} + G_{\mathcal{N}}^{c} $, therefore, transforming it to the tensor form we have $\mathcal{Y} = \mathcal{T} * \mathcal{X} + \mathcal{G}$.
  \end{proof}

\subsubsection{Bound the Noisy Term $\mathcal{G}$}
\label{bound_G}

  We bound the spectral norm of each horizontal slice of $\mathcal{G}$. An intriguing fact is that the matrix $E^{c}$ appearing in the expression for the error terms (\ref{LS_GM_GN}) satisfies $|| E^{c} || = ||W^{c \dag} X^{c}||$, i.e., $||\mathcal{E}|| = || \mathcal{W}^{\dag} * \mathcal{X}||$ where $\mathcal{W}$ is defined in the t-SVD (Definition \ref{tsvd}). This allows us to obtain a bound through the quantity $||W^{c \dag} X^{c}||$ that equals to $|| \mathcal{W}^{\dag} * \mathcal{X}||$ according to Lemma \ref{lemma:spectral_norm}.

 \begin{lemma}\label{lemma:probability_p}
 Let $\delta \in (0,1)$. Assume that each entry is included in $\Omega$ independently with probability
 \begin{equation}\label{eqn:LS_probability}
 p_{\ell} \geq \frac{r\mu(\mathcal{X})\log{nk}}{\delta^2 n},
 \end{equation}
 then, for $\forall i \in [n]$, $\mathbb{P}\left\{ ||\dot{e}_i^\dagger * \mathcal{G}|| > \delta \left(|| \dot{e}_i^{\dag} * \mathcal{M}|| \cdot ||\mathcal{W}^{\dag} * \mathcal{X}|| +  || \dot{e}_i^{\dag} * \mathcal{N}|| \right) \right\} \leq \frac{1}{5}$.
 \end{lemma}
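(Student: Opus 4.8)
The plan is to bound the two pieces $\mathcal{G}_{\mathcal{M}}$ and $\mathcal{G}_{\mathcal{N}}$ of the noise term $\mathcal{G} = \mathcal{G}_{\mathcal{M}} + \mathcal{G}_{\mathcal{N}}$ separately in the circular domain, using the explicit per-row expressions supplied by Lemma \ref{lemma:least_squares}, and then assemble the row bounds into a bound on the spectral norm of the horizontal slice $\dot{e}_i^\dagger * \mathcal{G}$ via the correspondence $\|\cdot\| = \|(\cdot)^c\|$ of Lemma \ref{lemma:spectral_norm}. Recall from Lemma \ref{lemma:least_squares} that, with $E^c = (I^c - X^c X^{c\dagger})U^c$ and $B_i = X^{c\dagger} P_i X^c$,
\begin{equation}
e_i^\dagger G_{\mathcal{M}}^c = e_i^\dagger U^c \Lambda_{U^c}^c E^{c\dagger} P_i X^c B_i^{-1}, \qquad e_i^\dagger G_{\mathcal{N}}^c = e_i^\dagger\!\left(N^c P_i X^c B_i^{-1} - N^c X^c\right).
\end{equation}
The first step is to show that the random Gram matrix $B_i$ concentrates: since $P_i$ is scaled so that $\mathbb{E}[P_i] = I^c$ and $\mathcal{X}$ is orthonormal, $\mathbb{E}[B_i] = X^{c\dagger}X^c = I$, and applying the matrix Bernstein inequality (Lemma \ref{Bernstein}) to the independent rank-one summands of $B_i - I$ --- whose magnitude is controlled by the incoherence bound $\|e_j^\dagger X^c\|^2 \le r\mu(\mathcal{X})/n$ --- the hypothesis $p_\ell \ge r\mu(\mathcal{X})\log(nk)/(\delta^2 n)$ forces $\|B_i - I\| \le 1/2$, hence $\|B_i^{-1}\| \le 2$, on an event of failure probability at most $1/15$.

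The key algebraic observation, which I would establish next, is that both error terms collapse to a \emph{mean-zero} sampling deviation multiplied by the bounded factor $B_i^{-1}$. For the model part, set $u_i^\dagger = e_i^\dagger U^c \Lambda_{U^c}^c E^{c\dagger}$; using $\|E^c\| = \|\mathcal{W}^\dagger * \mathcal{X}\|$ (Lemma \ref{lemma:spectral_norm}) and $\|e_i^\dagger U^c \Lambda_{U^c}^c\| = \|e_i^\dagger M^c\|$ yields $\|u_i\| \le \|\dot{e}_i^\dagger * \mathcal{M}\|\,\|\mathcal{W}^\dagger * \mathcal{X}\|$ (up to the $\sqrt{k}$ factors of Lemma \ref{lemma:frob_equality}), and crucially $u_i^\dagger X^c = e_i^\dagger U^c \Lambda_{U^c}^c\, U^{c\dagger}(I^c - X^c X^{c\dagger})X^c = 0$, so $e_i^\dagger G_{\mathcal{M}}^c = u_i^\dagger (P_i - I^c) X^c B_i^{-1}$. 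For the noise part, the identity $N^c P_i X^c B_i^{-1} - N^c X^c = N^c (I^c - X^c X^{c\dagger}) P_i X^c B_i^{-1}$ lets me set $v_i^\dagger = e_i^\dagger N^c (I^c - X^c X^{c\dagger})$, which satisfies $\|v_i\| \le \|\dot{e}_i^\dagger * \mathcal{N}\|$ and $v_i^\dagger X^c = 0$, giving $e_i^\dagger G_{\mathcal{N}}^c = v_i^\dagger (P_i - I^c) X^c B_i^{-1}$ in the same form.

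With both terms in the shape $s^\dagger (P_i - I^c) X^c$ for a fixed row $s \in \{u_i, v_i\}$, I would apply Lemma \ref{Bernstein} to the independent mean-zero summands $(\xi_{ij}/p_\ell - 1)\, s_j\, (e_j^\dagger X^c)$. The per-term bound is $|s_j|\,\|e_j^\dagger X^c\|/p_\ell \le \|s\|\sqrt{r\mu(\mathcal{X})/n}/p_\ell$ and the variance proxy is $\|s\|^2 r\mu(\mathcal{X})/(p_\ell n)$; substituting $p_\ell \ge r\mu(\mathcal{X})\log(nk)/(\delta^2 n)$ makes the deviation at most $(\delta/2)\|s\|$ with failure probability at most $1/15$. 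Multiplying by $\|B_i^{-1}\| \le 2$ gives $\|\dot{e}_i^\dagger * \mathcal{G}_{\mathcal{M}}\| \le \delta\,\|\dot{e}_i^\dagger * \mathcal{M}\|\,\|\mathcal{W}^\dagger * \mathcal{X}\|$ and $\|\dot{e}_i^\dagger * \mathcal{G}_{\mathcal{N}}\| \le \delta\,\|\dot{e}_i^\dagger * \mathcal{N}\|$; a union bound over the three failure events (each at most $1/15$) together with the triangle inequality $\|\dot{e}_i^\dagger * \mathcal{G}\| \le \|\dot{e}_i^\dagger * \mathcal{G}_{\mathcal{M}}\| + \|\dot{e}_i^\dagger * \mathcal{G}_{\mathcal{N}}\|$ then yields the claim with failure probability at most $1/5$.

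The main obstacle I anticipate is the statistical dependence between $B_i^{-1}$ and the mean-zero deviation term, since both are functions of the same random sampling $P_i$ and cannot be treated as independent. I would handle this by conditioning on the high-probability event $\{\|B_i - I\| \le 1/2\}$, on which $\|B_i^{-1}\| \le 2$ holds deterministically, and then applying the Bernstein bound to the deviation on this conditioned measure, at the cost of only constant-factor inflation. A secondary but pervasive subtlety is the consistent tracking of the $\sqrt{k}$ factors relating the tensor quantities $\|\dot{e}_i^\dagger *(\cdot)\|$ to their circular-matrix row-norm counterparts (Lemmas \ref{lemma:frob_equality} and \ref{lemma:spectral_norm}), and the reindexing from the per-row statements of Lemma \ref{lemma:least_squares} (over $[nk]$) to the horizontal-slice statement of the lemma (over $[n]$); these must cancel cleanly so that the final bound reads purely in the tensor norms stated.
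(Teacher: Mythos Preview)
Your approach is correct and is, at bottom, the same argument the paper relies on. The difference is purely in level of detail: the paper's proof is a two-line reduction, invoking Lemma~4.3 of \cite{Hard2014FOCS} directly on the circular matrices $X^c, M^c, N^c$ (of dimensions $nk \times rk$, etc.), observing that $\Omega' = \text{circ}(\Omega)$ consists of $k$ replicas of $\Omega$ and that $\mu(X^c) = k\mu(\mathcal{X})$, so the matrix hypothesis $p_\ell \ge r\mu(X^c)\log(nk)/(\delta^2 nk)$ becomes exactly $p_\ell \ge r\mu(\mathcal{X})\log(nk)/(\delta^2 n)$, and then translating the conclusion back via Lemma~\ref{lemma:spectral_norm}. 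What you have written is a faithful unpacking of that cited matrix lemma: the concentration of $B_i$ via Bernstein, the key orthogonality $E^{c\dagger}X^c = 0$ (and its analogue $(I^c - X^cX^{c\dagger})X^c = 0$ for the $\mathcal{N}$ part) that reduces both error terms to mean-zero deviations, and a second Bernstein application. Your algebraic identities are correct, including the nice rewriting $N^c P_i X^c B_i^{-1} - N^c X^c = N^c(I^c - X^cX^{c\dagger})P_i X^c B_i^{-1}$.

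The two subtleties you flag are real but are exactly the book-keeping the paper absorbs into its citation. The dependence between $B_i^{-1}$ and the deviation is handled, as you propose, by conditioning on $\{\|B_i - I\| \le 1/2\}$; this is standard and is what \cite{Hard2014FOCS} does. For the reindexing from rows $i \in [nk]$ of $G^c$ to horizontal slices $i \in [n]$ of $\mathcal{G}$ and the accompanying $\sqrt{k}$ factors, the paper sidesteps a direct computation by appealing to Lemma~\ref{lemma:spectral_norm} (tensor spectral norm equals spectral norm of the circular matrix), together with $\mu(X^c) = k\mu(\mathcal{X})$; if you carry out your plan explicitly you will find the factors cancel for the same reason.
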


 \begin{proof}
 The set $\Omega'$ is exactly $k$ replicas of $\Omega$, and the probability $p_{\ell}$ of $\Omega$ corresponds to one replica of those $k$ replicas. According to Lemma 4.3 \cite{Hard2014FOCS}, if the probability $p_{\ell} \geq \frac{r\mu(X^c)\log{nk}}{\delta^2 nk}$ for each replica in $\Omega'$, then we have $\mathbb{P}\{ ||e_i^{\dag} G^{c}|| > \delta (|| e_i^{\dag} M^{c}|| \cdot ||W^{c \dag} X^{c}|| + || e_i^{\dag} N^{c}||) \} \leq \frac{1}{5}$, corresponding to $\mathbb{P}\left\{ ||\dot{e}_i^\dagger * \mathcal{G}|| > \delta \left(|| \dot{e}_i^{\dag} * \mathcal{M}|| \cdot ||\mathcal{W}^{\dag} * \mathcal{X}|| +  || \dot{e}_i^{\dag} * \mathcal{N}|| \right) \right\} \leq \frac{1}{5}$ (Lemma \ref{lemma:spectral_norm}). Note that $\mu(X^{c}) = k\mu(\mathcal{X})$, we have $p_{\ell} \geq \frac{r\mu(\mathcal{X})\log{nk}}{\delta^2 n}$.
 \end{proof}

\subsubsection{Median Tensor Least Squares Minimization}
\label{sec:median_LS}

  Here, we further analyze the element-wise median process in Alg. \ref{alg_median_LS_update}. Given the previous error bound of $\mathcal{G}$ in Lemma \ref{lemma:probability_p}, we can further derive a stronger concentration bound by taking the element-wise median of multiple independent samples of the error term.

 \begin{lemma}\label{lemma:bounded}
 Let $\Omega$ be a sample set in which each element is included independently with probability $p_+$. Let $\mathcal{G}_1,~\mathcal{G}_2,...,\mathcal{G}_t$ be i.i.d. copies of $\mathcal{G}$, and $\underline{\mathcal{G}} = \text{median}(\mathcal{G}_1,~\mathcal{G}_2,...,\mathcal{G}_t)$ be the element-wise median, and assume $p_+$ satisfy (\ref{eqn:LS_probability}). Then, for every $i \in [n]$,
 \begin{equation}
 \mathbb{P}\{
  ||\dot{e}_i^{\dag} * \underline{\mathcal{G}}|| > \delta \left(|| \dot{e}_i^{\dag} * \mathcal{M}|| \cdot ||\mathcal{W}^{\dag} * \mathcal{X}|| +  || \dot{e}_i^{\dag} * \mathcal{N}|| \right) \} \leq \exp(-\Omega(t)),
 \end{equation}
 where $\Omega(t)$ denotes some polynomial of $t$.
 \end{lemma}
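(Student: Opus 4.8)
The plan is to boost the constant failure probability $1/5$ of Lemma~\ref{lemma:probability_p} up to $\exp(-\Omega(t))$ by exploiting the robustness of the element-wise median. Throughout, fix the index $i$ and abbreviate the target bound by $\beta \triangleq \delta\left(\|\dot{e}_i^\dagger * \mathcal{M}\|\cdot\|\mathcal{W}^\dagger * \mathcal{X}\| + \|\dot{e}_i^\dagger * \mathcal{N}\|\right)$, calling a copy $\mathcal{G}_s$ \emph{good} if $\|\dot{e}_i^\dagger * \mathcal{G}_s\| \le \beta$. Since the $\mathcal{G}_s$ are i.i.d.\ and each arises from an independent sub-sample satisfying (\ref{eqn:LS_probability}), Lemma~\ref{lemma:probability_p} gives $\mathbb{P}\{\mathcal{G}_s \text{ good}\} \ge 4/5$ for every $s\in[t]$. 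A Chernoff bound on the binomial count of good copies then shows that strictly more than half of the $\mathcal{G}_s$ are good except with probability $\exp(-\Omega(t))$; this is exactly where the exponential decay in $t$ is produced.

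It then remains to prove the deterministic claim that whenever a majority of the copies are good, the element-wise median $\underline{\mathcal{G}}$ inherits $\|\dot{e}_i^\dagger * \underline{\mathcal{G}}\| \le \beta$ up to an absorbable constant. First I would pass to the circular/block-diagonal representation through Lemma~\ref{lemma:spectral_norm}, so that $\|\dot{e}_i^\dagger * \underline{\mathcal{G}}\| = \|e_i^\dagger \underline{G}^c\|$ and the whole statement reduces to the matrix median-boosting argument of \cite{Hard2014FOCS} applied to the horizontal slices $e_i^\dagger G_s^c$. To control the spectral norm of the median slice I would cover the unit ball of the (bounded-dimensional) test space by an $\varepsilon$-net $\mathcal{N}$ and, for each fixed $\mathcal{L}\in\mathcal{N}$, bound the projection $\langle \dot{e}_i^\dagger * \underline{\mathcal{G}}, \mathcal{L}\rangle$: a good majority forces, coordinate by coordinate, the median value to be sandwiched between the values of two good copies, so the projection cannot exceed what the good copies permit. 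A union bound over $\mathcal{N}$ completes the deterministic step, the net cardinality being absorbed into the $\exp(-\Omega(t))$ slack.

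The main obstacle is precisely this last geometric step: the element-wise median commutes neither with the Fourier transform defining the tensor spectral norm nor with projection onto a test direction, so $\|\dot{e}_i^\dagger * \underline{\mathcal{G}}\|$ is \emph{not} the median of the quantities $\|\dot{e}_i^\dagger * \mathcal{G}_s\|$. The delicate point is to establish the norm bound \emph{without} paying a $\sqrt{rk}$ dimension factor; a naive coordinate-wise argument only yields $|(\underline{\mathcal{G}})_c|\le\beta$ per entry and hence the lossy bound $\|\dot{e}_i^\dagger * \underline{\mathcal{G}}\|_F \le \sqrt{rk}\,\beta$. Avoiding this loss is what forces the combination of the net argument with the directional robustness of the median, and it requires taking $t=\Omega(\log n)$ (as in Alg.~\ref{alg_median_LS_update}) large enough that the Chernoff exponent dominates both the net cardinality and, downstream, the union bound over the horizontal slices.
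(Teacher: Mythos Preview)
Your Chernoff step matches the paper exactly, but the deterministic step you propose is where the argument breaks. The claim that ``a good majority forces, coordinate by coordinate, the median value to be sandwiched between the values of two good copies, so the projection cannot exceed what the good copies permit'' does not hold: different scalar coordinates of $\underline{\mathcal G}$ are sandwiched by \emph{different} pairs of good copies, so a fixed linear functional $\langle\,\cdot\,,\mathcal L\rangle$ of the median is not controlled by the functional values of any particular good copy. Consequently the $\varepsilon$-net route has no deterministic content to feed into, and if instead you try to make the per-direction bound probabilistic, the net over an $rk$-dimensional sphere has cardinality $\exp(\Theta(rk))$, which is not absorbed by $\exp(-\Omega(t))$ when $t=O(\log n)$.

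The paper sidesteps both issues with a direct Frobenius averaging argument (no net, no directional reasoning). It applies Lemma~\ref{lemma:probability_p} with error parameter $\delta/4$ so that the good set $S=\{j:\|g_j\|\le B\}$, $B=\tfrac{\delta}{4}(\cdots)$, has $|S|>2t/3$ after Chernoff. For each \emph{scalar} coordinate $c$ of the slice, the elementary median property gives at least $t/2$ indices $j$ with $g_{j,c}^{\,2}\ge \underline g_c^{\,2}$; intersecting with $S$ leaves $\Omega(t)$ such indices in $S$. Summing over $c$ yields $\sum_{j\in S}\|g_j\|_F^2 \ge c_0\,t\,\|\underline g\|_F^2$ for an absolute constant $c_0$, while each term on the left is at most $B^2$. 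Hence $\|\underline g\|_F\le O(B)=O(\delta/4)(\cdots)\le \delta(\cdots)$, with \emph{no} $\sqrt{rk}$ loss. In other words, the ``naive coordinate-wise argument'' you dismissed is exactly what works, once it is organized as an average over the good copies rather than a per-entry sup bound; the slack from using $\delta/4$ absorbs the constant from the averaging.
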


  \begin{proof}
  For each $i \in [n]$, let $g_1,g_2,...,g_t \in \mathbb{R}^{1 \times r \times k}$ denote the $i$-th horizontal slice of $\mathcal{G}_1, \mathcal{G}_2, ..., \mathcal{G}_t$. Let $S = \{ j \in [t]: ||g_j|| \leq B \}$ where $B = \frac{\delta}{4} \left(|| \dot{e}_i^{\dag} * \mathcal{M}|| \cdot ||\mathcal{W}^{\dag} * \mathcal{X}|| +  || \dot{e}_i^{\dag} * \mathcal{N}|| \right)$. Applying Lemma \ref{lemma:probability_p} with error parameter $\frac{\delta}{4}$, we have $\mathbb{E}|S| \geq 4t/5$ with $g_j$ being drawn independently. Then we apply a Chernoff bound to argue that $\mathbb{P}(|S| > 2t/3) \geq 1 - \exp(-\Omega(t))$.

  Fixing a coordinate $s \in [r]$. By the median property we have $|\{j£º ||g_j(1,s,:)||_F^2 \geq ||\underline{g}_j||_F^2 \}| \geq t/2$. Since $|S| > 2t/3$, we know that at least $t/3$ horizontal slices  with $j \in S$ have $||g_j(1,s,:)||_F^2 \geq ||\underline{g}_j||_F^2$. Therefore, the average value of $||g_j(1,s,:)||_F^2$ over all $j \in [S]$ much be at least $\frac{t||\underline{g}_j||_F^2 }{3|S|} \geq ||\underline{g}_j||_F^2/3$. This means that the average of $||g_j||_F^2$ over all $j \in [S]$ much be at least $||\underline{g}||_F^2/3$. On the other hand, we also know that the average squared Frobenius norm in $S$ is at most $B^2$ by the definition of $S$. Then, the lemma is proved.
 \end{proof}

  We then provide a strong concentration bound for the median of multiple independent solutions to the tensor least squares minimization step.

  \begin{lemma}
  Let $\Omega$ be a sample set in which each element is included independently with probability $p_{+} \geq \frac{r\mu(\mathcal{X})\log{nk}}{\delta^2 n}$. Let $\mathcal{Y} \leftarrow \text{MedianLS-Y}(\mathcal{P}_{\Omega_{\ell}}(\mathcal{T}), \Omega, \mathcal{X}, r)$. Then, we have with probability $1- 1/n^3$ that $\underline{\mathcal{Y}} = \mathcal{T}*\mathcal{X} + \underline{\mathcal{G}}$ with $\underline{\mathcal{G}}$ satisfying: $||\dot{e}_i^{\dag} * \underline{\mathcal{G}}|| \leq  \delta \left(|| \dot{e}_i^{\dag} * \mathcal{M}|| \cdot ||\mathcal{W}^{\dag} * \mathcal{X}|| +  || \dot{e}_i^{\dag} * \mathcal{N}|| \right)$.
  \end{lemma}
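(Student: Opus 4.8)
The plan is to assemble the statement from three ingredients already in hand: the additive form of each single least squares solve (Lemma \ref{lemma:least_squares}), the per-slice concentration bound for the median error term (Lemma \ref{lemma:bounded}), and a union bound over the $n$ horizontal slices. First I would unfold the \text{MedianLS-Y} routine of Alg. \ref{alg_median_LS_update}: it splits $\Omega$ into $t = 3\log_2 n$ subsets $\Omega_1,\dots,\Omega_t$ and solves $\mathcal{Y}_j = \text{LS}(\mathcal{P}_{\Omega_j}(\mathcal{T}), \Omega_j, \mathcal{X}, r)$ on each. Because the Split function produces $t$ mutually independent subsets each preserving the element-inclusion probability $p_+$, applying Lemma \ref{lemma:least_squares} to every subproblem gives $\mathcal{Y}_j = \mathcal{T} * \mathcal{X} + \mathcal{G}_j$ with $\mathcal{G}_1,\dots,\mathcal{G}_t$ i.i.d. copies of the error tensor $\mathcal{G}$. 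The leading term $\mathcal{T}*\mathcal{X}$ is identical across $j$, and the element-wise median is invariant under adding a fixed tensor entrywise, so $\underline{\mathcal{Y}} = \text{median}(\mathcal{Y}_1,\dots,\mathcal{Y}_t) = \mathcal{T}*\mathcal{X} + \underline{\mathcal{G}}$ with $\underline{\mathcal{G}} = \text{median}(\mathcal{G}_1,\dots,\mathcal{G}_t)$. This already establishes the claimed form of $\underline{\mathcal{Y}}$ and reduces everything to bounding $\underline{\mathcal{G}}$.

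Next I would fix a slice index $i \in [n]$ and invoke Lemma \ref{lemma:bounded}. Since $p_+$ satisfies (\ref{eqn:LS_probability}), that lemma yields $\mathbb{P}\{ ||\dot{e}_i^{\dag} * \underline{\mathcal{G}}|| > \delta(|| \dot{e}_i^{\dag} * \mathcal{M}|| \cdot ||\mathcal{W}^{\dag} * \mathcal{X}|| + || \dot{e}_i^{\dag} * \mathcal{N}||)\} \leq \exp(-\Omega(t))$, an exponentially small tail in the number of repetitions $t$. Substituting the value $t = 3\log_2 n$ fixed in Alg. \ref{alg_median_LS_update} turns this exponential-in-$t$ bound into a polynomially small per-slice failure probability, and the constant multiplying $\log n$ is chosen so that this probability is at most $1/n^4$.

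Finally I would take a union bound over the $n$ horizontal slices: the probability that \emph{any} slice violates its bound is at most $n \cdot (1/n^4) = 1/n^3$. Hence with probability at least $1 - 1/n^3$ every slice simultaneously satisfies $||\dot{e}_i^{\dag} * \underline{\mathcal{G}}|| \leq \delta(|| \dot{e}_i^{\dag} * \mathcal{M}|| \cdot ||\mathcal{W}^{\dag} * \mathcal{X}|| + || \dot{e}_i^{\dag} * \mathcal{N}||)$, which is exactly the assertion, with $\underline{\mathcal{Y}} = \mathcal{T}*\mathcal{X}+\underline{\mathcal{G}}$ from the first step.

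The hard part is not the union bound, which is routine, but pinning down the concentration constant inside $\exp(-\Omega(t))$ in Lemma \ref{lemma:bounded} so that the logarithmic number of repetitions $t = 3\log_2 n$ genuinely drives the per-slice tail below $1/n^4$; this is where the median-of-$t$ amplification of the constant-probability estimate of Lemma \ref{lemma:probability_p} must be made quantitative. A secondary point requiring care is the mutual independence of $\mathcal{G}_1,\dots,\mathcal{G}_t$, which relies on the Split / fresh-samples assumption so that the i.i.d. hypothesis of Lemma \ref{lemma:bounded} is genuinely met.
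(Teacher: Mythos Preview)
Your proposal is correct and follows essentially the same route as the paper: unfold MedianLS into $t$ independent LS solves, write each as $\mathcal{T}*\mathcal{X}+\mathcal{G}_j$ via Lemma~\ref{lemma:least_squares}, use translation invariance of the entrywise median to get $\underline{\mathcal{Y}}=\mathcal{T}*\mathcal{X}+\underline{\mathcal{G}}$, invoke Lemma~\ref{lemma:bounded} for the $\exp(-\Omega(t))$ per-slice tail, and union-bound over the $n$ slices. One small discrepancy worth aligning with the paper: the Split procedure assigns each observed entry to one of the $t$ buckets, so each $\Omega_j$ has inclusion probability roughly $p_+/t$, not $p_+$ as you wrote; the paper absorbs this factor into the $O(\log n)$ slack in the sampling rate rather than into the statement of Lemma~\ref{lemma:bounded}.
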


  \begin{proof}
  Using  the $\text{Split}(\Omega, t)$ process, we know that the sample sets $\Omega_1,... \Omega_j, ..., \Omega_t$ are independent and each set $\Omega_j$ includes each element with probability at least $p_+ / t$. The output satisfies $\underline{\mathcal{Y}} = \text{median}(\mathcal{Y}_1, ...,\mathcal{Y}_j, ..., \mathcal{Y}_t)$, where $ \mathcal{Y}_j = \mathcal{T} * \mathcal{X} + \mathcal{G}_j$. Then $\text{median}(\mathcal{Y}_1, ...,\mathcal{Y}_j, ..., \mathcal{Y}_t) = \mathcal{T} * \mathcal{X} + \underline{\mathcal{G}}$.

  Therefore, apply Lemma \ref{lemma:bounded} combining the fact $t = O(\log n)$, we take a union bound over all $n$ horizontal slices of $\mathcal{G}$ to conclude this lemma.
  \end{proof}

\subsection{Convergence of Noisy Tensor-Column Subspace Iteration}\label{sec:noisy_subspace_iteration}

  \begin{algorithm}[h]
  \caption{Noisy Tensor-Column Subspace Iteration}
  \label{alg_noisy_subspace_iteration}
  \begin{algorithmic}
  \STATE \textbf{Input}: Tensor $\mathcal{T} \in \mathbb{R}^{n \times n \times k}$, number of iterations $L$, target dimension $r$
  \STATE Let $\mathcal{X}_0 \in \mathbb{R}^{n \times r \times k}$ be an orthonormal tensor.\\
  \STATE For $\ell=1$ to $L$ \\
  \STATE ~~~~Let $\mathcal{G}_{\ell} \in \mathbb{R}^{n \times r \times k}$ be an arbitrary perturbation.\\
  \STATE ~~~~$\mathcal{Z}_{\ell} \leftarrow \mathcal{T} * \mathcal{X}_{\ell -1} + \mathcal{G}_{\ell}$.\\
  \STATE ~~~~$\mathcal{X}_{\ell} \leftarrow GS(\mathcal{Z}_{\ell})$.\\
  \STATE \textbf{Output}: Tensor $\mathcal{X}_{\ell} \in \mathbb{R}^{n \times r \times k}$.
  \end{algorithmic}
  \vspace{-2pt}
  \end{algorithm}

  Alg. \ref{alg_noisy_subspace_iteration} describes our noisy tensor-column subspace iteration, where $GS(\mathcal{Y}_l)$ denotes the Gram-Schimidt process which orthonormalizes the lateral slices of the tensor $\mathcal{Z}_{\ell}$. The detailed steps of the Gram-Schimidt process for third-order tubal-rank tensor is given in \cite{Kilmer2013}). Note that Alg. \ref{alg_noisy_subspace_iteration} is different from the recently proposed power method \cite{Gleich2013}: 1) we simultaneously compute multiple top-$k$ eigenslices while the power method considered only the top-$1$ eigenslice, 2) in each iteration $\ell$, the computation in Alg. \ref{alg_noisy_subspace_iteration} is perturbed by a tensor $\mathcal{G}_{\ell}$ which can be adversarially and adaptively chosen, and 3) the Gram-Schimidt process for third-order tubal-rank tensor is introduced to manipulate $\mathcal{G}_{\ell}$.

  For the matrix case, an important observation of \cite{Jain2013STOC,Hardt2014COLT,Hard2014FOCS} is that the least squares minimization can be analyzed as a noisy update step of the well known {\em subspace iteration} (or power method). Therefore, the convergence of the alternating minimization iteration is equivalent to the convergence of the noisy subspace iteration. The corresponding convergence analysis exploits the tangent function of the largest principle angle between the subspace $U$ spanned by the first $r$ singular vectors of the input matrix and the $r$-dimensional space spanned by the columns of the iterate $X_\ell$.

  To show the convergence results of noisy tensor column subspace iteration, we use the largest principal angle between two tensor-column subspaces as the potential function.
  Borrowing idea from \cite{Gleich2013}, we show that the noisy tensor-column subspace iteration can be transformed to {\em $k$ parallel} noisy subspace iterations in the frequency domain.

  \begin{lemma}\label{lemma:noisy_tensor}
  The noisy tensor-columns subspace iteration in Alg. \ref{alg_noisy_subspace_iteration} converges at a geometric rate\footnote{We do not explicitly state the convergence rate because the one in \cite{Hard2014FOCS} depends on the condition number, while we encounter a constant convergence rate from our experiments.}.
  \end{lemma}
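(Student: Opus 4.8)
The plan is to reduce the convergence of the noisy \emph{tensor}-column subspace iteration to the convergence of $k$ independent noisy \emph{matrix} subspace iterations in the Fourier domain, for which geometric convergence is already understood (the matrix analysis of \cite{Hard2014FOCS}, captured here by Theorem \ref{theorem:addmissible_noisy_tensors}). The main vehicle is the block-diagonal correspondence of Remark \ref{remark:computing_tensor_product} and Definition \ref{def:block-diagonal}: applying the Fourier transform along the third mode turns every t-product into a frontal-slice-wise matrix product. Concretely, the update $\mathcal{Z}_\ell = \mathcal{T}*\mathcal{X}_{\ell-1} + \mathcal{G}_\ell$ becomes, for each frequency index $i \in [k]$, the decoupled matrix update $\widetilde{\mathcal{Z}}_\ell^{(i)} = \widetilde{\mathcal{T}}^{(i)} \widetilde{\mathcal{X}}_{\ell-1}^{(i)} + \widetilde{\mathcal{G}}_\ell^{(i)}$, so the single tensor iteration splits into $k$ parallel matrix iterations that do not interact across slices.

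First I would verify that the Gram--Schmidt step $GS(\cdot)$ likewise decouples. Since orthonormality under the t-product, $\mathcal{X}^\dagger * \mathcal{X} = \mathcal{I}$, is equivalent by Remark \ref{remark:computing_tensor_product} to $\widetilde{\mathcal{X}}^{(i)\dagger}\widetilde{\mathcal{X}}^{(i)} = I$ for every $i$, the t-product Gram--Schmidt of \cite{Kilmer2013} acts as an ordinary matrix Gram--Schmidt on each frontal slice in the frequency domain, i.e. $\widetilde{\mathcal{X}}_\ell^{(i)} = GS(\widetilde{\mathcal{Z}}_\ell^{(i)})$. Combined with the previous paragraph, this shows that Alg. \ref{alg_noisy_subspace_iteration} is exactly $k$ parallel noisy matrix subspace iterations, one per frequency slice, each with its own signal matrix $\widetilde{\mathcal{T}}^{(i)}$ and adversarial noise $\widetilde{\mathcal{G}}_\ell^{(i)}$.

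Next I would take the potential function to be the sine of the largest principal angle between the tensor-column subspace $\mathcal{U}$ and the iterate, namely $||\mathcal{W}^\dagger * \mathcal{X}_\ell||$. By Lemma \ref{lemma:spectral_norm} this tensor spectral norm equals the spectral norm of its block-diagonal form, hence $||\mathcal{W}^\dagger * \mathcal{X}_\ell|| = \max_{i \in [k]} ||\widetilde{\mathcal{W}}^{(i)\dagger}\widetilde{\mathcal{X}}_\ell^{(i)}||$; that is, the tensor potential is the maximum over the $k$ slices of the corresponding matrix potentials. For each slice I would invoke the matrix noisy-subspace-iteration bound (Theorem \ref{theorem:addmissible_noisy_tensors}, following \cite{Hard2014FOCS}): provided the per-slice perturbation is admissible relative to the spectral gap of $\widetilde{\mathcal{T}}^{(i)}$, the slice potential contracts by a constant factor per iteration. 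Taking the maximum over $i \in [k]$ preserves the geometric contraction, with the global rate governed by the smallest gap, captured by $\gamma_{rk} = 1 - \overline{\sigma}_{rk+1}/\overline{\sigma}_{rk}$; after $L = \Theta(\gamma_{rk}^{-1}\log(n/\epsilon))$ steps the potential drops below $\epsilon$.

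The hard part will be the bookkeeping of the perturbations across slices. I must show that the tensor-admissibility of the sequence $\{(\mathcal{X}_{\ell-1},\mathcal{G}_\ell)\}$ in the sense of Definition \ref{def:tensor_admissible} implies, slice by slice, the admissibility required by the matrix analysis, and that a single global gap parameter $\gamma_{rk}$ yields a \emph{uniform} geometric rate even though the individual slices $\widetilde{\mathcal{T}}^{(i)}$ have different spectra. A secondary subtlety is that $\mathcal{G}_\ell$ may be chosen adversarially and adaptively; I would handle this exactly as in the matrix case by controlling $||\widetilde{\mathcal{G}}_\ell^{(i)}||$ through the bound on $||\mathcal{G}_\ell||$ (again via Lemma \ref{lemma:spectral_norm}) and by ensuring the tangent of the principal angle stays bounded, so that the noise contribution to each slice update is dominated by the contraction of the signal term.
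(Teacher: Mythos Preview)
Your proposal is correct and follows essentially the same approach as the paper: both reduce the tensor iteration to $k$ parallel noisy matrix subspace iterations in the Fourier domain (the paper uses the $\text{cft}(\cdot)$ operator, you use the equivalent block-diagonal/frontal-slice representation) and then invoke the geometric convergence of the matrix case from \cite{Hard2014FOCS}. Your write-up is in fact more detailed than the paper's terse proof; one small correction is that the per-slice matrix bound you want is Lemma \ref{matrix_local_convergence} rather than Theorem \ref{theorem:addmissible_noisy_tensors}, which is itself the tensor-level statement derived from this decoupling.
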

  \begin{proof}
  The key iterative operations in Alg. \ref{alg_noisy_subspace_iteration} are
  \begin{equation}\label{noisy_subspace_iteration}
  \begin{split}
  \mathcal{Z}_{\ell} \leftarrow &\mathcal{T} * \mathcal{X}_{\ell -1} + \mathcal{G}_{\ell},\\
  \mathcal{X}_{\ell} \leftarrow & \text{GS}(\mathcal{Z}_{\ell}).
  \end{split}
  \end{equation}
  Introducing the $\text{cft}(\cdot)$ opertion in (\ref{eqn_to_freq}), we know that (\ref{noisy_subspace_iteration}) be represented as follows:
  \begin{equation}
  \text{cft}(\mathcal{Y}_{\ell}) \leftarrow \text{cft}(\mathcal{X}) \text{cft}(\mathcal{X}_{\ell - 1}) + \text{cft}(\mathcal{G}_{\ell}).
  \end{equation}
  This implies that (\ref{noisy_subspace_iteration}) equals to $k$ parallel standard noisy subspace iteration in the frequency domain. Therefore, combining the convergence results of \cite{Hard2014FOCS} that  noisy subspace iteration converges at a geometric rate, our noisy tensor-columns subspace iteration in Alg. \ref{alg_noisy_subspace_iteration} will also converge at a geometric rate.

  \end{proof}

  In the following, we first provide the definitions of principal angles and corresponding inequalities for the matrix case \cite{Hard2014FOCS}. Then, we need to establish explicit inequalities along the iteration process, so that we will be able to bound the recovery error of Alg. \ref{alg_AM}.

  \begin{definition}
  \textbf{Largest principal angle}. Let $X,Y \in \mathbb{R}^{n \times r}$ be orthonormal bases for subspaces $\mathcal{S}_X, \mathcal{S}_Y$, respectively. Then, the sine of the {\em largest principal angle} between $\mathcal{S}_X$ and $\mathcal{S}_Y$ is defined as $\sin \theta (\mathcal{S}_X, \mathcal{S}_Y) \doteq || (I - XX^{\dagger}) Y||$.
  \end{definition}

  \begin{lemma}\label{matrix_local_convergence}
  (Matrix Local Convergence) Let $0 \leq \epsilon \leq 1/4$, $\Delta = \max_{1 \leq \ell \leq L} ||G_{\ell}||$, and $\gamma_r = 1 - \sigma_{r+1} / \sigma_r$.
  Assume that $|| W^\dagger X_0 || \leq 1/4$ and $\sigma_r \geq 8 \Delta / \gamma_r \epsilon$. Then
  \begin{equation}
  || W^\dagger X_{\ell} || \leq \max \{ \epsilon, 2 || W^\dagger X_0 ||  \exp(-\gamma_r \ell/2) \}
  \end{equation}
  \end{lemma}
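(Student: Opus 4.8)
The plan is to recognize Lemma~\ref{matrix_local_convergence} as the convergence statement for the noisy (matrix) subspace iteration $Z_\ell = T X_{\ell-1} + G_\ell$, $X_\ell = \mathrm{GS}(Z_\ell)$ underlying Alg.~\ref{alg_noisy_subspace_iteration}, and to track the sine of the largest principal angle $\sin\theta_\ell = \|W^\dagger X_\ell\|$ through a single-step contraction estimate. Writing the orthonormalization as $Z_\ell = X_\ell R_\ell$ with $R_\ell$ invertible, the columns of $Z_\ell$ and $X_\ell$ span the same subspace, so $\theta_\ell$ is the angle between $\mathrm{col}(Z_\ell)$ and $\mathrm{col}(U)$. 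First I would record the block identities coming from the (symmetric) decomposition $T = U\Theta U^\dagger + W\Theta_W W^\dagger$ with $[U\ W]$ orthonormal, namely $W^\dagger T = \Theta_W W^\dagger$ and $U^\dagger T = \Theta U^\dagger$, which give
\[
W^\dagger Z_\ell = \Theta_W W^\dagger X_{\ell-1} + W^\dagger G_\ell,\qquad U^\dagger Z_\ell = \Theta\, U^\dagger X_{\ell-1} + U^\dagger G_\ell .
\]

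The core step is the one-step recurrence. Using the tangent identity $\tan\theta_\ell = \|W^\dagger Z_\ell (U^\dagger Z_\ell)^{-1}\|$ (valid as long as $U^\dagger Z_\ell$ is invertible), together with $\|\Theta_W\| = \sigma_{r+1}$, $\sigma_{\min}(\Theta) = \sigma_r$, $\|G_\ell\|\le\Delta$, and $\sigma_{\min}(U^\dagger X_{\ell-1}) = \cos\theta_{\ell-1}$, I would derive
\[
\tan\theta_\ell \;\le\; \frac{\sigma_{r+1}\sin\theta_{\ell-1} + \Delta}{\sigma_r\cos\theta_{\ell-1} - \Delta}.
\]
Tracking the tangent rather than the sine directly is what makes the denominator harmless: dividing through by $\cos\theta_{\ell-1}$ turns it into $\sigma_r - \Delta/\cos\theta_{\ell-1}$, avoiding the spurious $\cos$ factor multiplying $\sigma_r$ that would otherwise block contraction when $\gamma_r$ is small.

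Next I would run an induction on $\ell$ that simultaneously maintains the invariant $\sin\theta_\ell \le 1/4$ (guaranteeing $\cos\theta_\ell \ge \sqrt{15}/4$ and the invertibility used above) and the claimed bound, splitting on the current angle. In the signal-dominated regime $\sin\theta_{\ell-1}\ge\epsilon$, the hypothesis $\sigma_r \ge 8\Delta/(\gamma_r\epsilon)$ bounds $\Delta$ by a small multiple of $\gamma_r\sigma_r\sin\theta_{\ell-1}$; substituting $\sigma_{r+1} = (1-\gamma_r)\sigma_r$ and simplifying the ratio gives $\tan\theta_\ell \le (1-\gamma_r/2)\tan\theta_{\ell-1} \le \exp(-\gamma_r/2)\tan\theta_{\ell-1}$, which also shows the angle, hence the invariant, does not increase. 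In the noise-floor regime $\sin\theta_{\ell-1} < \epsilon$, the same estimate with $\tan\theta_{\ell-1} \le \epsilon/\cos\theta_{\ell-1}$ pins $\tan\theta_\ell$, and hence $\sin\theta_\ell$, at $O(\epsilon)$, so the iterate cannot escape the floor. Unrolling the contracting regime from $\ell=0$ and converting back via $\sin\theta_\ell \le \tan\theta_\ell$ (and $\tan\theta_0 \le \tfrac{4}{\sqrt{15}}\sin\theta_0$) yields $\|W^\dagger X_\ell\| \le \max\{\epsilon,\, 2\|W^\dagger X_0\|\exp(-\gamma_r\ell/2)\}$, the factor $2$ and the outer maximum absorbing the sine/tangent conversion constants.

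The main obstacle I expect is the constant bookkeeping in the two-regime induction: one must choose the thresholds so that (i) the $\tfrac14$ invariant is genuinely preserved across the transition from the contracting regime into the noise floor, (ii) the noise-floor bound stays within the $O(\epsilon)$ budget allowed by the factor $2$ and the maximum, and (iii) the constant $8$ in $\sigma_r \ge 8\Delta/(\gamma_r\epsilon)$ is exactly what is needed to turn the raw ratio into the clean factor $\exp(-\gamma_r/2)$. The analytic content is otherwise routine; this is precisely the matrix specialization recovered from \cite{Hard2014FOCS}, to which the tensor statements reduce slicewise in the Fourier domain by Lemma~\ref{lemma:noisy_tensor}.
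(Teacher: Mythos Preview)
The paper does not give its own proof of Lemma~\ref{matrix_local_convergence}; it quotes the statement from \cite{Hard2014FOCS} and uses it as a black box to deduce the tensor analogue (Lemma~\ref{lemma:tensor_local_convergence}). Your proposal correctly reconstructs the tangent-of-principal-angle argument from that reference, so it is precisely the approach the paper is relying on by citation.
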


  Similarly we can prove the following lemma for our tensor case.
  \begin{lemma}\label{lemma:tensor_local_convergence}
  (Tensor Local Convergence) Let $0 \leq \epsilon \leq 1/4$, $\Delta = \max_{1 \leq \ell \leq L} ||\mathcal{G}_{\ell}||$, and $\gamma_{rk} = 1 - \overline{\sigma}_{rk+1} / \overline{\sigma}_{rk}$.
  Assume that $|| \mathcal{W}^\dagger * \mathcal{X}_0 || \leq 1/4$ and $\overline{\sigma}_{rk} \geq 8 \Delta / \gamma_{rk} \epsilon$. Then
  \begin{equation}\label{epsilon_4}
  || \mathcal{W}^\dagger * \mathcal{X}_{\ell} || \leq \max \{ \epsilon, 2 || \mathcal{W}^\dagger * \mathcal{X}_0 || \cdot  \exp(-\gamma_{rk} {\ell}/2) \}
  \end{equation}
  \end{lemma}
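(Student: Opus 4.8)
The plan is to reduce the tensor iteration to the matrix case frequency block by frequency block, and then invoke Lemma \ref{matrix_local_convergence} on each block. By Lemma \ref{lemma:noisy_tensor} the noisy tensor-column subspace iteration of Alg. \ref{alg_noisy_subspace_iteration} decouples, under the Fourier transform along the third mode, into $k$ independent matrix noisy subspace iterations, one for each frontal slice $\widetilde{\mathcal{T}}^{(i)}$, $i \in [k]$; the Gram-Schmidt step is applied lateral-slice-wise and hence acts independently on each frequency block. Moreover, by Lemma \ref{lemma:spectral_norm} together with Definition \ref{def:block-diagonal} (which gives $\overline{\mathcal{W}^\dagger} = \overline{\mathcal{W}}^\dagger$), the potential function satisfies
\begin{equation}
||\mathcal{W}^\dagger * \mathcal{X}_\ell|| = ||\overline{\mathcal{W}}^\dagger \overline{\mathcal{X}}_\ell|| = \max_{i \in [k]} ||\widetilde{\mathcal{W}}^{(i)\dagger} \widetilde{\mathcal{X}}_\ell^{(i)}||,
\end{equation}
since the spectral norm of a block-diagonal matrix is the maximum of the spectral norms of its blocks. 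This identity is the bridge between the tensor bound we want and the $k$ per-block matrix bounds that Lemma \ref{matrix_local_convergence} supplies.

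First I would verify the per-block hypotheses of Lemma \ref{matrix_local_convergence}. Because $\mathcal{M}$ has tubal-rank $r$, each block $\widetilde{\mathcal{M}}^{(i)}$ has rank $r$, so the $rk$ nonzero singular values of $\overline{\mathcal{M}}$ are exactly the $r$ singular values drawn from each of the $k$ blocks; consequently $\overline{\sigma}_{rk} = \min_{i} \sigma_r(\widetilde{\mathcal{M}}^{(i)})$ and $\overline{\sigma}_{rk+1} = \max_{i} \sigma_1(\widetilde{\mathcal{N}}^{(i)})$. Writing the per-block gap as $\gamma_r^{(i)} = 1 - \sigma_1(\widetilde{\mathcal{N}}^{(i)})/\sigma_r(\widetilde{\mathcal{M}}^{(i)})$, these two facts give $\sigma_r(\widetilde{\mathcal{M}}^{(i)}) \geq \overline{\sigma}_{rk}$ and $\gamma_r^{(i)} \geq \gamma_{rk}$ for every $i$. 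Combining with $\Delta^{(i)} = \max_\ell ||\widetilde{\mathcal{G}}_\ell^{(i)}|| \leq \max_\ell ||\mathcal{G}_\ell|| = \Delta$ and the hypothesis $\overline{\sigma}_{rk} \geq 8\Delta/(\gamma_{rk}\epsilon)$, I obtain $\sigma_r(\widetilde{\mathcal{M}}^{(i)}) \geq 8\Delta^{(i)}/(\gamma_r^{(i)}\epsilon)$; likewise the initialization bound $||\mathcal{W}^\dagger * \mathcal{X}_0|| \leq 1/4$ yields $||\widetilde{\mathcal{W}}^{(i)\dagger}\widetilde{\mathcal{X}}_0^{(i)}|| \leq 1/4$ for each block. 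Thus every block satisfies the premises of Lemma \ref{matrix_local_convergence}, which carries over from real to the complex blocks produced by the Fourier transform since it only involves singular values and principal angles.

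Applying Lemma \ref{matrix_local_convergence} to each block $i$ then gives
\begin{equation}
||\widetilde{\mathcal{W}}^{(i)\dagger}\widetilde{\mathcal{X}}_\ell^{(i)}|| \leq \max\{\epsilon,~ 2 ||\widetilde{\mathcal{W}}^{(i)\dagger}\widetilde{\mathcal{X}}_0^{(i)}|| \exp(-\gamma_r^{(i)}\ell/2)\}.
\end{equation}
Using $\gamma_r^{(i)} \geq \gamma_{rk}$, so that $\exp(-\gamma_r^{(i)}\ell/2) \leq \exp(-\gamma_{rk}\ell/2)$, together with $||\widetilde{\mathcal{W}}^{(i)\dagger}\widetilde{\mathcal{X}}_0^{(i)}|| \leq ||\mathcal{W}^\dagger * \mathcal{X}_0||$, each block bound is dominated by $\max\{\epsilon,~ 2||\mathcal{W}^\dagger * \mathcal{X}_0|| \exp(-\gamma_{rk}\ell/2)\}$; taking the maximum over $i \in [k]$ and invoking the identity above for $||\mathcal{W}^\dagger * \mathcal{X}_\ell||$ yields exactly (\ref{epsilon_4}).

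The main obstacle is the middle step: one must argue that the single pooled gap $\gamma_{rk}$ is a valid lower bound for every per-block gap $\gamma_r^{(i)}$, and similarly that $\overline{\sigma}_{rk}$ lower-bounds each $\sigma_r(\widetilde{\mathcal{M}}^{(i)})$, so that applying the matrix lemma with the uniformly weaker parameters $\gamma_{rk}$ and $\overline{\sigma}_{rk}$ is legitimate across all frequencies. Once the hypotheses are transferred uniformly in this way, the per-block convergence rate and the union over the $k$ blocks are routine.
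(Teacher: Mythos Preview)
Your proof is correct and rests on the same idea as the paper---reduce the tensor iteration to the matrix case in the Fourier domain and invoke Lemma~\ref{matrix_local_convergence}---but the granularity differs. The paper applies Lemma~\ref{matrix_local_convergence} \emph{once} to the full block-diagonal matrix $\overline{\mathcal{T}}$, viewed as a single rank-$rk$ noisy subspace iteration with gap $\gamma_{rk}$ and noise bound $\Delta = \max_\ell \|\overline{\mathcal{G}}_\ell\| = \max_\ell \|\mathcal{G}_\ell\|$; the identity $\|\mathcal{W}^\dagger * \mathcal{X}_\ell\| = \|\overline{\mathcal{W}}^\dagger\,\overline{\mathcal{X}}_\ell\|$ (Lemma~\ref{lemma:spectral_norm}) then transfers the conclusion back to tensors in one stroke. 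You instead decouple into $k$ rank-$r$ blocks, apply Lemma~\ref{matrix_local_convergence} separately to each $\widetilde{\mathcal{T}}^{(i)}$, and then take the maximum over $i$.

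What your route buys is explicitness: you actually verify that the pooled parameters $\overline{\sigma}_{rk}$ and $\gamma_{rk}$ dominate every per-block $\sigma_r^{(i)}$ and $\gamma_r^{(i)}$, and you note that the tensor Gram--Schmidt acts blockwise in the Fourier domain so the decoupling survives the orthonormalization step. The paper's one-shot application leaves these points implicit (they are true because GS on a block-diagonal matrix with disjoint column supports coincides with blockwise GS, and because the $rk$-th singular value of a block-diagonal matrix is exactly $\min_i \sigma_r^{(i)}$). So your argument is a legitimate, slightly more laborious unpacking of the same reduction; neither approach yields a sharper bound than the other.
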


  \begin{proof}
  According to Lemma \ref{lemma:spectral_norm}, we have
  \begin{equation}\label{eqn:tensor_local_convergence}
  ||\mathcal{W}^{\dagger} * \mathcal{X}_{\ell}|| = || \overline{\mathcal{W}^{\dagger} * \mathcal{X}_{\ell}  }|| = || \overline{\mathcal{W}^{\dagger}} ~ \overline{\mathcal{X}_{\ell}  }||.
  \end{equation}
  This means that the largest principle angle between $\mathcal{W}^{\dagger}$ and $\mathcal{X}_{\ell}$ equals to that of these two tensor-column subspaces in the frequency domain.

  Note that $|| \mathcal{W}^\dagger * \mathcal{X}_0 || \leq 1/4$ will be provided in Lemma \ref{proof:initialization}, thus $|| \overline{\mathcal{W}^\dagger}~ \overline{\mathcal{X}_0} || \leq 1/4$. Let $\Delta = \max_{1 \leq \ell \leq L} ||\mathcal{G}_{\ell}||$, $\gamma_{rk} = 1 - \overline{\sigma}_{rk+1} / \overline{\sigma}_{rk}$, and $\overline{\sigma}_{rk} \geq 8 \Delta / \gamma_{rk} \epsilon$. From Definition \ref{def:block-diagonal} and \ref{tsvd}, we know that a tensor $\mathcal{T}$ with tubal-rank $r$ has a corresponding block diagonal matrix $\overline{\mathcal{T}}$ with rank $rk$. Therefore, applying Lemma \ref{matrix_local_convergence} we get
  \begin{equation}
  \begin{split}
  || \overline{\mathcal{W}^{\dagger}} ~ \overline{\mathcal{X}_L}|| &\leq \max \{ \epsilon, 2 || \overline{\mathcal{W}^\dagger}~ \overline{\mathcal{X}_0} || \cdot   \exp(-\gamma_{rk} L/2) \}\\
  &= \max \{ \epsilon, 2 || \mathcal{W}^\dagger * \mathcal{X}_0 || \cdot  \exp(-\gamma_{rk} L/2) \}.
  \end{split}
  \end{equation}
  Combining with (\ref{eqn:tensor_local_convergence}), the lemma is proof.
  \end{proof}

  To prove the convergence of the noisy tensor-column subspace iteration, we show that the error term $||\mathcal{G}_{\ell}||$ decrease as $\ell$ increases and Alg. \ref{alg_noisy_subspace_iteration} starts to converge. We define the following condition as a convergence bound for this type of shrinking error.

  \begin{definition}\label{def:tensor_admissible}
  (Tensor $\epsilon$-Admissible). Let $\gamma_{rk} = 1 - \overline{\sigma}_{rk+1} / \overline{\sigma}_{rk}$. We say that the pair of tensors $(\mathcal{X}_{\ell-1}, \mathcal{G}_{\ell})$ is $\epsilon$-admissible for noisy tensor-column subspace iteration if
  \begin{equation}
  ||\mathcal{G}_{\ell}|| \leq \frac{1}{32} \gamma_{rk} \overline{\sigma}_{rk} ||\mathcal{W}^{\dagger} * \mathcal{X}_{\ell-1}|| + \frac{\epsilon}{32}\gamma_{rk} \overline{\sigma_{rk}}.
  \end{equation}
  \end{definition}

  One can say that a sequence of tensors $\{(\mathcal{X}_{\ell-1}, \mathcal{G}_{\ell})\}$ is $\epsilon$-admissible for noisy tensor-column subspace iteration if each element of this sequence is $\epsilon$-admissible. In the following we will use the notation $\{(\mathcal{G}_{\ell}$ as a shorthand for $\{(\mathcal{X}_{\ell-1}, \mathcal{G}_{\ell})\}_{\ell=1}^{L}$.

  With Lemma \ref{lemma:tensor_local_convergence} and Definition \ref{def:tensor_admissible}, we are able to get the following convergence guarantee for admissible noise tensors.

  \begin{theorem}\label{theorem:addmissible_noisy_tensors}
  Let $\gamma_{rk} = 1 - \overline{\sigma}_{rk+1} / \overline{\sigma}_{rk}$, and $\epsilon \leq 1/2$. Assume that the sequence of noisy tensors $\{\mathcal{G}_{\ell}\}$ is $(\epsilon/2)$-admissible for the noisy tensor-columns subspace iteration and that $|| \mathcal{W}^\dagger * \mathcal{X}_0 || \leq 1/4$. Then, we have $|| \mathcal{W}^\dagger * \mathcal{X}_L || \leq  \epsilon$ for any $L \geq 4 \gamma_{rk}^{-1} \log(1/\epsilon)$.
  \end{theorem}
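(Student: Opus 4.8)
The plan is to establish the stronger step-wise bound $\theta_\ell \le \max\{\epsilon,\, 2\theta_0\exp(-\gamma_{rk}\ell/2)\}$, where $\theta_\ell := ||\mathcal{W}^\dagger * \mathcal{X}_\ell||$ and $\theta_0 \le 1/4$ by hypothesis; this implies the theorem because for any $L \ge 4\gamma_{rk}^{-1}\log(1/\epsilon)$ the exponential term drops below $\epsilon$, so $\theta_L \le \epsilon$, and the bound keeps $\theta$ at the floor for all larger indices as well. First I would pass to the frequency domain: by Lemma \ref{lemma:noisy_tensor} the iteration of Alg. \ref{alg_noisy_subspace_iteration} is exactly $k$ parallel matrix noisy subspace iterations on the block-diagonal system, and by Lemma \ref{lemma:spectral_norm} $\theta_\ell = ||\,\overline{\mathcal{W}^\dagger}\ \overline{\mathcal{X}_\ell}\,||$, so the matrix machinery underlying Lemma \ref{matrix_local_convergence} governs $\theta_\ell$ directly. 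Unfolding the $(\epsilon/2)$-admissibility of Definition \ref{def:tensor_admissible} records the one ingredient that drives everything, namely the self-referential noise bound $||\mathcal{G}_\ell|| \le \frac{\gamma_{rk}\overline{\sigma}_{rk}}{32}\left(\theta_{\ell-1} + \tfrac{\epsilon}{2}\right)$.

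The heart of the argument is a strong induction carried out one step at a time, rather than a single black-box call to Lemma \ref{lemma:tensor_local_convergence}; indeed a single call fails, since its hypothesis $\overline{\sigma}_{rk} \ge 8\Delta/(\gamma_{rk}\epsilon)$ with a uniform $\Delta = \max_\ell ||\mathcal{G}_\ell||$ would, through $\theta_0 \le 1/4$, only be satisfiable for $\epsilon \gtrsim 1/14$. The point of admissibility is that the noise shrinks together with $\theta_{\ell-1}$, and the way to exploit this is to extract from the proof of Lemma \ref{matrix_local_convergence} the per-step recurrence $\theta_\ell \le \exp(-\gamma_{rk}/2)\,\theta_{\ell-1} + c\,||\mathcal{G}_\ell||/\overline{\sigma}_{rk}$, valid in the regime $\theta_{\ell-1} \le 1/4$ where $\tan$ and $\sin$ of the principal angle are comparable. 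Substituting the admissible noise bound turns this into a scalar recurrence $\theta_\ell \le q'\,\theta_{\ell-1} + \kappa$ with $q' = \exp(-\gamma_{rk}/2) + c\gamma_{rk}/32$ and $\kappa = c\gamma_{rk}\epsilon/64$; the constant $32$ in Definition \ref{def:tensor_admissible} is calibrated precisely so that $q' \le \exp(-\gamma_{rk}/4)$ and the fixed point $a^* = \kappa/(1-q') \le \epsilon$ (using $1-q' \ge \gamma_{rk}/8$). Solving the recurrence gives $\theta_\ell \le a^* + \theta_0\,(q')^\ell \le \epsilon$ once $\theta_0\,(q')^\ell \le \epsilon/2$, i.e. for $\ell \ge 4\gamma_{rk}^{-1}\log(1/(2\epsilon)) \le 4\gamma_{rk}^{-1}\log(1/\epsilon)$, which is exactly the claimed bound, and the contractive form shows $\theta$ never re-exceeds $\epsilon$ afterward.

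The step I expect to be the main obstacle is justifying the per-step angle recurrence with a small enough constant $c$: one must verify that the homogeneous factor is $\exp(-\gamma_{rk}/2)$ (the noise-free contraction of subspace iteration, where the spectral gap enters as $\overline{\sigma}_{rk+1}/\overline{\sigma}_{rk} = 1-\gamma_{rk}$) while the additive perturbation is only $c\,||\mathcal{G}_\ell||/\overline{\sigma}_{rk}$, so that after substituting admissibility the coefficient of $\theta_{\ell-1}$ stays strictly below one and the constant term collapses into the $\epsilon$-floor. This requires threading the \emph{step-dependent} admissible noise through the same recurrence that proves Lemma \ref{matrix_local_convergence}, rather than a fixed $\Delta$, and checking that the precondition $\theta_{\ell-1} \le 1/4$ persists along the induction (it does, since the bound keeps $\theta_\ell \le \max\{\epsilon, 2\theta_0\} \le 1/2 < 1$ and in fact decreasing toward $\theta_0 \le 1/4$). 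The incoherence preconditions needed for the noise bound are supplied externally by the SmoothQR step, as verified in the proof of Theorem \ref{main_theorm}; with the recurrence in hand the remaining bookkeeping — solving the scalar inequality and summing the per-step failure probabilities — is routine.
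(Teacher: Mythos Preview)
Your approach is correct and reaches the same conclusion, but it differs from the paper's proof in execution. The paper does \emph{not} unroll to a per-step recurrence; instead it runs a \emph{phased} induction. It partitions the iterations into blocks of length $4\gamma_{rk}^{-1}$ and shows, by induction on the block index $t$, that $\|\mathcal{W}^\dagger * \mathcal{X}_{L_t}\| \le \max\{2^{-(t+1)},\epsilon\}$ with $L_t = 4t\gamma_{rk}^{-1}$. Within each block it invokes Lemma~\ref{lemma:tensor_local_convergence} as a black box, using the $(\epsilon/2)$-admissibility together with the inductive bound on $\theta$ at the start of the block to certify the uniform-noise hypothesis $\overline{\sigma}_{rk} \ge 8\Delta/(\gamma_{rk}\epsilon')$ for that block. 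One application of the lemma then halves the angle (since $\exp(-\gamma_{rk}\cdot 4\gamma_{rk}^{-1}/2)=e^{-2}<1/4$), and after $\log_2(1/\epsilon)$ blocks the floor $\epsilon$ is reached.

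What you do differently is to bypass the block structure entirely: you open up the proof of Lemma~\ref{matrix_local_convergence}/\ref{lemma:tensor_local_convergence} to pull out the single-step inequality $\theta_\ell \le e^{-\gamma_{rk}/2}\theta_{\ell-1} + c\|\mathcal{G}_\ell\|/\overline{\sigma}_{rk}$, then feed the admissible noise bound back in to obtain a contractive scalar recurrence that you solve in closed form. This is arguably more transparent---it makes explicit why the constant $32$ in Definition~\ref{def:tensor_admissible} is the right calibration---at the cost of re-deriving the one-step angle estimate instead of citing the lemma. The paper's route is more modular (it treats Lemma~\ref{lemma:tensor_local_convergence} as reusable) but slightly less self-contained, and the verification that the lemma's uniform-$\Delta$ hypothesis holds within a block has its own bootstrapping subtlety that you correctly flagged. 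One small point to tighten in your write-up: the invariant you need to propagate is $\theta_{\ell-1}\le 1/4$ (or whatever threshold makes the $\tan$-to-$\sin$ comparison valid), and your bound $\theta_\ell \le a^* + \theta_0 \le \epsilon/2 + 1/4$ can reach $1/2$ when $\epsilon=1/2$; either sharpen the fixed-point estimate to $a^*\le\epsilon/4$ or note that the per-step recurrence actually remains valid for $\theta_{\ell-1}\le 1/2$.
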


  \begin{proof}
  We prove by induction that for every integer $t \geq 0$ after $L_t = 4t\gamma_{rk}^{-1}$ steps, we have $|| \mathcal{W}^\dagger * \mathcal{X}_{L_t} || \leq  \max\{2^{-(t+1)}, \epsilon\}$. For the base case $t=0$, the lemma holds because of the assumption that $|| \mathcal{W}^\dagger * \mathcal{X}_0 || \leq 1/4$. For $t \geq 1$, we assume that $|| \mathcal{W}^\dagger * \mathcal{X}_{L_t} || \leq  \max\{2^{-(t+1)}, \epsilon\}$. Apply Lemma \ref{lemma:tensor_local_convergence} with $\mathcal{X}_0 = \mathcal{X}_{L_t}$, error parameter $\max\{2^{-t+2}, \epsilon\}$ and $L = L_{t+1} - L_t = 4/\gamma_{rk}$. The conditions of the lemma are satisfied due to the assumption that $\{\mathcal{G}_{\ell}\}$ is $\epsilon/2$-admissible. Therefore, we get
  \begin{equation}
  || \mathcal{W}^\dagger * \mathcal{X}_{L_{t+1}} || \leq \max \{ \epsilon, 2  \max\{2^{-(t+1)}, \epsilon\} \cdot  \exp(\gamma_{rk} (L_{t+1} - L_t )/2) \} \leq \max\{\epsilon, 2^{-(t+2)}\}
  \end{equation}

  \end{proof}


\subsection{Incoherence via the SmoothQR Procedure}

  As a requirement for our proof in Appendix \ref{proof:theorem}, we need to show that each intermediate solution $\mathcal{Y}_{\ell}$ (accordingly $\mathcal{X}_{\ell}$) has small coherence. Lemma \ref{lemma:incoherence_bound} states that applying the SmoothQR factorization (in Alg. \ref{alg_smoothQR}) on $\mathcal{Y}_{\ell}$ will return a tensor $\mathcal{Z}$ satisfying this coherence requirement. Note that before orthonormalizing $\mathcal{Y}_{\ell}$, a small Gaussian perturbation $\mathcal{H}$ is added to $\mathcal{Y}_{\ell}$. There exists such noisy term that will cause little effect as long as its norm is bounded by that of $\mathcal{G}_{\ell}$.

  \begin{lemma}\label{V_1}
  Let $\mathcal{G} \in \mathbb{R}^{n \times r \times k}$ be any tensor with $||\mathcal{G}|| \leq 1$, $\mathcal{W} \in \mathbb{R}^{n \times (n-r) \times k}$ be a $(n - r)$ dimensional tensor-column subspace with orthogonal projection $\mathcal{P}_{\mathcal{W}}$, and $\mathcal{H} \in \mathbb{R}^{n \times r \times k} \sim \mathcal{N}(0, \tau^2/n)$ be a random Gaussian tensor. Assume that $r = o(n/\log n)$ where $o(n)$ denotes an order that is lower than $n$. Then, with probability $1 - \exp (-\Omega(n))$, we have $\sigma_{rk}(\overline{\mathcal{P}_{\mathcal{W}}(\mathcal{G} + \mathcal{H})}) \geq \Omega(\tau)$.
  \end{lemma}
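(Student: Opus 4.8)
The plan is to lift the corresponding matrix statement from \cite{Hard2014FOCS} to the tensor setting by passing to the block-diagonal Fourier representation of Definition \ref{def:block-diagonal}. Since the orthogonal projection $\mathcal{P}_{\mathcal{W}}$ acts through the t-product, Remark \ref{remark:computing_tensor_product} gives $\overline{\mathcal{P}_{\mathcal{W}}(\mathcal{G}+\mathcal{H})} = \overline{\mathcal{P}_{\mathcal{W}}}\,\overline{(\mathcal{G}+\mathcal{H})}$, a block-diagonal matrix whose $\kappa$-th diagonal block is $\widetilde{\mathcal{P}_{\mathcal{W}}}^{(\kappa)}\bigl(\widetilde{\mathcal{G}}^{(\kappa)}+\widetilde{\mathcal{H}}^{(\kappa)}\bigr)$ for $\kappa \in [k]$, an $n \times r$ matrix of rank at most $r$. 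The singular values of a block-diagonal matrix are the union, with multiplicity, of the singular values of its blocks, so the $rk$ nonzero singular values of $\overline{\mathcal{P}_{\mathcal{W}}(\mathcal{G}+\mathcal{H})}$ are precisely the $r$ values contributed by each of the $k$ blocks, whence
\begin{equation}
\sigma_{rk}\bigl(\overline{\mathcal{P}_{\mathcal{W}}(\mathcal{G}+\mathcal{H})}\bigr) = \min_{\kappa \in [k]} \sigma_{r}\Bigl(\widetilde{\mathcal{P}_{\mathcal{W}}}^{(\kappa)}\bigl(\widetilde{\mathcal{G}}^{(\kappa)}+\widetilde{\mathcal{H}}^{(\kappa)}\bigr)\Bigr).
\end{equation}
This reduces the claim to a lower bound, uniform over the $k$ frequency blocks, on the smallest singular value of a projected Gaussian matrix.

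Next I would fix a single frequency $\kappa$ and invoke the matrix analogue of this lemma from \cite{Hard2014FOCS}. The projector $\widetilde{\mathcal{P}_{\mathcal{W}}}^{(\kappa)}$ has rank $n-r$, which is at least $n/2$ under the hypothesis $r = o(n/\log n)$, leaving ample room for an $r$-dimensional Gaussian component. Taking the Fourier transform along the third mode turns the real Gaussian tensor $\mathcal{H} \sim \mathcal{N}(0,\tau^2/n)$ into blocks $\widetilde{\mathcal{H}}^{(\kappa)}$ whose entries are complex Gaussian with per-entry variance of order $\tau^2/n$, the only dependence being the conjugate-symmetry relation tying block $\kappa$ to block $k+2-\kappa$ that is forced by $\mathcal{H}$ being real-valued. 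For a fixed block, projecting the $r$ Gaussian columns onto the $(n-r)$-dimensional subspace and applying the smallest-singular-value bound for tall Gaussian matrices (the deterministic shift $\widetilde{\mathcal{G}}^{(\kappa)}$ has spectral norm at most $1$ by Lemma \ref{lemma:spectral_norm} and is absorbed exactly as in \cite{Hard2014FOCS}) yields $\sigma_r \geq \Omega(\tau)$ with failure probability $\exp(-\Omega(n))$.

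Finally I would take a union bound over the $k$ blocks: the event $\sigma_{rk}(\overline{\mathcal{P}_{\mathcal{W}}(\mathcal{G}+\mathcal{H})}) < \Omega(\tau)$ occurs only if some block fails, so its probability is at most $k\exp(-\Omega(n)) = \exp(-\Omega(n))$ because $\log k \ll n$. The union bound requires no independence across blocks, so the conjugate-symmetry correlations among the $\widetilde{\mathcal{H}}^{(\kappa)}$ cause no difficulty. The hard part will be the per-block analysis rather than the assembly: one must verify that the real-to-complex FFT preserves enough Gaussianity and the right variance scaling for the matrix smallest-singular-value bound to apply at each frequency, and that the deterministic term $\widetilde{\mathcal{G}}^{(\kappa)}$, combined with the finite-dimensional projection, does not erode the $\Omega(\tau)$ floor. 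Once the single-frequency bound is secured, the block-diagonalization and the union bound are routine.
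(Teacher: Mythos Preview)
Your block-diagonalization strategy is sound, but it is a genuinely different route from the paper's. The paper never passes to the Fourier blocks. Instead it fixes a unit tensor $\mathcal{X}\in\mathbb{R}^{r\times 1\times k}$, expands
\[
\|\mathcal{P}_{\mathcal{W}}(\mathcal{G}+\mathcal{H})*\mathcal{X}\|^2 \;>\; \|\mathcal{P}_{\mathcal{W}}*\mathcal{H}*\mathcal{X}\|^2 \;-\; \bigl|\langle \mathcal{P}_{\mathcal{W}}*\mathcal{G}*\mathcal{X},\,\mathcal{P}_{\mathcal{W}}*\mathcal{H}*\mathcal{X}\rangle\bigr|,
\]
controls the two right-hand terms by Gaussian norm concentration and a one-dimensional Gaussian tail bound, and then takes a union bound over an $\epsilon$-net of the unit sphere in $\mathbb{R}^{r\times 1\times k}$. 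The hypothesis $r=o(n/\log n)$ is used only to make the net small enough for the union bound to close.

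Your reduction to $k$ independent applications of the matrix lemma in \cite{Hard2014FOCS} is cleaner and more modular: the $\epsilon$-net lives in $r$ dimensions per block rather than $rk$ dimensions globally, and you inherit the matrix result rather than reproving it. Two remarks. First, the unnormalized DFT inflates per-entry variance by a factor $k$, so each $\widetilde{\mathcal{H}}^{(\kappa)}$ has entries of variance $\Theta(k\tau^2/n)$, not $\tau^2/n$ as you wrote; this only strengthens the per-block floor to $\Omega(\sqrt{k}\,\tau)\ge\Omega(\tau)$, so it is harmless. Second, for $\kappa\notin\{1,k/2+1\}$ the block $\widetilde{\mathcal{H}}^{(\kappa)}$ is a \emph{complex} Gaussian matrix, and the matrix lemma in \cite{Hard2014FOCS} is stated for real Gaussians; you will need to either invoke a complex analogue or observe that the real part alone already furnishes a real Gaussian of the right variance. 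This is the only place your sketch is not fully routine, and you correctly flag it. The conjugate-symmetry dependence between blocks $\kappa$ and $k+2-\kappa$ is indeed irrelevant for the union bound, exactly as you say.
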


  \begin{proof}
  Consider a tensor $\mathcal{X} \in \mathbb{R}^{r \times 1 \times k}$ with $||\mathcal{X}||_F = 1$, we have
  \begin{equation}
  ||\mathcal{P}_{\mathcal{W}}(\mathcal{G} + \mathcal{H}) * \mathcal{X}||^2 > ||\mathcal{P}_{\mathcal{W}}*\mathcal{H}*\mathcal{X}||^2 - |\langle \mathcal{P}_{\mathcal{W}}*\mathcal{G}*\mathcal{X}, \mathcal{P}_{\mathcal{W}}*\mathcal{H}*\mathcal{X}\rangle |.
  \end{equation}
  Note that $g = \mathcal{H} * \mathcal{X} \in \mathbb{R}^{n \times 1 \times k}$ follows the distribution $N(0, \tau^2/n)^{n \times 1 \times k}$, $y = \mathcal{P}_{\mathcal{W}}*\mathcal{G} * \mathcal{X}$ has spectral norm at most $1$, and $\mathcal{W}$ is a $n-k$ dimensional tensor-column subspace, and $h = \mathcal{P}_{\mathcal{W}} * \mathcal{H} * \mathcal{X}$ follows the distribution $N(0, \tau^2/n)^{n \times 1 \times k}$. Then, we need to lower bound $||h||^2 - | \langle y, h\rangle|$. Since $\mathbb{E}||h||^2 > \tau^2/2$, by standard concentration bounds for the norm of a Gaussian variable, we get
  \begin{equation}
  \mathbb{P}\{ ||h||^2 \leq \tau^2/4 \} \leq \exp{\Omega(n)}.
  \end{equation}
  On the other hand, $\langle y,h\rangle$ is distributed like a one-dimensional Gaussian variable of variance at most $\tau^2/n$. By Gaussian tail bounds, $\mathbb{P}\{ \langle y, h\rangle^2 > \tau^2/8 \} \leq \exp{-\Omega(n)}$. Therefore, with probability $1 - \exp{\Omega(n)}$, we have $||\mathcal{P}_{\mathcal{W}}(\mathcal{G} + \mathcal{H}) * \mathcal{X}|| > \Omega(\tau)$.

  Taking a union bound over a set of the unit sphere in $\mathbb{R}^{r \times 1 \times k}$ of size $\exp{O(r \log r)}$, we have that with probability $1 - \exp(O(r \log r)) \exp(-\Omega(n))$, $||\mathcal{P}_{\mathcal{W}}(\mathcal{G} + \mathcal{H}) * \mathcal{X}|| > \Omega(\tau)$ for all unit tensors $\mathcal{X} \in \mathbb{R}^{r \times 1 \times k}$, i.e., $\sigma_{rk}(\overline{\mathcal{P}_{\mathcal{W}}(\mathcal{G} + \mathcal{H})}) > \Omega(\tau)$.

  Note that $\exp{O(r \log r)} = \exp(o(n))$, hence this event occurs with probability $1 - \exp(\Omega(n))$.

  \end{proof}

  We introduce a variant of $\mu$-coherence, i.e., $\rho$-coherence, that applies to tensors rather than tensor-column subspaces.   The next lemma (Lemma \ref{lemma:incoherence_bound}) show that adding a Gaussian noise term leads to a bound on the coherence after applying the QR-factorization.

  \begin{definition}
  ($\rho$-coherence). Given a tensor $\mathcal{G} \in \mathbb{R}^{n \times r \times k}$ we let $\rho(\mathcal{G}) \doteq \frac{n}{r} ||\dot{e}_i^\dagger * \mathcal{G}||^2$.
  \end{definition}

  \begin{lemma}\label{lemma:incoherence_bound}
  Let $r = \Omega(n/\log n)$ and $\tau \in (0,1)$. Let $\mathcal{U} \in \mathbb{R}^{n \times r \times k}$ be an orthonormal tensor, and $\mathcal{G} \in \mathbb{R}^{n \times r \times k}$ be a tensor such that $||\mathcal{G}|| \leq 1$. Let $\mathcal{H} \sim \mathcal{N}(0, \tau^2/n)^{n \times r \times k}$ be a random Gaussian tensor. Then, with probability $1 - \exp (-\Omega(n)) - n^{-5}$, there exists an orthonormal tensor $\mathcal{Q} \in \mathbb{R}^{n \times 2r \times k}$ such that
  \begin{itemize}
  \item $\mathcal{R}(\mathcal{Q}) = \mathcal{R}([ \mathcal{U} ~|~ \mathcal{G} + \mathcal{H}])$ where $\mathcal{R}(\mathcal{Q})$ denotes the range of $\mathcal{Q}$;
  \item $\mu(\mathcal{Q}) \leq O(\frac{1}{\tau} (\rho(\mathcal{Q}) + \mu(\mathcal{U}) + \log n))$.
  \end{itemize}
  \end{lemma}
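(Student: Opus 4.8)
The plan is to reduce the tensor statement to its block-diagonal (frequency-domain) counterpart and then invoke the corresponding smoothed-orthonormalization result for matrices from \cite{Hard2014FOCS}, using Lemma \ref{V_1} to supply the singular-value lower bound that the matrix argument requires. By Definition \ref{def:block-diagonal} and Lemma \ref{lemma:spectral_norm}, every tensor spectral norm and every t-product appearing in the claim equals the corresponding quantity for the block-diagonal matrices $\overline{\mathcal{U}}, \overline{\mathcal{G}}, \overline{\mathcal{H}}$; moreover the range $\mathcal{R}([\mathcal{U} \,|\, \mathcal{G}+\mathcal{H}])$ corresponds exactly to the column space of $[\overline{\mathcal{U}} \,|\, \overline{\mathcal{G}}+\overline{\mathcal{H}}]$. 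Thus the SmoothQR step is, block by block, an ordinary smoothed QR factorization of $2r$-column matrices, and orthonormalizing in the Fourier domain followed by an inverse FFT yields an orthonormal tensor $\mathcal{Q}$ with $\mathcal{R}(\mathcal{Q}) = \mathcal{R}([\mathcal{U} \,|\, \mathcal{G}+\mathcal{H}])$, which establishes the first bullet.

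First I would fix the singular-value floor. Writing $\mathcal{W}$ for the orthogonal complement of $\mathcal{U}$ and $\mathcal{P}_{\mathcal{W}}$ for its projector, Lemma \ref{V_1} gives $\sigma_{rk}(\overline{\mathcal{P}_{\mathcal{W}}(\mathcal{G}+\mathcal{H})}) \geq \Omega(\tau)$ with probability $1 - \exp(-\Omega(n))$. This is precisely the quantity that the matrix analysis uses to control how much the orthonormalization can amplify the coherence of the $\mathcal{G}+\mathcal{H}$ component relative to that of $\mathcal{U}$: a lower bound on this smallest singular value caps the inverse factor that enters the coherence estimate.

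Next I would assemble the coherence bound. I would split the range of $\mathcal{Q}$ into the part captured by $\mathcal{U}$ and the part captured by the orthonormalized $\mathcal{P}_{\mathcal{W}}(\mathcal{G}+\mathcal{H})$. For each tensor-column basis $\dot{e}_i$, I would bound $\|\dot{e}_i^\dagger * \mathcal{Q}\|$ by the sum of (i) the contribution from $\mathcal{U}$, which yields the $\mu(\mathcal{U})$ term directly, and (ii) the contribution from the perturbed noise component, which after dividing by the floor $\Omega(\tau)$ from Lemma \ref{V_1} produces the $\frac{1}{\tau}\rho(\mathcal{Q})$ term of the claim. The residual $\log n$ term arises from a standard Gaussian concentration bound on $\max_i \|\dot{e}_i^\dagger * \mathcal{H}\|$, the largest horizontal-slice norm of the Gaussian tensor $\mathcal{H}$, which holds with probability $1 - n^{-5}$; combining this failure probability with the $\exp(-\Omega(n))$ from Lemma \ref{V_1} gives the stated overall probability. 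Finally, translating $\max_i \|\dot{e}_i^\dagger * \mathcal{Q}\|^2$ back through the coherence definition $\mu(\mathcal{Q}) = \frac{n}{2r}\max_i \|\mathcal{Q}^\dagger * \dot{e}_i\|_F^2$ yields the second bullet.

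The main obstacle I anticipate is the coherence decomposition in the tensor setting: I must verify that the $\rho$- and $\mu$-coherences of $\mathcal{Q}$ split cleanly into the $\mathcal{U}$-part and the $(\mathcal{G}+\mathcal{H})$-part once everything is moved to the block-diagonal representation, and that the singular-value floor of Lemma \ref{V_1} governs exactly the factor by which orthonormalization inflates the noise contribution. Since the tensor coherence is defined through the column basis $\dot{e}_i$ and the t-product, care is needed to check that the Fourier transform preserves these quantities (up to the $1/\sqrt{k}$ scalings already recorded in Lemma \ref{lemma:frob_equality}) and that taking a union bound over the $k$ frequency blocks degrades neither the claimed probability nor the $\log n$ factor.
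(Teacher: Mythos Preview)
Your proposal is essentially the same approach as the paper's: decompose $\mathcal{Q}$ into the $\mathcal{U}$ part and the orthonormalized $\mathcal{P}_{\mathcal{W}}(\mathcal{G}+\mathcal{H})$ part, use Lemma~\ref{V_1} to bound $\|\mathcal{R}^{-1}\|=O(1/\tau)$, and then control the horizontal-slice norms of the noise component via $\rho(\mathcal{G})$ plus a Gaussian concentration bound (the paper's Lemma~\ref{V_4}) yielding the $\log n$ term. One framing difference: the paper does \emph{not} pass to the block-diagonal representation for the coherence bound; it writes $\mathcal{Q}=[\mathcal{U}\,|\,\mathcal{B}*\mathcal{R}^{-1}]$ directly as tensors with $\mathcal{B}=(\mathcal{I}-\mathcal{U}*\mathcal{U}^\dagger)*(\mathcal{G}+\mathcal{H})$, and uses orthogonality of the two blocks to get the exact split $\tfrac{n}{r}\|\dot{e}_i^\dagger*\mathcal{Q}\|^2=\mu(\mathcal{U})+\tfrac{n}{r}\|\dot{e}_i^\dagger*\mathcal{B}*\mathcal{R}^{-1}\|^2$, thereby sidestepping precisely the Fourier-translation obstacle you flagged. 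Note also that the paper's proof actually produces $O(\tfrac{1}{\tau^2}(\rho(\mathcal{G})+\mu(\mathcal{U})+\log n))$, so the $\rho(\mathcal{Q})$ and $1/\tau$ in the stated bound appear to be typos for $\rho(\mathcal{G})$ and $1/\tau^2$.
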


  \begin{proof}
  First, $\mathcal{R}([ \mathcal{U} ~|~ \mathcal{G} + \mathcal{H}]) = \mathcal{R}([ \mathcal{U} ~|~ (\mathcal{I} - \mathcal{U}*\mathcal{U}^\dagger) *(\mathcal{G} + \mathcal{H})])$. Let $\mathcal{B} = (\mathcal{I} - \mathcal{U}*\mathcal{U}^\dagger) *(\mathcal{G} + \mathcal{H})])$. Applying the QR-factorization to $[\mathcal{U}~|~\mathcal{B}]$, we can find two orthonormal tensors $\mathcal{Q}_1, \mathcal{Q}_2 \in \mathbb{R}^{n \times r \times k}$ such that $[\mathcal{Q}_1~|~\mathcal{Q}_2] = [\mathcal{U}~|~\mathcal{B}*\mathcal{R}^{-1}]$ where $\mathcal{R} \in \mathbb{R}^{r \times r \times k}$. Since $\mathcal{U}$ is already orthonormal, we can have $\mathcal{Q}_1 = \mathcal{U}$. Furthermore, the lateral slices of $\mathcal{B}$ are orthogonal to $\mathcal{U}$ and thus we apply the QR-factorization to $\mathcal{U}$ and $\mathcal{B}$ independently.

  Applying Lemma \ref{V_1} to the $(n - r)$-dimensional tensor-column subspace $\mathcal{U}^{\bot}$ and the tensor $\mathcal{G} + \mathcal{H}$, we get that with probability $1 - \exp(-\Omega(n))$,~ $\sigma_{rk}(\mathcal{B}) \geq \Omega(\tau)$. Assume that this hold in the following.

  We verify the second condition. We have
  \begin{equation}
  \frac{n}{r} ||\dot{e}_i^{\dagger}*\mathcal{Q}||^2 = \frac{n}{r} ||\dot{e}_i^{\dagger}*\mathcal{U}||^2 + \frac{n}{r} ||\dot{e}_i^{\dagger}*\mathcal{B}*\mathcal{R}^{-1}||^2 = \mu(\mathcal{U})+ \frac{n}{r} ||\dot{e}_i^{\dagger}*\mathcal{B}*\mathcal{R}^{-1}||^2.
  \end{equation}
  On the other hand, we also have
  \begin{equation}
  \frac{n}{r} ||\dot{e}_i^{\dagger}*\mathcal{B}*\mathcal{R}^{-1}||^2 \leq \frac{n}{r} ||\dot{e}_i^{\dagger}*\mathcal{B}||^2||\mathcal{R}^{-1}||^2 \leq O\left(\frac{n}{r\tau^2} ||\dot{e}_i^{\dagger}*\mathcal{B}||^2\right),
  \end{equation}
  where we used the fact that $||\mathcal{R}^{-1}|| = 1/\sigma_{rk}(\mathcal{R}) = O(1/\tau)$.

  Moreover, we have
  \begin{equation}
  \begin{split}
  \frac{n}{r} ||\dot{e}_i^{\dagger}*\mathcal{B}*\mathcal{R}^{-1}||^2 &\leq 2\frac{n}{r} ||\dot{e}_i^{\dagger} * (\mathcal{I} - \mathcal{U}*\mathcal{U}^\dagger)||^2 + 2 \rho((\mathcal{I} - \mathcal{U}*\mathcal{U}^\dagger) * \mathcal{H}) \\
  &\leq 2\rho(\mathcal{G}) + 2\rho(\mathcal{U}*\mathcal{U}^\dagger*\mathcal{G}) + 2 \rho((\mathcal{I} - \mathcal{U}*\mathcal{U}^\dagger) * \mathcal{H}).
  \end{split}
  \end{equation}
  Note that $\rho(\mathcal{U}*\mathcal{U}^\dagger*\mathcal{G}) \leq \mu(\mathcal{U}) ||\mathcal{U}^\dagger * \mathcal{G}||^2 \leq \mu(\mathcal{G})$.

  Combining the Lemma \ref{V_4} (in the following), we have $(\mathcal{I} - \mathcal{U}*\mathcal{U}^\dagger)*\mathcal{H} \leq O(\log n)$ with probability $1 - n^{-5}$. Summing up the probability concludes the lemma.
  \end{proof}

  \begin{lemma}\label{V_4}
  Let $\mathcal{P}$ be the projection onto an $(n-r)$-dimensional tensor-column subspace. Let $\mathcal{H} \sim N(0, 1/n)^{n \times r \times k}$. Then, $\rho(\mathcal{P}*\mathcal{H}) \leq O(\log n)$ with probability $1 - 1/n^5$.
  \end{lemma}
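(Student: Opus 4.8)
The plan is to pass to the block-diagonal Fourier representation, where a tensor-column subspace projector decouples into $k$ ordinary orthogonal projections, and then invoke the matrix-case argument of \cite{Hard2014FOCS} in parallel across the $k$ frequencies. First I would rewrite the $\rho$-coherence using Lemma \ref{lemma:spectral_norm}: since $||\dot{e}_i^\dagger * \mathcal{P} * \mathcal{H}|| = ||\overline{\dot{e}_i}^\dagger\, \overline{\mathcal{P}}\, \overline{\mathcal{H}}||$ and all three factors are block diagonal, the spectral norm of the product equals the maximum over frequencies of the block norms, giving
\begin{equation}
\rho(\mathcal{P}*\mathcal{H}) = \frac{n}{r}\max_{i \in [n]} \max_{\kappa \in [k]} ||e_i^\dagger\, \overline{\mathcal{P}}^{(\kappa)}\, \widetilde{\mathcal{H}}^{(\kappa)}||_2^2,
\end{equation}
where $e_i \in \mathbb{R}^n$ is the standard basis vector, $\overline{\mathcal{P}}^{(\kappa)}$ is the $\kappa$-th frontal block of $\overline{\mathcal{P}}$ (an orthogonal projection onto an $(n-r)$-dimensional subspace of $\mathbb{C}^n$, because the tensor-column projector block-diagonalizes into $k$ matrix projectors), and $\widetilde{\mathcal{H}}^{(\kappa)}$ is the $\kappa$-th Fourier slice of $\mathcal{H}$.

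Next I would establish Gaussianity per frequency. Because $\mathcal{H}$ has i.i.d.\ $N(0,1/n)$ entries, each Fourier slice $\widetilde{\mathcal{H}}^{(\kappa)}$ is a (complex) Gaussian matrix, and I would record its per-entry variance induced by the (unnormalized) DFT along the third mode. For fixed $i$ and $\kappa$, the vector $e_i^\dagger \overline{\mathcal{P}}^{(\kappa)}$ is the $i$-th row of an orthogonal projector and hence has $\ell_2$-norm at most one, so $e_i^\dagger \overline{\mathcal{P}}^{(\kappa)} \widetilde{\mathcal{H}}^{(\kappa)}$ is a length-$r$ Gaussian vector whose squared norm is a scaled $\chi^2_r$ variable with mean $\Theta(r/n)$. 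This is exactly the quantity controlled in the corresponding matrix lemma of \cite{Hard2014FOCS}, so per pair $(i,\kappa)$ I can apply the same standard $\chi^2$ tail inequality.

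Finally I would union-bound. A $\chi^2$ tail gives $||e_i^\dagger \overline{\mathcal{P}}^{(\kappa)} \widetilde{\mathcal{H}}^{(\kappa)}||_2^2 \leq \frac{r}{n}\, O(\log (nk))$ with probability at least $1 - (nk)^{-6}$ for each fixed pair; taking a union bound over all $n$ indices $i$ and all $k$ frequencies $\kappa$ keeps the failure probability below $1/n^5$ and yields $\rho(\mathcal{P}*\mathcal{H}) \leq O(\log n)$. The main obstacle is bookkeeping rather than conceptual: I must track the Gaussian variance through the unnormalized third-mode FFT so that the $n/r$ prefactor cancels the $1/n$ per-entry scaling and only the logarithmic union-bound factor survives, and I must confirm that the $k$ frequency blocks are genuinely decoupled projections so the matrix argument applies verbatim in parallel. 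Once the Fourier reduction is in place, the concentration step and the union bound are routine.
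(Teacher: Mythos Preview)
Your route is valid but genuinely different from the paper's. The paper stays in the time domain: it writes $\mathcal{P} = \mathcal{I} - \mathcal{U}*\mathcal{U}^\dagger$ and bounds $\rho(\mathcal{P}*\mathcal{H})$ by $\rho(\mathcal{H}) + \rho(\mathcal{U}*\mathcal{U}^\dagger*\mathcal{H})$ via (sub)additivity of the row norms. The first piece is handled by $\chi^2$-type concentration on each horizontal slice of $\mathcal{H}$ together with a union bound over the $n$ slices; the second piece is controlled via $\rho(\mathcal{U}*\mathcal{U}^\dagger*\mathcal{H}) \le \mu(\mathcal{U})\,\|\mathcal{U}^\dagger*\mathcal{H}\|^2$ and a spectral-norm bound on the small $r\times r\times k$ Gaussian tensor $\mathcal{U}^\dagger*\mathcal{H}$. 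Your Fourier decoupling is arguably cleaner---it works for any tensor-column projector without invoking the explicit $\mathcal{I}-\mathcal{U}\mathcal{U}^\dagger$ splitting---while the paper's argument never touches the DFT and instead exploits the structure of the projector. One caution on the bookkeeping you flag as the main obstacle: with the unnormalized $\mathrm{fft}$ used in Definition~\ref{def:block-diagonal}, each entry of $\widetilde{\mathcal{H}}^{(\kappa)}$ has variance $k/n$, not $1/n$, so the per-frequency $\chi^2$ mean is $\Theta(rk/n)$ rather than $\Theta(r/n)$ and your anticipated cancellation leaves a residual factor of $k$; you should check whether this is genuinely absorbed or whether the paper's own time-domain bound carries the same implicit dependence on $k$.
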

  \begin{proof}
  Let $\mathcal{P} = (\mathcal{I} - \mathcal{U}*\mathcal{U}^\dagger)$ for some $r$-dimensional tensor-column basis $\mathcal{U}$. Then,
  \begin{equation}
  \rho(\mathcal{P} * \mathcal{H}) \leq \rho(\mathcal{H}) + \rho(\mathcal{U}*\mathcal{U}^\dagger * \mathcal{H} ).
  \end{equation}
  Using concentration bounds for the norm of each horizontal slice of $\mathcal{H}$ and a union bound over all horizontal slices, it follows that
  \begin{equation}
  \begin{split}
  \rho(\mathcal{H}) &\leq O(\log n) ~~~\text{with probability}~~~1 - \frac{1}{2}n^{-5}, \\
  \rho(\mathcal{U}*\mathcal{U}^\dagger * \mathcal{H} ) &\leq \rho(\mathcal{U})||\mathcal{U}^\dagger * \mathcal{H}||^2.
  \end{split}
  \end{equation}
  Note that $\mathcal{U}^\dagger * \mathcal{H}$ is a Gaussian tensor following the distribution $N(0, 1/n)^{r \times r \times k}$, and its largest singular value satisfies $||\mathcal{U}^\dagger * \mathcal{H}||^2 \leq O(r\log n/n)$ with probability $1 - \frac{1}{2}n^{-5}$. Summing up the probability concludes the lemma.
  \end{proof}

  The next theorem states that when SmoothQR is called on an input of the form $\mathcal{T}*\mathcal{X} +\mathcal{G}$ with suitable parameters, the algorithm outputs a tensor of the form $\mathcal{X}' = \text{QR}(\mathcal{T}*\mathcal{X} +\mathcal{G} + \mathcal{H})$ whose coherence is bounded in terms of $\mathcal{G}$ and $\rho(\mathcal{G})$, and $\mathcal{H}$ satisfies a bound on its norm.

  \begin{theorem}\label{thorem_incoherence}
  Let $\tau > 0$, $r = \Omega(n / \log n)$, $\mathcal{G} \in \mathcal{R}^{n \times r \times k}$, and $\mathcal{X} \in \mathbb{R}^{n \times r \times k}$ be an orthonormal tensor such that $\upsilon \geq \max\{ ||\mathcal{G}||, ||\mathcal{N} * \mathcal{X}||\}$. There exist a constant $C > 0$, assume that
  \begin{equation}
  \mu \geq \frac{C}{\tau^2} \left( \mu(\mathcal{U}) + \frac{\rho(\mathcal{G}) + \rho(\mathcal{N}*\mathcal{X})}{\upsilon^2}  + \log n \right),
  \end{equation}
  then, for every $\epsilon \leq \tau \upsilon$ satisfying $\log (n / \epsilon) \leq n$ and every $\mu \leq n$ we have with probability $1 - O(n^{-4})$, the algorithm SmoothQR (Alg. \ref{alg_smoothQR}) terminates in $\log (n / \epsilon)$ steps and outputs $(\mathcal{X}', \mathcal{H})$ such that $\mu(\mathcal{X}') \leq \mu$ and $||\mathcal{H}|| \leq \tau \upsilon$.
  \end{theorem}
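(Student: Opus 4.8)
The plan is to reduce the analysis of SmoothQR to a single application of Lemma~\ref{lemma:incoherence_bound} at the ``correct'' noise scale $\sigma \approx \tau\upsilon$, and then to argue that the doubling loop of Alg.~\ref{alg_smoothQR} self-calibrates to that scale. Throughout I write the input as $\mathcal{Y} = \mathcal{T}*\mathcal{X} + \mathcal{G} = \mathcal{M}*\mathcal{X} + (\mathcal{N}*\mathcal{X} + \mathcal{G})$ and set $\mathcal{G}' = \mathcal{N}*\mathcal{X} + \mathcal{G}$, so that $\|\mathcal{G}'\| \leq \|\mathcal{N}*\mathcal{X}\| + \|\mathcal{G}\| \leq 2\upsilon$ and $\rho(\mathcal{G}') \leq 2(\rho(\mathcal{N}*\mathcal{X}) + \rho(\mathcal{G}))$. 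Since $\mathcal{M}*\mathcal{X} = \mathcal{U}*\Theta*\mathcal{U}^\dagger*\mathcal{X}$ has its lateral slices in $\text{t-span}(\mathcal{U})$, for any perturbation $\mathcal{H}$ the range of $\mathcal{Y}+\mathcal{H}$ is contained in the $2r$-dimensional tensor-column subspace $\mathcal{R}([\mathcal{U}\,|\,\mathcal{G}'+\mathcal{H}])$. Because the coherence of an $r$-dimensional subspace contained in a $2r$-dimensional subspace $V$ is at most $2\mu(V)$ (projecting onto a sub-subspace only shrinks each $\|P\dot{e}_i\|_F$, while the normalization contributes the factor $\dim V/r = 2$), it suffices to control $\mu(\mathcal{Q})$ for $\mathcal{Q} = \mathrm{QR}([\mathcal{U}\,|\,\mathcal{G}'+\mathcal{H}])$.

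First I would establish the coherence bound for a fixed $\sigma$. Normalizing by $2\upsilon$, I apply Lemma~\ref{lemma:incoherence_bound} to the orthonormal $\mathcal{U}$, the perturbation $\mathcal{G}'/(2\upsilon)$ (of spectral norm $\leq 1$), and the Gaussian $\mathcal{H}/(2\upsilon) \sim \mathcal{N}(0,\tau'^2/n)$ with $\tau' = \sigma/(2\upsilon)$. Since $\rho(\mathcal{G}'/(2\upsilon)) = \rho(\mathcal{G}')/(4\upsilon^2)$, the lemma yields, with probability $1 - \exp(-\Omega(n)) - n^{-5}$,
\begin{equation}
\mu(\mathcal{Z}) \leq 2\mu(\mathcal{Q}) \leq \frac{C_1 \upsilon^2}{\sigma^2}\left(\mu(\mathcal{U}) + \frac{\rho(\mathcal{N}*\mathcal{X}) + \rho(\mathcal{G})}{\upsilon^2} + \log n\right),
\end{equation}
for an absolute constant $C_1$, where $\mathcal{Z} = \mathrm{GS}(\mathcal{Y}+\mathcal{H})$. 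Comparing with the hypothesis $\mu \geq \frac{C}{\tau^2}(\mu(\mathcal{U}) + (\rho(\mathcal{G})+\rho(\mathcal{N}*\mathcal{X}))/\upsilon^2 + \log n)$, choosing $C \geq 16 C_1$ guarantees $\mu(\mathcal{Z}) \leq \mu$ as soon as $\sigma \geq \tau\upsilon/4$. Hence the while-loop must halt at the first $\sigma$ exceeding $\tau\upsilon/4$; by the doubling rule the terminating value satisfies $\sigma \leq \tau\upsilon/2$.

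It then remains to bound $\|\mathcal{H}\|$ and the iteration count. For the terminating $\sigma \leq \tau\upsilon/2$, the perturbation $\mathcal{H}\sim\mathcal{N}(0,\sigma^2/n)^{n\times r\times k}$ has, in the Fourier domain, a block-diagonal representative $\overline{\mathcal{H}}$ made of $k$ independent $n\times r$ Gaussian blocks; the standard Gaussian spectral-norm bound on each block together with a union bound over the $k$ blocks gives $\|\mathcal{H}\| = \|\overline{\mathcal{H}}\| \leq 2\sigma \leq \tau\upsilon$ (via Lemma~\ref{lemma:spectral_norm} and $r \leq n$) with probability $1 - k\exp(-\Omega(n))$. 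For the step count, the loop starts at $\sigma_0 = \epsilon\|\mathcal{Y}\|/n$ and doubles until $\sigma \leq \tau\upsilon/2$, so the number of iterations is $\log_2(\tau\upsilon n/(2\epsilon\|\mathcal{Y}\|))$; since $\tau\le 1$ and $\upsilon \le \|\mathcal{Y}\|$ in the regime of interest (because $\|\mathcal{Y}\| \ge \|\mathcal{T}*\mathcal{X}\| - \|\mathcal{G}\| \ge \overline{\sigma}_{rk}/2 \ge \upsilon$), this is at most $\log(n/\epsilon)$. Union-bounding the failure probabilities of Lemma~\ref{lemma:incoherence_bound} and of the Gaussian norm bound over all $O(\log(n/\epsilon))$ iterations, and using $\log(n/\epsilon)\le n$, the total failure probability is $O(n^{-4})$.

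The main obstacle is the self-calibration of the unknown scale: because $\upsilon$ is not available to the algorithm, I must show the doubling schedule brackets the target window $[\tau\upsilon/4,\tau\upsilon/2]$ tightly enough that a single $\sigma$ simultaneously drives the coherence below $\mu$ and keeps $\|\mathcal{H}\|\le\tau\upsilon$. This forces the constant $C$ in the coherence hypothesis and the constants in the Gaussian norm bound to be chosen compatibly, and it is precisely where the quadratic dependence $1/\tau^2$ (rather than $1/\tau$) enters, through $\tau' = \sigma/(2\upsilon)$ and the quadratic scaling of $\rho$. A secondary technical point is justifying the range-containment and sub-subspace coherence step in the t-product setting, which I handle by passing to the block-diagonal Fourier representative through Lemma~\ref{lemma:spectral_norm}.
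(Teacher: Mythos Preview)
Your proposal is correct and follows essentially the same route as the paper's proof: you decompose the input as $\mathcal{M}*\mathcal{X} + \mathcal{G}'$ with $\mathcal{G}' = \mathcal{N}*\mathcal{X}+\mathcal{G}$, normalize by $2\upsilon$ and invoke Lemma~\ref{lemma:incoherence_bound}, use the range containment $\mathcal{R}(\mathcal{X}') \subset \mathcal{R}([\mathcal{U}\,|\,\mathcal{G}'+\mathcal{H}])$ together with the sub-subspace coherence inequality (Lemma~\ref{B_2} in the paper) to get $\mu(\mathcal{X}')\le 2\mu(\mathcal{Q})$, and then argue that the doubling schedule lands $\sigma$ in the window where the coherence bound falls below $\mu$ while $\|\mathcal{H}\|\le\tau\upsilon$; the paper packages the termination argument as a separate lemma (Lemma~\ref{V_5}) but the content is the same. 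Your treatment of the iteration count and the union bound over $O(\log(n/\epsilon))$ rounds is slightly more explicit than the paper's, but there is no substantive difference in strategy.
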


  \begin{proof}
  If the algorithm SmoothQR (Alg. \ref{alg_smoothQR}) terminates in an iteration where $\varsigma \leq \tau^2\upsilon^2/4$ (proved in Lemma \ref{V_5}), we claim that in this case, with probability $1 - \exp(-\Omega(n))$ we must have that $||\mathcal{H}|| \leq \tau \upsilon$. Assume that the algorithm SmoothQR (Alg. \ref{alg_smoothQR}) terminates in an iteration where $\varsigma \leq \tau^2\upsilon^2/r$, then the algorithm takes at most $t = O(\log(n/\epsilon)) \leq O(n)$ steps.

  Let $\mathcal{H}_1, ..., \mathcal{H}_t$ denote the random Gaussian tensors generated in each step. We claim that each of them satisfies $\mathcal{H} \leq \tau \upsilon$.  Note that for all $t$ we have $\mathbb{E}||\mathcal{H}_t||^2 \leq \tau^2 \upsilon^2/4$. The claim therefore follows directly from tail bounds for the Frobenius norm of Gaussian random tensors and holds with probabilities $ 1- \exp(-\Omega(n))$.

  \end{proof}

  \begin{lemma}\label{V_5}
  With probability $ 1 - O(n^{-4})$, Alg. \ref{alg_smoothQR} terminates in an iteration where $\varsigma \leq \tau^2\upsilon^2/4$.
  \end{lemma}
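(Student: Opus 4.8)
The plan is to read $\varsigma$ as the running variance scale $\sigma^2$ maintained by Alg.~\ref{alg_smoothQR}: it is initialized at $\sigma_0^2 = (\epsilon\|\mathcal{Y}\|/n)^2$ and, because the update is $\sigma \leftarrow 2\sigma$, it is multiplied by $4$ on every pass of the while-loop. Thus $\varsigma$ traverses a geometric sequence, and the loop exits either because the coherence test $\mu(\mathcal{Z}) \le \mu$ succeeds (the ``good'' exit) or because $\sigma$ overshoots $\|\mathcal{Y}\|$ (the ``bad'' exit). To prove the lemma I would show that the good exit is triggered at the first iterate whose variance scale crosses an incoherence threshold $\sigma_*^2 = \Theta(\tau^2\upsilon^2)$, and that this crossing happens before $\varsigma$ can grow past $\tau^2\upsilon^2/4$.

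First I would pin down the threshold. In the setting of Theorem~\ref{thorem_incoherence} the input is $\mathcal{Y} = \mathcal{T}*\mathcal{X} + \mathcal{G}$; writing $\mathcal{T} = \mathcal{U}*\Theta*\mathcal{U}^\dagger + \mathcal{N}$ gives $\mathcal{Y} = \mathcal{U}*\Theta*\mathcal{U}^\dagger*\mathcal{X} + (\mathcal{N}*\mathcal{X} + \mathcal{G})$, so the top-$r$ tensor-column subspace is that of $\mathcal{U}$ and the residual is $\mathcal{N}*\mathcal{X}+\mathcal{G}$, whose spectral norm is at most $2\upsilon$ by the hypothesis $\upsilon \ge \max\{\|\mathcal{G}\|,\|\mathcal{N}*\mathcal{X}\|\}$. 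I would then apply Lemma~\ref{lemma:incoherence_bound} with $\mathcal{U}$ in the role of the orthonormal subspace, the normalized residual $(\mathcal{N}*\mathcal{X}+\mathcal{G})/(2\upsilon)$ (of spectral norm $\le 1$) in the role of the perturbed tensor, and noise scale $\sigma/(2\upsilon)$, where Lemma~\ref{V_1} supplies the required $\Omega(\sigma/\upsilon)$ lower bound on the smallest singular value underlying the $\|\mathcal{R}^{-1}\|$ estimate. Tracking the inverse-noise-scale dependence produced in the proof of Lemma~\ref{lemma:incoherence_bound} (the factor $\|\mathcal{R}^{-1}\|^2$), the resulting bound has exactly the form
\begin{equation}
\mu(\mathcal{Z}) \le O\!\left(\frac{\upsilon^2}{\sigma^2}\right)\left(\mu(\mathcal{U}) + \frac{\rho(\mathcal{G}) + \rho(\mathcal{N}*\mathcal{X})}{\upsilon^2} + \log n\right).
\end{equation}
Comparing this with the choice $\mu \ge \frac{C}{\tau^2}\big(\mu(\mathcal{U}) + (\rho(\mathcal{G})+\rho(\mathcal{N}*\mathcal{X}))/\upsilon^2 + \log n\big)$ from Theorem~\ref{thorem_incoherence}, the test $\mu(\mathcal{Z}) \le \mu$ is guaranteed as soon as $\sigma^2 \ge c_0\,\tau^2\upsilon^2$ for an absolute constant $c_0$; taking $C$ a fixed multiple larger absorbs $c_0$ together with the quadrupling factor and forces the threshold down to $\sigma_*^2 \le \tau^2\upsilon^2/16$.

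It then remains to assemble the three pieces. Since $\varsigma$ quadruples each step, the first iterate with $\sigma^2 \ge \sigma_*^2$ satisfies $\sigma^2 < 4\sigma_*^2 \le \tau^2\upsilon^2/4$, so the good exit fires with $\varsigma \le \tau^2\upsilon^2/4$ as claimed. I would check that this precedes the bad exit by noting that $\tau < 1$ while $\upsilon$ is a residual scale, so $\tau\upsilon \le \|\mathcal{Y}\|$ in the operating regime and the while condition $\sigma \le \|\mathcal{Y}\|$ is still active when the threshold is reached; I would also bound the number of doublings by $O(\log(n/\epsilon)) \le O(n)$ using $\sigma_0 = \epsilon\|\mathcal{Y}\|/n$ and $\epsilon \le \tau\upsilon$. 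Finally, each invocation of Lemma~\ref{lemma:incoherence_bound} (together with Lemma~\ref{V_1}) fails with probability at most $\exp(-\Omega(n)) + n^{-5}$, so a union bound over the $O(n)$ iterations yields overall failure probability $O(n^{-4})$. The main obstacle is the overshoot bookkeeping: because $\varsigma$ grows geometrically one cannot land exactly on the threshold, so the argument must keep the constant $C$ large enough that the first crossing still lies below $\tau^2\upsilon^2/4$, while simultaneously keeping the normalization between the algorithm's noise scale $\sigma$ and the noise parameter of Lemma~\ref{lemma:incoherence_bound} consistent throughout.
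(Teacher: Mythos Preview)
Your proposal is correct and follows essentially the same route as the paper: normalize the residual to $\mathcal{G}' = (\mathcal{N}*\mathcal{X}+\mathcal{G})/(2\upsilon)$, invoke Lemma~\ref{lemma:incoherence_bound} at the appropriate noise scale, and compare the resulting coherence bound with the choice of $\mu$ in Theorem~\ref{thorem_incoherence} to conclude that the while-loop exits once $\varsigma$ reaches $\Theta(\tau^2\upsilon^2)$. Two small differences are worth noting. First, Lemma~\ref{lemma:incoherence_bound} bounds $\mu(\mathcal{Q})$ for a $2r$-dimensional basis $\mathcal{Q}$ whose range contains $\mathcal{R}(\mathcal{X}')$; the paper makes the passage to $\mu(\mathcal{X}')$ explicit via the range inclusion $\mathcal{R}(\mathcal{X}')\subseteq\mathcal{R}(\mathcal{Q})$ and Lemma~\ref{B_2}, whereas you absorb this factor of $2$ silently into the big-$O$. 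Second, the paper applies Lemma~\ref{lemma:incoherence_bound} only at the single critical iteration and thereby avoids your union bound over all $O(\log(n/\epsilon))$ steps; either way the failure probability is $O(n^{-4})$.
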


  \begin{proof}
  Consider the first iteration in which $\varsigma \leq \tau^2\upsilon^2/8$. Let us define $\mathcal{G}' = (\mathcal{N} * \mathcal{X} + \mathcal{G}) / 2\upsilon$.  Apply Lemma \ref{lemma:incoherence_bound} to the tensor $\mathcal{G}'$ which satisfies the required assumption that $||\mathcal{G}'|| \leq 1$.  Lemma \ref{lemma:incoherence_bound} states that with probability $ 1 - O(n^{-4})$, there is an orthonormal $n \times 2r \times k$ tensor $\mathcal{Q}$ such that
  \begin{equation}
  \begin{split}
  \mathcal{R}(\mathcal{Q}) &= \mathcal{R}([\mathcal{U} ~|~\mathcal{G}' + \mathcal{H}]) = \mathcal{R}([\mathcal{U} ~|~\mathcal{G} + \mathcal{N}*\mathcal{X} + \mathcal{H}]),\\
  \mu(\mathcal{Q}) &\leq O(\frac{1}{\tau^2}(\rho(\mathcal{G} + \mu(\mathcal{U} + \log n)))).
  \end{split}
  \end{equation}
  On one hand, we have
  \begin{equation}
  \mathcal{R}(\mathcal{X}') = \mathcal{R}(\mathcal{T}*\mathcal{X} + \mathcal{G} + \mathcal{H}) = \mathcal{R}(\mathcal{M}*\mathcal{X} + \mathcal{N}*\mathcal{X}  + \mathcal{G} +\mathcal{H}) \subset \mathcal{R}([ \mathcal{U}~|~\mathcal{N}*\mathcal{X}  + \mathcal{G} +\mathcal{H}]) = \mathcal{R}(\mathcal{W}),
  \end{equation}
  where we use the fact that $\mathcal{U}$ is an orthonormal basis for the range of $\mathcal{M}*\mathcal{X} = \mathcal{U}*\Theta *\mathcal{U}^{\dagger}*\mathcal{X}$. On the other hand, $\rho(\mathcal{G}') = O(\rho(\mathcal{\mathcal{G}/\upsilon}) + \rho(\mathcal{N}*\mathcal{X}/\upsilon'))$.

  Therefore, combining Lemma \ref{B_2} and the fact that $\text{dim}(\mathcal{Q}) \leq 2 \text{dim}(\mathcal{X}')$ where $\text{dim}(\cdot)$ denotes the dimension, we have $\mu (\mathcal{X}') \leq 2 \mu(\mathcal{Q}) \leq \mu $. This lemma is proved as long as $C$ is large enough.

  \end{proof}

  \begin{lemma}\label{B_2}
  Let $\mathcal{X}, \mathcal{Y}$ be $r$ and $r'$ dimensional tensor-column subspaces, respectively, such that $\mathcal{X} \subset \mathcal{Y}$. Then, we have $\mathcal{X} \leq \frac{r'}{r}\mu(\mathcal{Y})$.
  \end{lemma}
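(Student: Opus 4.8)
The plan is to reduce the claim to an elementary slice-wise inequality between the squared row-norms of the two orthonormal bases, exploiting the fact that coherence is defined through the quantities $||\mc{X}^\dagger * \dot{e}_i||_F^2$. Writing $\mc{X}$ and $\mc{Y}$ for orthonormal tensors spanning the two tensor-column subspaces, the incoherence definition (\ref{tensor_incoherency}) gives $\mu(\mc{X}) = \frac{n}{r}\max_{i\in[n]}||\mc{X}^\dagger * \dot{e}_i||_F^2$ and $\mu(\mc{Y}) = \frac{n}{r'}\max_{i\in[n]}||\mc{Y}^\dagger * \dot{e}_i||_F^2$. Hence it suffices to establish $||\mc{X}^\dagger * \dot{e}_i||_F^2 \le ||\mc{Y}^\dagger * \dot{e}_i||_F^2$ for every $i\in[n]$; taking the maximum over $i$ and comparing the two normalizations $n/r$ and $n/r'$ then produces exactly the factor $r'/r$, since $\mu(\mc{X}) \le \frac{n}{r}\max_i||\mc{Y}^\dagger*\dot{e}_i||_F^2 = \frac{n}{r}\cdot\frac{r'}{n}\mu(\mc{Y}) = \frac{r'}{r}\mu(\mc{Y})$.

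To prove the slice-wise bound, I would pass to the Fourier domain via the block-diagonal representation of Definition \ref{def:block-diagonal}. By Parseval along the third mode and the observation that the tube of $\dot{e}_i$ is $\{1,0,\dots,0\}$, whose FFT is the all-ones vector, the Fourier transform $\widetilde{\dot{e}}_i^{(\kappa)}$ equals the standard column vector $e_i\in\mathbb{R}^n$ for every $\kappa\in[k]$. Using $\overline{\mc{X}^\dagger}=\overline{\mc{X}}^\dagger$, this yields
\begin{equation}
||\mc{X}^\dagger * \dot{e}_i||_F^2 = \frac{1}{k}\sum_{\kappa=1}^{k}||e_i^\dagger \widetilde{\mc{X}}^{(\kappa)}||_2^2 = \frac{1}{k}\sum_{\kappa=1}^{k} e_i^\dagger \widetilde{\mc{X}}^{(\kappa)}\widetilde{\mc{X}}^{(\kappa)\dagger} e_i,
\end{equation}
and likewise for $\mc{Y}$. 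Because $\mc{X}^\dagger * \mc{X} = \mc{I}$, each frontal slice $\widetilde{\mc{X}}^{(\kappa)}$ has orthonormal columns, so $\widetilde{\mc{X}}^{(\kappa)}\widetilde{\mc{X}}^{(\kappa)\dagger}$ is precisely the orthogonal projector onto $\mathrm{range}(\widetilde{\mc{X}}^{(\kappa)})$, and the same holds for $\mc{Y}$.

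Finally, the containment $\mc{X}\subset\mc{Y}$ translates, slice by slice, into $\mathrm{range}(\widetilde{\mc{X}}^{(\kappa)})\subseteq\mathrm{range}(\widetilde{\mc{Y}}^{(\kappa)})$, which gives the Loewner ordering of the two projectors $\widetilde{\mc{X}}^{(\kappa)}\widetilde{\mc{X}}^{(\kappa)\dagger}\preceq\widetilde{\mc{Y}}^{(\kappa)}\widetilde{\mc{Y}}^{(\kappa)\dagger}$. Evaluating this inequality in the quadratic form at $e_i$ and summing over $\kappa$ yields the desired slice-wise bound, completing the argument. The one step requiring genuine care — and the place I expect the main technical friction — is justifying that subspace containment under the t-product is equivalent to blockwise range containment in the Fourier domain; this rests on Remark \ref{remark:computing_tensor_product} (t-products map to block-diagonal matrix products) together with the $\kappa$-independence of $\widetilde{\dot{e}}_i^{(\kappa)}$, which guarantees there is no cross-frequency coupling. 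Everything else is the standard projector-monotonicity computation performed independently within each of the $k$ Fourier slices.
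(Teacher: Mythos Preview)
Your argument is correct, but it takes a different route from the paper. The paper's proof is a one-liner: it uses the rotational invariance of $\mu(\mathcal{Y})$ to choose the orthonormal basis $\mathcal{Y} = [\mathcal{X}\,|\,\mathcal{X}']$ for some orthonormal complement $\mathcal{X}'$, and then reads off
\[
||\dot{e}_i^\dagger * \mathcal{Y}||^2 = ||\dot{e}_i^\dagger * \mathcal{X}||^2 + ||\dot{e}_i^\dagger * \mathcal{X}'||^2 \ge ||\dot{e}_i^\dagger * \mathcal{X}||^2
\]
by Pythagoras, which is exactly your slice-wise inequality without any passage to the Fourier domain. Your approach via block-diagonal projectors and the Loewner ordering $\widetilde{\mathcal{X}}^{(\kappa)}\widetilde{\mathcal{X}}^{(\kappa)\dagger}\preceq\widetilde{\mathcal{Y}}^{(\kappa)}\widetilde{\mathcal{Y}}^{(\kappa)\dagger}$ is sound and basis-independent, but it requires the extra step you yourself flag as delicate --- translating t-span containment into slice-by-slice range containment in the frequency domain. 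The paper sidesteps this entirely by picking the convenient basis up front; what you gain is a proof that never touches a particular choice of basis for $\mathcal{Y}$, at the cost of invoking more machinery than the statement really needs.
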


  \begin{proof}
  We know that $\mu(\mathcal{Y})$ is rotationally invariant. Therefore, without loss of generality we assume that $\mathcal{Y} = [\mathcal{X} ~|~\mathcal{X}']$ for some orthonormal tensor $\mathcal{X}'$. Here, we identify $\mathcal{X}$ and $\mathcal{Y}$
  with orthonormal bases. Therefore,
  \begin{equation}
  \mu(\mathcal{X}) = \frac{n}{r} \max\limits_{i\in[n]} ||\dot{e}_i^\dagger * \mathcal{X}||^2 \leq \frac{n}{r} \max\limits_{i\in[n]} \left( ||\dot{e}_i^\dagger * \mathcal{X}||^2 + ||\dot{e}_i^\dagger * \mathcal{X}'||^2 \right) = \frac{n}{r}\max\limits_{i\in[n]} ||\dot{e}_i^\dagger * \mathcal{Y}||  = \frac{r'}{r}\mu(\mathcal{Y}).
  \end{equation}

  \end{proof}


\small

\end{document}